\renewcommand{\bibsection}
\definecolor{asparagus}{rgb}{0.53, 0.66, 0.42}
\newtheorem{theorem}{Theorem}
\newtheorem{lem}{Lemma}
\newtheorem{defin}{Definition}
\newtheorem*{obs}{Remark}
\newcommand{\reusefigure}{} 
\newcommand{\includepgfplots}[1]{%
  \ifx\reusefigure\undefined
      \tikzsetnextfilename{#1}%
      \input{./src/#1.tex}%
  \else
      \includegraphics{./img/#1}
  \fi
}
\titlespacing*{\section}{0pt}{0.18cm}{0.1cm}
\titlespacing*{\subsection}{0pt}{0.1cm}{0.07cm}
\pgfplotsset{compat=1.14}
\newcommand{\mytitle}{Beyond exploding and vanishing gradients: analysing RNN training using attractors and smoothness}
\newcommand{\myauthors}{Ant\^{o}nio H. Ribeiro \And Koen Tiels \And  Luis A. Aguirre \And Thomas B. Sch\"on}
\newcommand{\myaddress}{Federal University of\\ Minas Gerais  \And  Eindhoven University\\ of Technology \And Federal University of\\ Minas Gerais \And Uppsala University}
\begin{document}

\setlength{\abovedisplayskip}{2.2pt}
\setlength{\belowdisplayskip}{2.2pt}

\twocolumn[
\vspace*{-0.5cm}
\aistatstitle{\mytitle}
\vspace*{-0.7cm}
\aistatsauthor{\myauthors}
\aistatsaddress{\myaddress}
\vspace*{-0.4cm}
]

\begin{abstract}
\vspace*{-0.4cm}
The exploding and vanishing gradient problem has been the major conceptual principle behind most architecture and training improvements in recurrent neural networks (RNNs) during the last decade.  In this paper, we argue that this principle, while powerful, might need some refinement to explain recent developments. We refine the concept of exploding gradients by reformulating the problem in terms of the cost function smoothness, which gives insight into higher-order derivatives and the existence of regions with many close local minima. We also clarify the distinction between vanishing gradients and the need for the RNN to learn attractors to fully use its expressive power. Through the lens of these refinements, we shed new light on recent developments in the RNN field, namely stable RNN and unitary (or orthogonal) RNNs.

\end{abstract}
\vspace*{-0.4cm}

\begin{figure}[b]
\noindent\rule[0.5ex]{\linewidth}{1pt}

\noindent\textbf{\footnotesize Please cite the conference proceedings paper:}
\begin{lstlisting}
@inproceedings{ribeiro_beyond_2020,
author = {Ribeiro, Ant\^{o}nio H. and Tiels, Koen and Aguirre, Luis A. and Sch\"on, Thomas B. },
title = {Beyond exploding and vanishing  gradients: analysing RNN training using attractors and smoothness},
year = {2020},
booktitle = {Proceedings of the 23rd International Conference on Artificial Intelligence and Statistics (AISTATS).},
publisher = {PMLR},
volume = {108}
}
\end{lstlisting}
\end{figure}

\section{Introduction}

Training recurrent neural networks can be challenging. The problem in training these recurrent models is usually stated in terms of the so-called  \textit{exploding and vanishing gradient problem}~\citep{hochreiter_long_1997, pascanu_difficulty_2013, bengio_learning_1994}. This problem is easy to understand and has motivated many techniques, including the use of gating mechanisms~\citep{hochreiter_long_1997, cho_properties_2014}, gradient clipping~\citep{pascanu_difficulty_2013}, non-saturating activation functions~\citep{chandar_nonsaturating_2019} and the manipulation of the propagation path of gradients~\citep{kanuparthi_hdetach_2019}.

A recently proposed family of methods based on the same principle are the so-called unitary and orthogonal RNNs~\citep{mhammedi_efficient_2017, vorontsov_orthogonality_2017, lezcano-casado_cheap_2019, helfrich_orthogonal_2018, arjovsky_unitary_2016, jing_tunable_2017, maduranga_complex_2019, wisdom_fullcapacity_2016, lezcano-casado_trivializations_2019}. In these architectures, the eigenvalues of the hidden-to-hidden weight matrix are fixed to one to simultaneously avoid exploding gradients (that might appear when the eigenvalues are larger than one) and vanishing gradients (that might appear when they are less than one). Similar, but less strict constraints were proposed by~\citet{zhang_stabilizing_2018, kerg_nonnormal_2019}. 
A different line of study goes in the opposite direction and suggests that using RNNs constrained to be stable can provide as good performance on many tasks as unconstrained RNNs. \citet{miller_stable_2019} discuss examples for which projecting all eigenvalues to values less than one does not affect the performance. In this case, RNNs can be truncated with arbitrarily small error and hence could be replaced by feedforward structures. Indeed, feedforward structures, such as transformer-based architectures~\citep{vaswani_attention_2017} and convolutional networks~\citep{bai_empirical_2018}, have recently matched or outperformed RNNs in many tasks. These feedforward architectures yield impressive results in language and music modeling~\citep{bai_empirical_2018, oord_wavenet_2016, dauphin_language_2017, radford_improving_2018, radford_language_2019}, text-to-speech conversion~\citep{oord_wavenet_2016}, machine translation~\citep{kalchbrenner_neural_2016, gehring_convolutional_2017} and other sequential learning tasks for which RNNs have been the default choice until recently~\citep{bai_empirical_2018}.

Our aim in this paper is to improve the present understanding of recurrent neural networks. We believe the moment is propitious for such analysis, and it might help in explaining this shift of winning paradigm for sequence modeling and the lack of consensus on how to deal with the eigenvalues. On the one hand, we complement the vanishing gradient interpretation with an analysis of the attractors of the RNN for storing long-term information. While the condition for dynamic attractors (other than a single fixed point) to appear is the same as for the gradients to vanish, these two notions are distinct. The numerical examples explore the training mechanism and the attractors of the stable~\citep{miller_stable_2019} and orthogonal RNNs~\citep{lezcano-casado_cheap_2019}. On the other hand, we study the \textit{smoothness} of the cost function, which builds on and improves the notion of exploding gradients, since it also takes into account higher-order derivatives. As a result of our analysis we have that not only ``walls'' might be formed in the cost function (as suggested by~\citet{pascanu_difficulty_2013}), but also regions with very intricate behavior, full of local minima. These regions are thus very hard for the optimization algorithms to navigate (cf.~Figure~\ref{fig:chaotic_LSTM_1d}).

\begin{figure*}[t]
  \vspace{-0.4cm}
\centering
\subfloat[]{\includegraphics[width=0.5\textwidth]{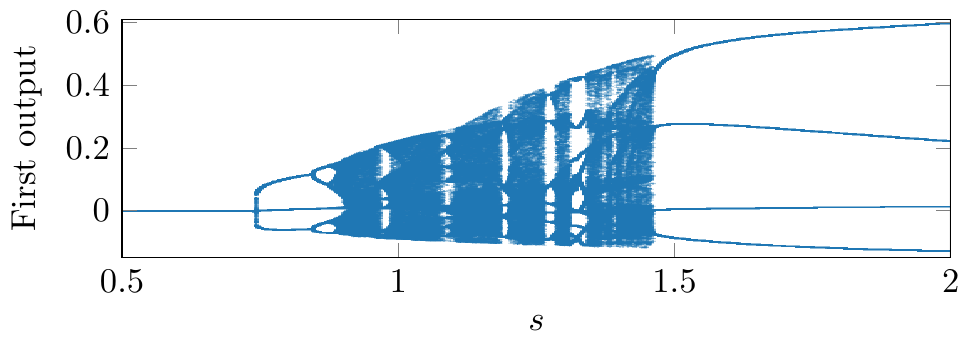}}
\subfloat[]{\includegraphics[width=0.5\textwidth]{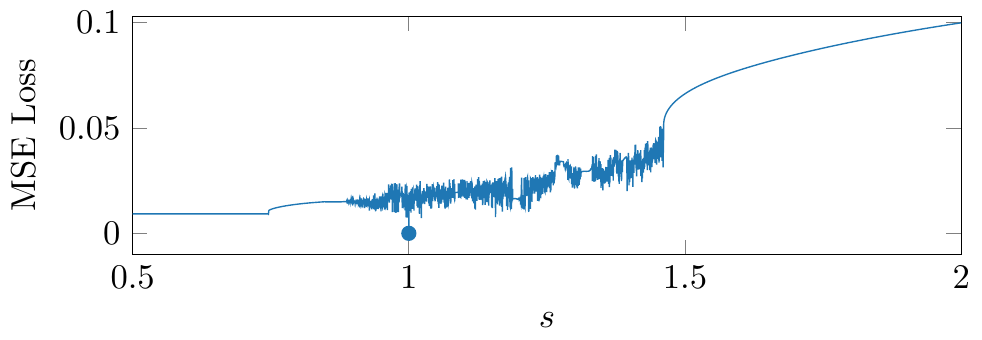}}
\vspace{-0.1cm}
\caption{\textbf{Chaotic LSTM.} Display: a)~Bifurcation diagram; and b)~cost function (mean-square error) for LSTM models with parameter vectors $\theta(s) = s \theta_{\mathrm{true}}$.}
\label{fig:chaotic_LSTM_1d}
\vspace{-0.3cm}
\end{figure*}

\section{Recurrent neural networks}
\label{sec:setup}

\begin{figure}[tpb]
  \vspace{-0.1cm}
\centering
\subfloat[Training]{\includegraphics[width=0.22\textwidth]{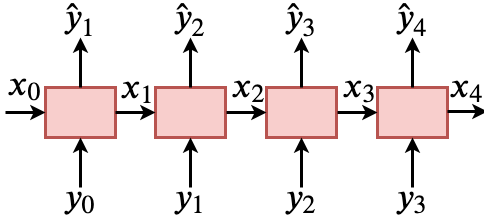}}
\hspace{0.04\textwidth}\subfloat[Inference]{\includegraphics[width=0.22\textwidth]{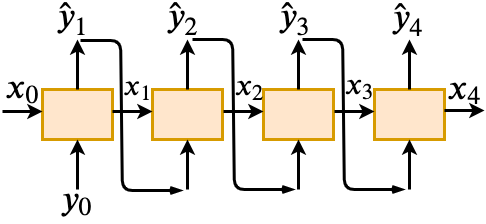}}
  \vspace{-0.05cm}
\caption{\textbf{Teacher forcing.} 
Illustration of a recurrent neural network in two different modes. In (a), observed previous values of the output $\mathbf{y}_t$ are used as input to the model. In (b), the neural network  outputs are instead fed back as inputs.
}
\vspace{-0.25cm}
\label{fig:rnn_modes}
\end{figure}

RNNs are nonlinear discrete-time dynamical systems, represented by the expression:
\begin{subequations}
    \label{eq:nlss}
    \begin{align}
        \mathbf{x}_{t+1} &= \mathbf{f}(\mathbf{x}_t, \mathbf{z}_t; \boldsymbol{\theta}),\\
        \hat{\mathbf{y}}_t &= \mathbf{g}(\mathbf{x}_t, \mathbf{z}_t; \boldsymbol{\theta}), 
    \end{align}
\end{subequations}
where ${\mathbf{x}_t \in \mathbb{R}^{N_x}}$ is the hidden state  and ${\hat{\mathbf{y}}_t\in \mathbb{R}^{N_y}}$, the predicted output. Training the model amounts to finding a parameter vector $\boldsymbol{\theta}$  that makes the model suitable for a given task.  Eq.~\eqref{eq:nlss} is sufficiently general to capture vanilla RNNs, LSTM~\citep{hochreiter_long_1997}, GRU~\citep{cho_properties_2014}, and stacked layers of these units.

When modeling a sequence $\{\mathbf{y}_t\}$, the previously observed outputs are often used as input to the RNN during training i.e., $\mathbf{z}_t = \mathbf{y}_{t-1}$.  This technique is  called \textit{teacher forcing} and it is commonly used in training language models~\citep{peters_deep_2018}.

The mode that is used during training does not have to be used during inference. Furthermore, after training, instead of the observed values, we can successively use the last neural network predicted value to generate many-steps-ahead predictions. The two modes are illustrated in Figure~\ref{fig:rnn_modes}. The RNN representation in Eq.~(\ref{eq:nlss}) is general enough to account for the inference mode by defining an extended hidden state  $(\mathbf{x}_{t}, \hat{\mathbf{y}}_t)$ and new transition and output functions,  $\mathbf{f}$ and $\mathbf{g}$, respectively. Hence, teacher forcing yields transition and output functions during inference that are not the same as those used during training.

When the problem being solved has a clearly defined input (c.f.  Sections~\ref{sec:sine-wave-generation} and~\ref{sec:symb-class}), the input sequence $\mathbf{u}_t$ should be fed to the neural network. In this case, the neural network input during training can be defined either as $\mathbf{z}_t = \mathbf{u}_t$ or, using teacher forcing, as $\mathbf{z}_t = (\mathbf{y}_{t-1}, \mathbf{u}_t)$.

The model parameters are estimated, i.e., the RNN is trained, by minimizing the cost function:
\begin{equation}
  \label{eq:cost_function}
    V(\boldsymbol{\theta}) = \frac{1}{N} \sum_{t=1}^N l(\mathbf{y}_t, \hat{\mathbf{y}}_t),
\end{equation}
where $l$ is the loss function and $N$ is the sequence length. For regression and classification problems the squared error and cross entropy loss are common choices. In the case of multiple independent training sequences, the parameters might be estimated by minimizing a weighted average of several $V$s defined as above.

\section{Information in a recurrent network}
\label{sec:rnn-information}

It is common to associate the challenges of storing information about a distant past in an RNN  with the \textit{vanishing gradients} problem during training~\citep{hochreiter_long_1997, pascanu_difficulty_2013}. That is, for long sequences, as the error gradients are backpropagated through the RNN, they might shrink exponentially to zero, making it harder to learn long-term dependencies.

This interpretation has motivated successful strategies for training recurrent neural networks, such as gating mechanisms~\citep{hochreiter_long_1997}. While useful, this notion alone does not give the whole picture, and the presence or absence of dynamic attractors plays an important role in the RNN capability of storing information.

\subsection{Entropy of the internal states}

We start our analysis by studying the amount of \textit{information} stored by an RNN and how this amount of information changes over time.  The discussion applies both to training and inference, and the consequences to each case will be detailed later on.

Assume that at time $t$ the system state $\mathbf{x}_t$ is distributed according to $p_t(\mathbf{x}_t)$. The entropy associated with this probability distribution provides a way of quantifying how much information we would obtain in measuring the state. For the set $\Omega_x$, the entropy $H_t$ at time $t$ can be computed as: 
\begin{equation}
    \label{ref:entropy_state}
    H_t = -\int_{\Omega_x} p_t(\mathbf{x}_t) \log p_t(\mathbf{x}_t) \textrm{ d} \mathbf{x}_t.
\end{equation}
Under mild assumptions, we have (see Section~\ref{sec:proofs} in the supplementary material) established the inequality: 
\begin{equation}
\label{eq:entropy_increment_upper_bound}
   H_{t} + N_x \log L_f  \leq H_{t+1},
\end{equation}
where $N_x$ is the dimension of $\mathbf{x}_t$ and $L_f$ is the Lipschitz constant of $\mathbf{f}$ in Eq.~(\ref{eq:nlss}).

The Lipschitz constant $L_f$  is related to the dynamical behaviour of the RNN, and Eq.~(\ref{eq:entropy_increment_upper_bound}) establishes that this constant gives a lower bound on the entropy evolution over time. The lower bound is illustrated in Figure~\ref{fig:entropy_increment_upper_bound} and shows that:
\vspace{-0.4cm}
\begin{enumerate}
    \item for a system with $L_f < 1$, the entropy might decay over time towards zero. The larger $L_f$ is, the slower the decay of information retention can be. 
    \item for a system with $L_f = 1$, the entropy can stay constant if the bound in Eq.~\eqref{eq:entropy_increment_upper_bound} is tight. Hence, such a system might retain the information;
    \item for a system with $L_f > 1$, the entropy increase over time. This is the case for chaotic systems, for instance.
\end{enumerate}   
\vspace{-0.4cm}
By the above discussion, the region of the parameter space for which $L_f < 1$, is the region where the model may not be able to retain information indefinitely. Simultaneously, $L_f < 1$ is a sufficient condition for the gradient to vanish, making it harder for the model to escape from the corresponding regions of the parameter space during training. Hence, information retention and the vanishing gradient problem are related; nevertheless, the two concepts are distinct.

\begin{figure}[t]
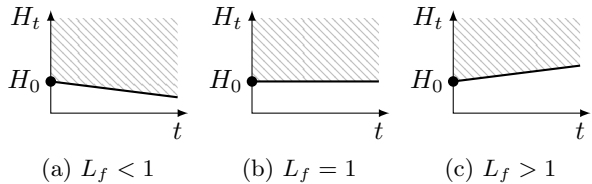

  \vspace*{-0.4cm}
    \centering
    \subfloat[$L_f < 1$]{\includepgfplots{entropy_s}}
    \subfloat[$L_f = 1$]{\includepgfplots{entropy_m}}
    \subfloat[$L_f > 1$]{\includepgfplots{entropy_l}}
    \caption{\textbf{Information in RNNs.} Illustration of the lower bound on the entropy $H_t$ obtained in Eq.  \eqref{eq:entropy_increment_upper_bound}. Starting from $H_0$, the entropy $H_t$ can only take values in the shaded region. The entropy may: (a)~ decay; (b)~remain constant; or, (c)~increase over time.
      \vspace*{-0.3cm}
    \label{fig:entropy_increment_upper_bound}}
\end{figure}

\subsection{Contractive \textit{vs} non-contractive systems}
\label{sec:contr-text-non}

A necessary condition for a dynamical system to have  $L_f < 1$ is that the system is \textit{contractive}\footnote{If $\tilde{\mathbf{u}} \in \Omega_{\mathbf{u}}$ it follows that $L_f \geq L_{\tilde{\mathbf{u}}}$. Hence, if the system is not contractive then $L_f \geq L_{\tilde{\mathbf{u}}} \geq 1$.}. 

\begin{defin}[\textbf{contractive system}]
  For a given input sequence $\{\tilde{\mathbf{u}}_t\}$, a dynamical system~(\ref{eq:nlss}) is said to be \textit{contractive} in $\Omega_{\mathbf{x}}$ if it satisfies, for some $L_{\tilde{\mathbf{u}}} < 1$ and for all $\mathbf{x}$ and $\mathbf{w}$ in $\Omega_{\mathbf{x}}$:
\begin{equation}
    \|\mathbf{f}(\mathbf{x}, \tilde{\mathbf{u}}_t; \boldsymbol{\theta}) - \mathbf{f}(\mathbf{w}, \tilde{\mathbf{u}}_t; \boldsymbol{\theta})\| < L_{\bar{\mathbf{u}}} \|\mathbf{x} -\mathbf{w}\|.
\end{equation}
\end{defin}

All contractive systems have a unique fixed point inside the contractive region $\Omega_{\mathbf{x}}$, and all trajectories converge to such a fixed point~\cite[Theorem 9.23]{rudin_principles_1964}. Hence, for contractive systems, the distribution $p_t(\mathbf{x}_t)$ will converge to a point mass at the fixed point, and thus zero entropy. Systems with richer nonlinear dynamic behaviors, such as multiple fixed points, limit cycles, and chaotic attractors, are all \textit{non-contractive}.

\subsection{Long-term memory}

In what follows, the concept of long-term memory will be closely related to the definition of an attractor of a dynamical system.

\begin{defin}[\textbf{attractor of a dynamical system}]
Let $\mathbf{u}_t$ be equal to a constant $\bar{\mathbf{u}}$ for every $t$. An \textbf{attractor} of the dynamical system $\mathbf{x}_{t+1} = \mathbf{f}(\mathbf{x}_t, \bar{\mathbf{u}}; \boldsymbol{\theta})$ is a subset $\mathcal{A} \subset \mathbb{R}^{N_x}$ for which:
\vspace{-0.4cm}
\begin{enumerate}
    \item if $\mathbf{x}_{t_0} \in \mathcal{A}$ then $\mathbf{x}_t \in  \mathcal{A}$ for all $t \ge t_0$;
    \item there exists a neighborhood of $\mathcal{A}$, called the \textbf{basin of attraction} $\mathcal{B}_{\mathcal{A}}$, such that: if $\mathbf{x}_{t_0}\in \mathcal{B}_{\mathcal{A}}$ then $\mathbf{x}_{t} \rightarrow \mathcal{A}$ as $t \rightarrow \infty$;
    \item there is no proper (non-empty) subset of $\mathcal{A}$ having the first two properties.
\end{enumerate}
\vspace{-0.4cm}
\end{defin}

Property $1$ is related to the concept of long-term memory. If at a given time instant $t_0$, $\mathbf{x}_{t_0} \in \mathcal{A}$, the system state will stay in $\mathcal{A}$ for all future time points, that is, it will ``remember'' this set $\mathcal{A}$. Systems that do not respect this property will just leave (i.e.~``forget'') set $\mathcal{A}$ at some time $t > t_0$.

Property $2$ is related to the \textit{robustness} of the long-term memory. So, for $\mathbf{x}_{t} \in \mathcal{A}$, we can disturb the system by applying a non-zero finite input sequence ${\mathbf{u}_t}$ and, if this sequence is sufficiently small so that the state remains in $\mathcal{B}_{\mathcal{A}}$, then the system\footnote{for $\mathbf{f}$ continuous in $\mathbf{u}$.} will converge back to $\mathcal{A}$. In order words, the system will not ``forget'' $\mathcal{A}$ even in the presence of some (sufficiently small) disturbance.

The idea of defining long-term memory as attractors of a dynamical system is not new.  Similar approaches have been pursued in~\citet{bengio_problem_1993, bengio_learning_1994}. Examples of attractors of after-training RNNs are studied by~\citet{sussillo_opening_2013} and by~\citet{maheswaranathan_reverse_2019}, for toy problems and sentiment analysis, respectively.

In this paper, we will build on this idea and show the RNN may go through bifurcations during training for tasks that require long-term memory and learn attractors to solve the problem. We will also show how stable RNNs and orthogonal RNNs use different mechanisms. For this analysis we will use the bifurcation diagram to visualize the attractors of the RNN. These diagrams show the values visited in steady-state behavior (when a constant input is applied to the system) as a function of some bifurcation parameter $s$. See Figure~\ref{fig:chaotic_LSTM_1d}(a).

As mentioned before, it is possible to have different state transition and output functions for training and inference, respectively. See Figure~\ref{fig:rnn_modes}. Hence, the attractors of a recurrent model might be different during training and inference. This is the case for the numerical example in Section~\ref{sec:word-level-language}, where we include the analysis of the attractor and bifurcation diagram in both scenarios. Before undertaking this analysis we will study the influence that the internal dynamics has on the smoothness of the cost function.

\section{Smoothness of the cost function}
\label{sec:smoothness}

We will now establish connections between the \textit{smoothness} of the cost function and the internal dynamics of the RNN \textit{during training}. These connections allow us to extend the concept of exploding gradient with the analysis of the smoothness of the cost function. This analysis is based on the Lipschitz constant of the cost function and of its gradient (sometimes called $\beta$-smoothness). Both constants play a crucial role in optimization, see~\citet{nesterov_introductory_1998}, and can be understood as qualitative measurements of how \textit{smooth} the cost function is. Lower values imply that the cost function is less intricate and that optimization algorithms can still converge while taking larger steps. It also provides an upper bound on how different the performance of two close local minima may be.

This smoothness analysis suggests that higher-order derivatives (which contain information about the curvature) might  explode in some regions of the parameter space. This indicates that not only walls are formed, but also entire regions of the parameter space exhibit intricate behavior, full of undesirable local minima. This goes beyond what many papers suggest about exploding gradients. For instance, \citet{pascanu_difficulty_2013} formulate the hypothesis that when gradients explode, they do so in some specific direction, creating \textit{walls} of high curvature.  ~\citet{doya_bifurcations_1993} speculates that bifurcations might cause the jumps in the cost function.

\subsection{Example: chaotic LSTM}
\label{sec:chaotic-lstm}

We consider an LSTM model of dimension two without the bias terms. In Figure~\ref{fig:chaotic_LSTM_1d}(a) we show the bifurcation diagram and, in Figure~\ref{fig:chaotic_LSTM_1d}(b), the cost function. These are given as a function of $s$: the weights of the RNN are $\theta(s) = s \theta_\mathrm{true}$ and $\theta_\mathrm{true}$ denotes the true data generating weights. See Section~\ref{sec:chaotic-lstm-details} in the supplementary material for additional details.

The bifurcation diagram depicts, for each parameter, the steady-state behavior of the system. It was generated using a simulation of 200 samples for which the first 100 samples were discarded to remove the transient. In this bifurcation diagram we can observe a region with rich nonlinear and chaotic behavior for $0.9 \lesssim s \lesssim 1.45$. This region corresponds to the region where the cost function (Figure~\ref{fig:chaotic_LSTM_1d}(b)) is intricate and full of local minima. This connection between internal dynamics and smoothness of the cost function is formalized in the next subsection.

\subsection{Theoretical results}
\label{sec:theoretical_results}

In this work, we phrase the exploding gradient problem in terms of the smoothness of the cost function.
Here, we quantify smoothness using the Lipschitz constant of the cost function $V$ and of its gradient, the so-called~$\beta$-smoothness,
and establish a connection between these constants, the simulation length~$N$ and the state-transition Lipschitz constant $L_f$. 
The formulation is quite natural and allows us to include higher-order derivatives in the analysis. 
We \citep{ribeiro_smoothness_2019} have previously studied this result in a different context. Here it is generalized to RNNs. 

The Lipschitz constant of $V$ and of its gradient provide upper bounds on $\|\nabla V\|$ and $\|\nabla^2 V\|$,  respectively, cf.~\citet[Lemma 3.1]{khalil_nonlinear_2002}.  Hence, the first part of the theorem below can indeed be seen as a formalization of the exploding gradient problem. The second part gives information about the explosion of second-order derivatives and curvature and, to the best of our knowledge, is novel. For the case $L_f < 1$, similar results have been presented by~\citet{miller_stable_2019}, but not for the case that interests us the most: $L_f > 1$, for which there might be an explosion in the curvature.

\begin{theorem}[\textbf{Lipschitz constants of $V$ and $\nabla V$}]
  \label{thm:lipschitz}
  Let $\mathbf{f}(\mathbf{x}, \mathbf{u}; \boldsymbol{\theta})$ and $\mathbf{g}(\mathbf{x},  \mathbf{u}; \boldsymbol{\theta})$ in Eq.~(\ref{eq:nlss}) be  Lipschitz in ${(\mathbf{x}, \boldsymbol{\theta})}$ with constants $L_f$ and $L_g$ on a compact and convex set $\Omega = (\Omega_{\mathbf{x}},\Omega_{\mathbf{u}}, \Omega_{\boldsymbol{\theta}})$. Assume the loss function is either $l(\hat{\mathbf{y}}, \mathbf{y}) = \|\mathbf{y} -  \hat{\mathbf{y}}\|^2$ or $l(\hat{\mathbf{y}}, \mathbf{y}) = -\mathbf{y}^T \log (\sigma(\hat{\mathbf{y}})) - (1-\mathbf{y})^T \log (1 - \sigma(\hat{\mathbf{y}}))$. Let $\{\mathbf{u}_t\}_{k=1}^N \subseteq \Omega_{\mathbf{u}}$ and $(\Omega_{\mathbf{x}}, \Omega_{\boldsymbol{\theta}})\subseteq \mathbb{R}^{N_\theta}$. If there exists at least one choice of ${(\mathbf{x}_0, \boldsymbol{\theta})}$ for which there is an invariant set contained in $\Omega$, then, for trajectories and parameters confined within  $\Omega$:
  \vspace{-0.4cm}
  \begin{enumerate}
      \item 
       The cost function $V$ defined in~(\ref{eq:cost_function}) is Lipschitz with constant:\footnote{Here $\mathcal{O}$ is the big O notation. It should be read as: $L(N) = \mathcal{O}(g(N))$ if and only if there exist positive integers $M$ and $N_0$ such that ${ |L(N)|\leq \;Mg(N)\text{ for all }N\geq N_{0}}$.}
      \begin{equation}
      \label{eq:asymptotic_L}
      L_V = 
          \begin{cases}
          \mathcal{O}(L_f^{2N}) & \text{if } L_f > 1, \\
          \mathcal{O}(N) & \text{if } L_f = 1, \\
          \mathcal{O}(1) & \text{if } L_f < 1.
          \end{cases}
      \end{equation}
      \item
      If the Jacobian matrices of $\mathbf{f}$ and $\mathbf{g}$ are also Lipschitz with respect to ${(\mathbf{x}, \boldsymbol{\theta})}$ on $\Omega$, then the gradient of the cost function $\nabla V$ is Lipschitz with constant:
          \begin{equation}
      \label{eq:asymptotic_L'}
      L_{V}' = 
          \begin{cases}
          \mathcal{O}(L_f^{3 N}) & \text{if } L_f > 1, \\
          \mathcal{O}\left(N^3\right) & \text{if } L_f = 1, \\
          \mathcal{O}\left(1\right) & \text{if } L_f < 1.
          \end{cases}
      \end{equation}
    \end{enumerate}
    \vspace{-0.4cm}
\end{theorem}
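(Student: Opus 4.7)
The strategy is to unroll the RNN dynamics and propagate bounds forward through time, turning the joint Lipschitzness of $\mathbf{f}$ and $\mathbf{g}$ in $(\mathbf{x},\boldsymbol{\theta})$ into scalar recurrences that can be solved in closed form.

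For part~1, I introduce $\Delta_t = \|\mathbf{x}_t(\boldsymbol{\theta}_1)-\mathbf{x}_t(\boldsymbol{\theta}_2)\|$ and $\delta = \|\boldsymbol{\theta}_1-\boldsymbol{\theta}_2\|$. The one-step expansion $\Delta_{t+1}\leq L_f(\Delta_t+\delta)$ with $\Delta_0=0$ is a geometric recurrence whose solution $\Delta_t\leq L_f\delta\sum_{k=0}^{t-1}L_f^k$ immediately gives the three regimes $\mathcal{O}(L_f^t)$, $\mathcal{O}(t)$, $\mathcal{O}(1)$. A parallel unrolling applied to $\|\mathbf{x}_t\|$ itself (with $\mathbf{f}(\mathbf{0},\mathbf{u};\boldsymbol{\theta})$ bounded on the compact $\Omega$) yields $\|\mathbf{x}_t\|=\mathcal{O}(L_f^t)$ in the unstable regime. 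Lifting through $\mathbf{g}$ produces $\|\hat{\mathbf{y}}_t(\boldsymbol{\theta}_1)-\hat{\mathbf{y}}_t(\boldsymbol{\theta}_2)\|\leq L_g(\Delta_t+\delta)$ and $\|\hat{\mathbf{y}}_t\|=\mathcal{O}(L_f^t)$. For the squared loss, the mean value theorem gives $|l(y,\hat{y}_1)-l(y,\hat{y}_2)|\leq \|\hat{y}_1-\hat{y}_2\|(\|y-\hat{y}_1\|+\|y-\hat{y}_2\|)$, so each summand of $V$ contributes $\mathcal{O}(L_f^{2t}\delta)$ when $L_f>1$; the cross-entropy case is handled identically once one observes that $\sigma(\hat{y})$ is bounded away from $\{0,1\}$ on $\Omega$. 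Summing $t=1,\dots,N$ and dividing by $N$ yields Eq.~\eqref{eq:asymptotic_L}.

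For part~2, I expand $\nabla_{\boldsymbol{\theta}}\hat{\mathbf{y}}_t$ by backpropagation through time into a sum over $k\leq t$ of terms of the form $(\text{Jacobians of }\mathbf{g})\cdot\prod_{j=k+1}^{t-1}\nabla_{\mathbf{x}}\mathbf{f}|_{\mathbf{x}_j}\cdot\nabla_{\boldsymbol{\theta}}\mathbf{f}|_{\mathbf{x}_k}$, whose norm is $\mathcal{O}(L_f^t)$ in the unstable regime. To control the Lipschitz constant of this object in $\boldsymbol{\theta}$, I perturb each factor in each product: the Jacobian evaluated at $\mathbf{x}_j$ shifts by $\mathcal{O}(L_f^j\delta)$ via the assumed Lipschitzness of the Jacobians combined with the part~1 bound on $\Delta_j$. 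A telescoping identity over the $t-k-1$ surviving factors (each of norm at most $L_f$) converts these per-factor shifts into a bound of order $\mathcal{O}(L_f^{2t}\delta)$ on the Lipschitz constant of $\nabla_{\boldsymbol{\theta}}\hat{\mathbf{y}}_t$, modulo polynomial prefactors in $t$. Finally, decomposing $\nabla l_t=\nabla_{\hat{y}}l\cdot\nabla_{\boldsymbol{\theta}}\hat{y}_t$ and applying an add-and-subtract bound gives $\|\nabla l_t(\boldsymbol{\theta}_1)-\nabla l_t(\boldsymbol{\theta}_2)\|\leq L_{\nabla l}\|\hat{y}_1-\hat{y}_2\|\cdot\|\nabla_{\boldsymbol{\theta}}\hat{y}_t\|+\|\nabla_{\hat{y}}l\|\cdot\|\nabla_{\boldsymbol{\theta}}\hat{y}_t(\boldsymbol{\theta}_1)-\nabla_{\boldsymbol{\theta}}\hat{y}_t(\boldsymbol{\theta}_2)\|$; using $\|\nabla_{\hat{y}}l\|=\mathcal{O}(L_f^t)$ (through $\|\hat{y}\|$) the per-term contribution becomes $\mathcal{O}(L_f^{3t}\delta)$. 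Summing and dividing by $N$ produces Eq.~\eqref{eq:asymptotic_L'}; the $L_f=1$ case collapses to $\mathcal{O}(N^3)$ from the cubic polynomial appearing inside the sum, and the $L_f<1$ case collapses to $\mathcal{O}(1)$ because every geometric series converges.

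The hard part is the telescoping step in part~2 that controls the Lipschitz constant of $\nabla_{\boldsymbol{\theta}}\hat{\mathbf{y}}_t$. Perturbing $\boldsymbol{\theta}$ simultaneously shifts every factor inside every product of Jacobians and also moves the evaluation points $\mathbf{x}_j$ at which those Jacobians are taken; tracking these coupled perturbations while keeping each factor bounded by $L_f$ requires an add-and-subtract performed across the whole product, and some care is needed to ensure that the resulting sum of $\mathcal{O}(t)$ terms (each with $\mathcal{O}(t)$ surviving Jacobian factors) does not accumulate an extraneous exponential factor in $L_f$. Once this step is handled cleanly, the rest of the argument is routine unrolling on top of the machinery built in part~1, with the invariant-set hypothesis serving only to guarantee that trajectories remain inside $\Omega$ so that all Lipschitz constants apply uniformly along the sequence.
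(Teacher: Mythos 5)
Your proposal is correct and follows essentially the same route as the paper's proof: unroll the state/output deviations to get geometric-in-$t$ Lipschitz bounds for $\hat{\mathbf{y}}_t$, combine with a quadratic-growth bound on the loss increments for part~1, and for part~2 use the closed-form sum-of-products expression for $\nabla_{\boldsymbol{\theta}}\hat{\mathbf{y}}_t$, perturb each Jacobian factor via the Lipschitz-Jacobian assumption plus the part~1 state bounds, and telescope over the bounded factors before summing and taking asymptotics in $N$. The only cosmetic differences are that you bound $\|\hat{\mathbf{y}}_t\|$ via compactness rather than the paper's invariant-set argument (which the paper uses precisely to obtain the $\mathcal{O}(S(t))$ output bound, not merely to keep trajectories in $\Omega$) and you handle the cross-entropy case via $\sigma$ being bounded away from $\{0,1\}$ instead of the paper's global $1$-Lipschitz bound on $\log\sigma$; both variants yield the same rates.
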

\vspace*{-0.3cm}
\begin{proof}
Section~\ref{sec:proofs} in the supplementary material.
\end{proof}
\vspace*{-0.2cm}

\begin{obs}[\textbf{other loss functions}]
    The above theorem was stated for two different loss functions (squared error and average cross-entropy preceded by the sigmoid function). The theorem still holds for any loss function for which the equivalent of Lemma~\ref{thm:condition_on_the_loss} (in the supplementary material) remains true.
\end{obs}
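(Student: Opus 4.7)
The plan is to observe that the proof of Theorem~\ref{thm:lipschitz} uses the specific form of the loss $l$ only through a single intermediate fact, namely the conclusion of Lemma~\ref{thm:condition_on_the_loss}. Once that lemma's conclusion is available, the rest of the argument — the recursive bounds on $\partial \mathbf{x}_t/\partial \mathbf{x}_0$, $\partial \mathbf{x}_t/\partial \boldsymbol{\theta}$ and their second derivatives through $\mathbf{f}$, the Lipschitz bounds on $\mathbf{g}$, the chain-rule composition, and the summation over the $N$ time steps — is entirely loss-agnostic. So the goal is to abstract the hypothesis on $l$: any loss satisfying the same conclusion as Lemma~\ref{thm:condition_on_the_loss} can be plugged in without altering a single line of the asymptotic computation that produced Eqs.~\eqref{eq:asymptotic_L} and~\eqref{eq:asymptotic_L'}.

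Concretely, I would first revisit the proof of Theorem~\ref{thm:lipschitz} and mark every place where the explicit expression for squared error or for sigmoid cross-entropy is invoked. Each such invocation should reduce to uniform bounds on $\nabla_{\hat{\mathbf{y}}} l$ and $\nabla^2_{\hat{\mathbf{y}}} l$ on the compact set determined by $\Omega$ — which is exactly what Lemma~\ref{thm:condition_on_the_loss} provides. Second, I would replace each such justification by a direct appeal to the abstract hypothesis: ``$l$ satisfies the conclusion of Lemma~\ref{thm:condition_on_the_loss}''. Third, I would check that the loss-dependent constants enter the final expressions for $L_V$ and $L_V'$ only as multiplicative factors, so that the dependence on $N$ and $L_f$ in~\eqref{eq:asymptotic_L}--\eqref{eq:asymptotic_L'} is preserved.

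The main (and essentially only) obstacle is bookkeeping: making sure no hidden use of the concrete loss slips through the abstraction. The squared-error case is transparent because $\nabla^2_{\hat{\mathbf{y}}} l$ is constant; the cross-entropy case is subtler because the sigmoid preceding $\log$ is what keeps $\nabla_{\hat{\mathbf{y}}} l$ and $\nabla^2_{\hat{\mathbf{y}}} l$ globally bounded. Provided the statement of Lemma~\ref{thm:condition_on_the_loss} is phrased in terms of these uniform bounds on the first two derivatives of $l$ with respect to its prediction argument on the relevant compact set, the remark follows immediately by re-running the proof of Theorem~\ref{thm:lipschitz} verbatim, substituting the abstract constants from the hypothesis wherever the concrete ones previously appeared.
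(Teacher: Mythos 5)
Your proposal is correct and matches the paper's (implicit) justification exactly: the proofs of both parts of Theorem~\ref{thm:lipschitz} invoke the loss only through property~1 of Lemma~\ref{thm:condition_on_the_loss} (in Eq.~\eqref{eq:ineq_V1}) and property~2 (in Eq.~\eqref{eq:ineq_grad_V}), with the loss-dependent constants $K_1,\dots,K_4$ entering $L_V$ and $L_V'$ only multiplicatively, so the asymptotics in $N$ and $L_f$ are unaffected. One small correction to your description: the lemma is not a uniform bound on the first two derivatives of $l$ but rather a Lipschitz estimate whose constant is allowed to grow with $\max(\|\hat{\mathbf{y}}\|,\|\hat{\mathbf{z}}\|)$ (later controlled by $M(t)$) together with the structural decomposition $l'(\hat{\mathbf{y}},\mathbf{y})=\Psi(\hat{\mathbf{y}})-K_3\mathbf{y}$ with $\Psi$ Lipschitz; this distinction does not change your argument.
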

\begin{obs}[\textbf{relation between $L_f$ and eigenvalues}]
Let $A$ be the Jacobian matrix of $\mathbf{f}$ with respect to $\mathbf{x}$ for a fixed input $\bar{\mathbf{u}}$. The constant $L_{\bar{\mathbf{u}}}$  in the set $\Omega$ is equal to the largest possible eigenvalue of $A$ inside the set. Furthermore, $L_{\bar{\mathbf{u}}}$ is a lower bound on $L_f$, hence the necessary condition for exploding gradients to appear given in~\cite{pascanu_difficulty_2013} (i.e.~largest eigenvalue bigger than 1) also follows from Theorem~\ref{thm:lipschitz}. The necessary condition for the second derivative to explode could also be stated in terms of the largest eigenvalue: it needs to be bigger than 1.
\end{obs}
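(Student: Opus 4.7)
The plan is to prove the three claims in the remark by combining a standard Jacobian characterisation of Lipschitz constants on convex domains with the bounds of Theorem~\ref{thm:lipschitz}. First I would identify $L_{\bar{\mathbf{u}}}$ with a supremum of Jacobian norms over $\Omega_{\mathbf{x}}$; then deduce $L_{\bar{\mathbf{u}}} \le L_f$ by specialising the joint Lipschitz bound on $(\mathbf{x},\boldsymbol{\theta})$ to variations in $\mathbf{x}$ alone; and finally obtain the eigenvalue-based necessary conditions for gradient and curvature blow-up as immediate contrapositives of Eqs.~\eqref{eq:asymptotic_L}--\eqref{eq:asymptotic_L'}.

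For the first claim, I would invoke the classical fact (e.g.\ \citet[Lemma 3.1]{khalil_nonlinear_2002}) that for a $C^{1}$ map $\mathbf{h}$ on a convex set, the smallest Lipschitz constant in the Euclidean norm equals $\sup_{\mathbf{x}} \|\nabla \mathbf{h}(\mathbf{x})\|_{2}$, where $\|\cdot\|_2$ denotes the spectral (operator) norm. Applying this with $\mathbf{h}(\mathbf{x}) = \mathbf{f}(\mathbf{x},\bar{\mathbf{u}};\boldsymbol{\theta})$ on the convex domain $\Omega_{\mathbf{x}}$ yields
\[
L_{\bar{\mathbf{u}}} \;=\; \sup_{\mathbf{x}\in\Omega_{\mathbf{x}}} \|A(\mathbf{x})\|_{2}.
\]
The spectral norm is the largest singular value of $A(\mathbf{x})$, which is what the remark refers to informally as the ``largest possible eigenvalue.'' Since $\|A\|_{2}\ge \rho(A)$, the statement is also consistent with the spectral-radius reading used in \citet{pascanu_difficulty_2013}, at the cost of one additional inequality.

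Second, since $\mathbf{f}$ is Lipschitz in the joint variable $(\mathbf{x},\boldsymbol{\theta})$ with constant $L_f$, specialising the inequality to two points that share the parameter vector $\boldsymbol{\theta}$ gives $\|\mathbf{f}(\mathbf{x},\bar{\mathbf{u}};\boldsymbol{\theta}) - \mathbf{f}(\mathbf{w},\bar{\mathbf{u}};\boldsymbol{\theta})\| \le L_f\|\mathbf{x} - \mathbf{w}\|$, and because $L_{\bar{\mathbf{u}}}$ is by definition the smallest constant realising this state-only bound, $L_{\bar{\mathbf{u}}} \le L_f$. Combined with the first step,
\[
\sup_{\mathbf{x}\in\Omega_{\mathbf{x}}} \|A(\mathbf{x})\|_{2} \;\le\; L_f.
\]
The third claim then drops out by contrapositive: for $L_V$ to grow without bound in $N$, Eq.~\eqref{eq:asymptotic_L} forces $L_f>1$, which via the inequality above forces $\sup_{\mathbf{x}}\|A(\mathbf{x})\|_{2} > 1$ somewhere in $\Omega_{\mathbf{x}}$, i.e.\ exactly the necessary condition of \citet{pascanu_difficulty_2013}. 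Repeating the argument with Eq.~\eqref{eq:asymptotic_L'} yields the analogous necessary condition for the $\beta$-smoothness constant $L_V'$ to explode.

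The only real difficulty is terminological rather than mathematical: Lipschitz continuity in the Euclidean norm couples to singular values, not eigenvalues, so I would make the proof explicit that ``largest possible eigenvalue of $A$'' is to be read as the spectral norm (largest singular value) of $A$, and that the spectral-radius version follows through the extra inequality $\rho(A) \le \|A\|_2$. Once that interpretation is fixed, everything reduces to applying the Jacobian-norm characterisation of Lipschitz constants on a convex set and chaining with Theorem~\ref{thm:lipschitz}.
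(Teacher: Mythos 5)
The paper states this remark without any accompanying proof, so there is no official argument to compare against; your reconstruction is the natural one, and its first two steps are sound. Identifying $L_{\bar{\mathbf{u}}}$ with $\sup_{\mathbf{x}\in\Omega_{\mathbf{x}}}\|A(\mathbf{x})\|_2$ via the mean-value/Jacobian characterisation of Lipschitz constants on a convex set, and obtaining $L_{\bar{\mathbf{u}}}\le L_f$ by freezing $\boldsymbol{\theta}$ in the joint Lipschitz bound, is exactly the content of the footnote in Section~\ref{sec:contr-text-non}. Your insistence that ``largest possible eigenvalue'' be read as the largest singular value, with $\rho(A)\le\|A\|_2$ covering the spectral-radius reading, is a genuine correction to the remark's wording: for non-normal $A$ the equality claimed in the remark is false as literally stated.

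The third step, however, contains a direction-of-inequality error. You write that $L_f>1$ ``via the inequality above forces $\sup_{\mathbf{x}}\|A(\mathbf{x})\|_2>1$,'' but the inequality you established is $\sup_{\mathbf{x}}\|A(\mathbf{x})\|_2\le L_f$, which is perfectly consistent with $L_f>1$ and $\sup_{\mathbf{x}}\|A(\mathbf{x})\|_2<1$ holding simultaneously. The contrapositive of Theorem~\ref{thm:lipschitz} therefore yields only the weaker necessary condition $L_f>1$; it does not by itself recover the condition of \citet{pascanu_difficulty_2013} on the Jacobian spectrum. What $L_{\bar{\mathbf{u}}}\le L_f$ does give is the converse-flavoured statement that $\lambda_{\max}>1$ implies $L_f>1$, i.e.\ the eigenvalue condition places the model in the regime where Theorem~\ref{thm:lipschitz} permits (but does not force) explosion of $L_V$ and $L_V'$. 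To obtain Pascanu's necessary condition verbatim you would need a separate argument working directly with the backpropagated products $\prod_j A_j$, whose norms are controlled by powers of $L_{\bar{\mathbf{u}}}$ rather than of $L_f$. In fairness, the remark itself elides this distinction, but a proof should either supply the missing step or state the weaker conclusion honestly rather than assert the implication with the inequality pointing the wrong way.
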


It is important to highlight that the proof of this more general result follows a different strategy compared to existing proofs of the exploding gradient problem (such as in~\cite{pascanu_difficulty_2013}). Rather than backpropagating the derivatives (which is more natural in the context of neural networks), we use forward propagation (similarly to what is done by~\citet{williams_experimental_1989}) which allows us to arrive at this more general result.

If the system is non-contractive we have $L_f \geq 1$ (cf. Section~\ref{sec:contr-text-non}). According to Theorem~\ref{thm:lipschitz}, the Lipschitz constants and $\beta$-smoothness for all these systems might blow up exponentially (or polynomially for some limit cases) with the maximum simulation length. This  might yield very intricate cost functions, see Figure~\ref{fig:chaotic_LSTM_1d}(b).

\section{Learning attractors during training}

Let us now study the mechanism employed by RNNs in learning a task that requires long-term memory. We focus on the LSTM~\citep{hochreiter_long_1997}; on the stable LSTM (sLSTM) proposed by~\cite{miller_stable_2019},  which projects, at each iteration, the parameters into the region of the parameter space that yields contractive models; and, on the orthogonal RNN (oRNN) proposed by~\citet{lezcano-casado_cheap_2019}.

\subsection{Sine-wave generation}
\label{sec:sine-wave-generation}

\begin{figure*}[t]
  \vspace*{-0.85cm}
  \centering
  \subfloat[LSTM]{
    \includegraphics[trim={5cm 3cm 0.3cm 3cm},clip,width=0.3\textwidth]{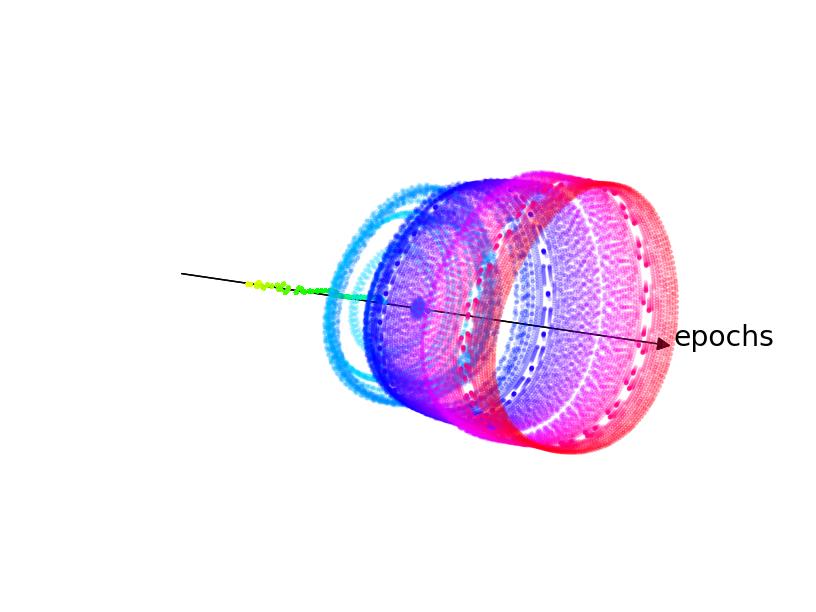}}
  \subfloat[sLSTM]{
    \includegraphics[trim={5cm 3cm 0.3cm 3cm},clip,width=0.3\textwidth]{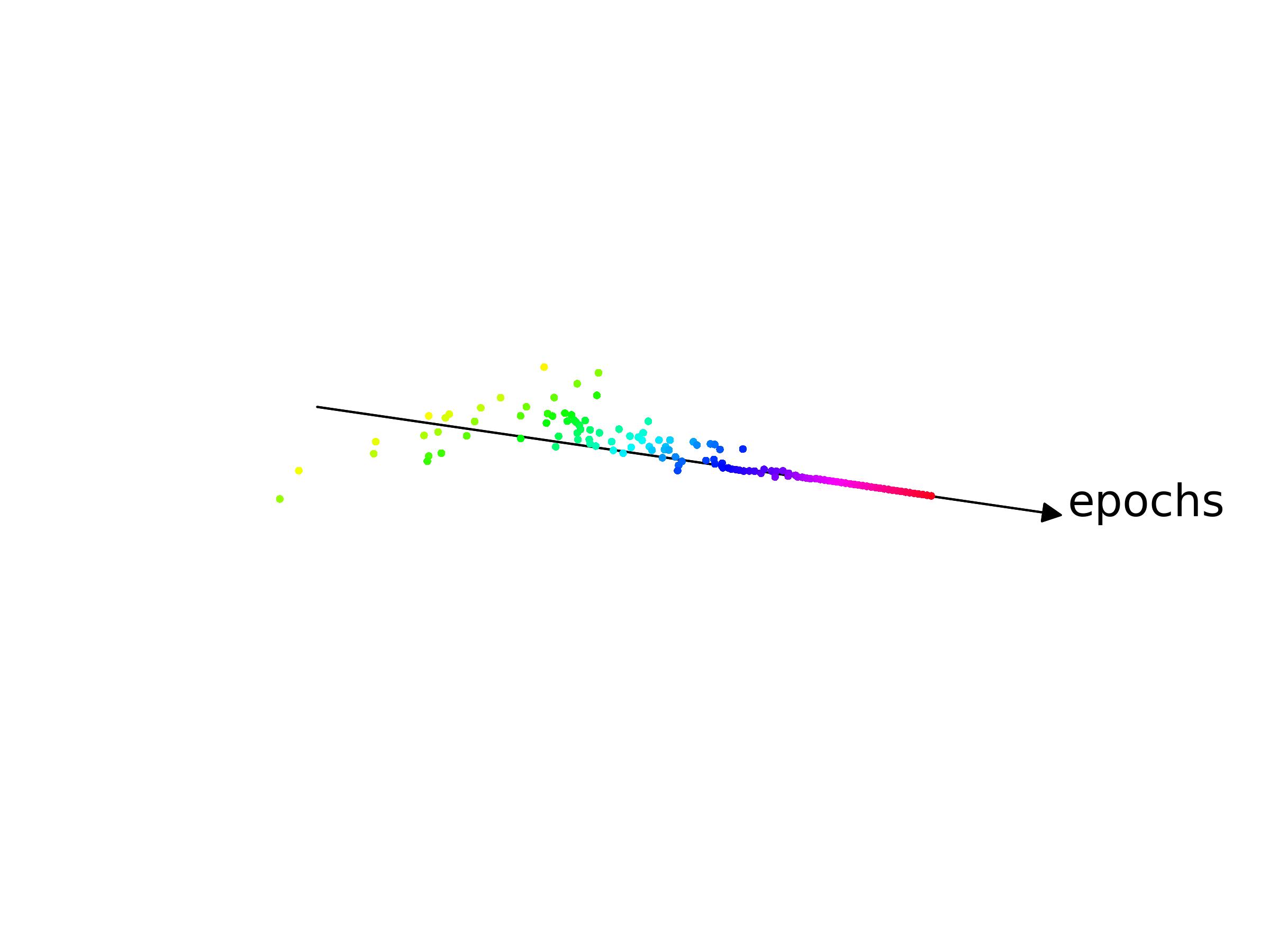}}
  \subfloat[oRNN]{
    \includegraphics[trim={4cm 3cm 0.3cm 3cm},clip,width=0.3\textwidth]{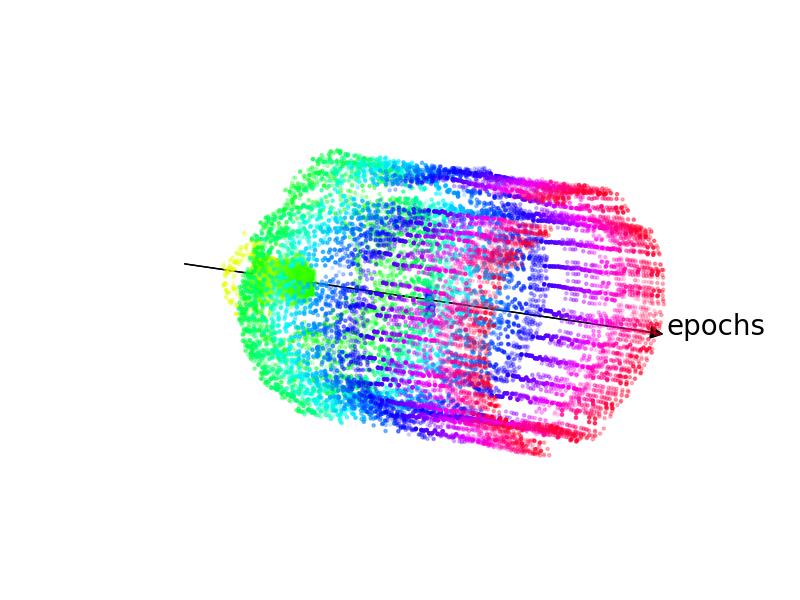}}
  \caption{\textbf{Learning oscillations.} Bifurcation diagram during training for the sine-wave generation task  showing the steady-state of the output $y_t$ and its first difference $y_t - y_{t-1}$. For each epoch, the plot shows values visited after a burn-in period of 1200 samples, which is used to remove the transient response and to produce a visualization of the system attractors. The plot shows epochs 300 to 1500, the initial 300 epochs are out of scale for the oRNN, and are omitted in all diagrams. The bifurcation diagram is for a constant input of $0.218$, and other input values yield similar plots.}
  \vspace*{-0.1cm}
  \label{fig:sinewave-orbit-diagram}
\end{figure*}

\begin{figure}[t]
  \centering
  \vspace*{-0.5cm}
  \includegraphics[width=0.5\textwidth]{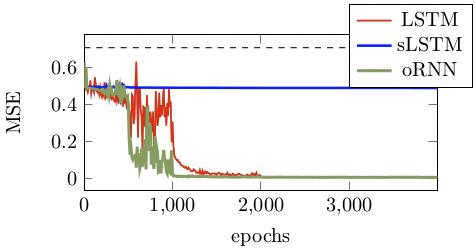}
  \caption{\textbf{Sine-wave generator training history.} Mean square error (MSE) per epoch for the sine-wave  generation task during training. The baseline performance is indicated by the dashed line. The final performance of the LSTM after 4000 epochs is $4.8*10^{-3}$, the final performance of the sLSTM is $0.49$ and the final performance of the oRNN is $4.1*10^{-3}$.}
  \vspace*{-0.6cm}
  \label{fig:sinewave-history}
\end{figure}

We study the use of RNNs for the generation of sine-waves with varying frequencies. The input is a constant, and the output is a sine-wave with unitary amplitude and the frequency specified by the constant input. This artificial task is described in~\citet{sussillo_opening_2013}. We have 100 sequences of length 400, each sequence consisting of a sine-wave with the given constant frequency.  The 100 sequences are uniformly picked in the interval $\left[\frac{\pi}{16}, \frac{\pi}{8}\right]$. We use those sequences for training.

For all models, we use the same configurations: Adam optimization algorithm~\citep{kingma_adam_2014} with initial learning rate $10^{-3}$ and default parameters. For the stable model, we decrease the learning rate by 10 at epochs $\{500, 1000, 2000\}$. We use a hidden size of 200 and a single layer for the RNN, which is followed by a linear layer for the output. The gradient is clipped when its norm exceeds 0.25.

The training history for the three models is displayed in Figure~\ref{fig:sinewave-history}. The stable LSTM fails to perform well in the sine-wave generation task. Figure~\ref{fig:sinewave-orbit-diagram} shows the bifurcation diagram for the three models during training. Here we use a two-dimensional bifurcation diagram containing both the output $y_t$ and its first difference $y_t - y_{t-1}$ to facilitate the visualization of periodic attractors.

For the LSTM, during training, a stable fixed point undergoes a bifurcation at which the fixed point stability switches, and a periodic solution arises.  The stable LSTM is constrained to stay in the region of the parameter space for which the system is contractive and, hence has a single stable attractor point (cf.~discussion on Section~\ref{sec:smoothness}). Since the system needs to sustain the oscillations, this is a significant limitation that prevents the model from performing the task well.

The bifurcation that the fixed point needs to undergo to oscillate is called Hopf (for the continuous case) or Neimark-Sacker bifurcation (for the discrete case). During this bifurcation, a single stable fixed point changes stability and becomes unstable as a pair of complex eigenvalues (of the Jacobian matrix) enters the unstable region. The orthogonal constraint in the orthogonal RNN prevents the occurrence of these bifurcations since all eigenvalues are fixed to one. Nevertheless, orthogonal RNNs still manage to solve the problem and learn the periodic attractor.

Since most real-world sequence tasks have stochastic elements, the deterministic nature of the sine-wave generation is a limitation of this task to be representative of real-world problems. Nevertheless, this example illustrates the challenges of learning steady-state behavior using RNNs. Stable models will not be able to store information in the form of attractors, unless other training mechanisms, such as teacher forcing, are used (see Figure~\ref{fig:rnn_modes}). Orthogonal models, do not have this restriction, but, even so, the orthogonal constraints can also prevent some bifurcations, and require different learning mechanisms. For instance, another type of bifurcation that this model will fail to capture is the supercritical pitch-fork bifurcation, where a single stable fixed point loses stability, and two stable fixed points are created.

\subsection{Sequence classification based on few relevant symbols}
\label{sec:symb-class}

\begin{figure*}[t]
  \vspace*{-0.4cm}
  \centering
  \subfloat[Sequence length 100]{\includegraphics[height=0.21\textwidth]{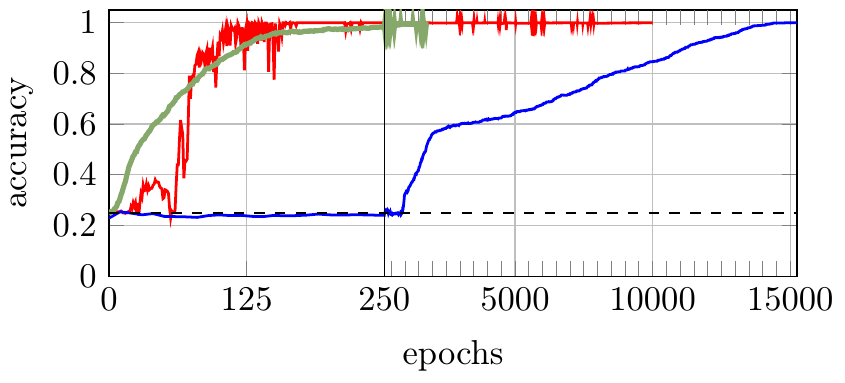}}
  \subfloat[Sequence length 200]{\includegraphics[height=0.2\textwidth]{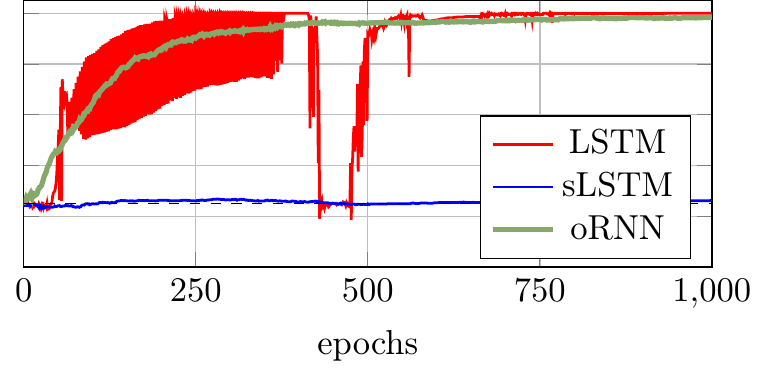}}
  \vspace*{-0.05cm}
  \caption{\textbf{Sequence classification training history.} Accuracy on validation data set for the  recurrent models trained to perform the same sequence classification task for two different sequence lengths. The baseline performance (always predicting $\{p, p\}$) is indicated by the dashed line. In (a) two x-axis scales co-exist in the same graph, one scale in [0, 250) and other in [250, 15250], with a relation 1:40 between the two scales.}
  \label{fig:history-symbol-classification}
\end{figure*}
\begin{figure*}[t]
  \vspace*{-0.4cm}
  \centering
  \subfloat[LSTM, $p\rightarrow \{p, p\}$]{\includegraphics[trim={3cm 1.5cm 0.3cm 3cm},clip, width=0.23\textwidth]{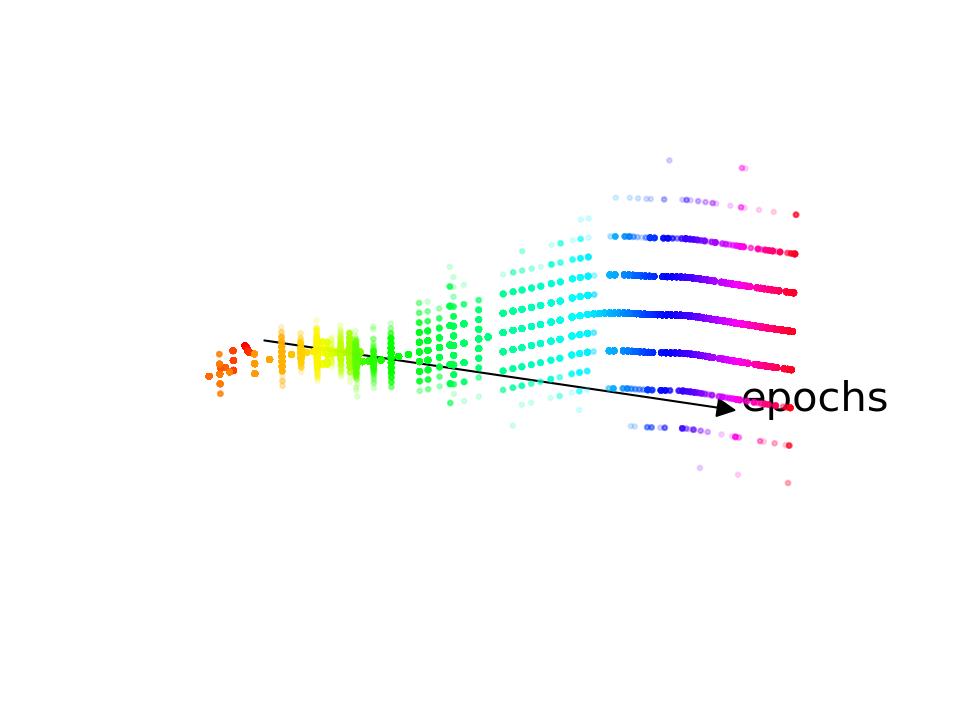}}
  \subfloat[LSTM, $b\rightarrow \{p, p\}$]{\includegraphics[trim={3cm 1.5cm 0.3cm 3cm},clip, width=0.23\textwidth]{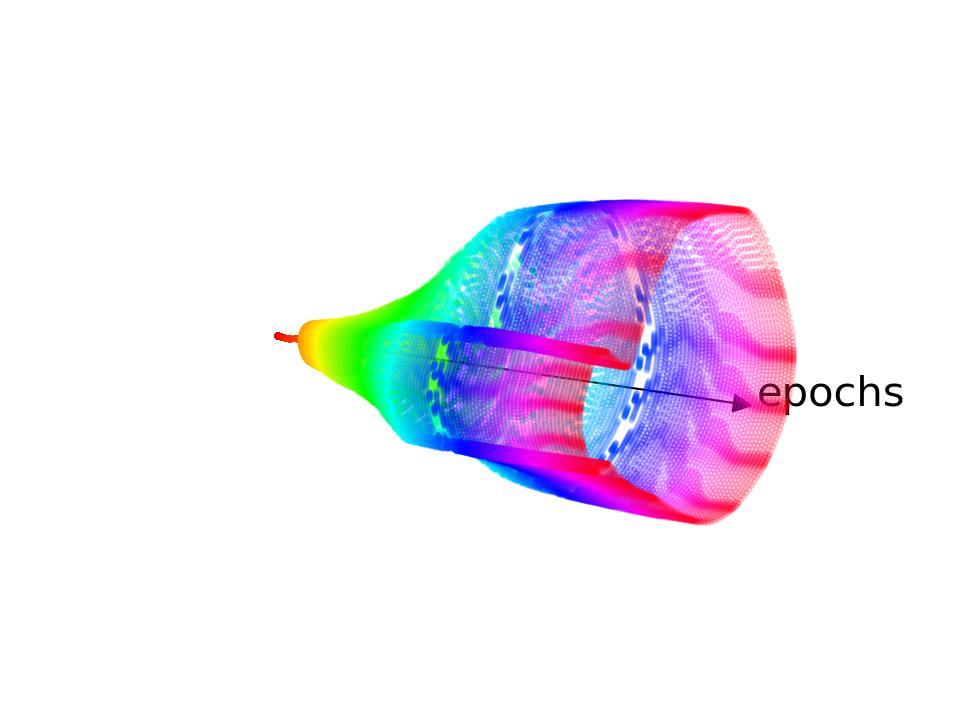}}
  \subfloat[oRNN, $p\rightarrow \{p, p\}$]{\includegraphics[trim={3cm 1.5cm 0.3cm 3cm},clip, width=0.23\textwidth]{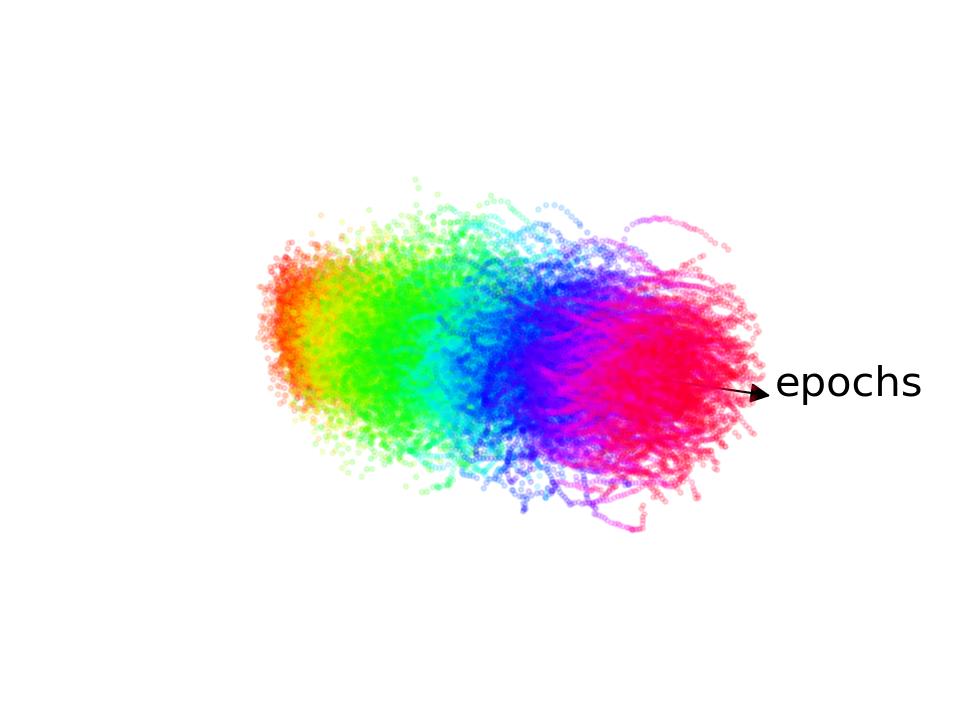}}
  \subfloat[oRNN, $b\rightarrow \{p, p\}$]{\includegraphics[trim={3cm 1.5cm 0.3cm 3cm},clip, width=0.23\textwidth]{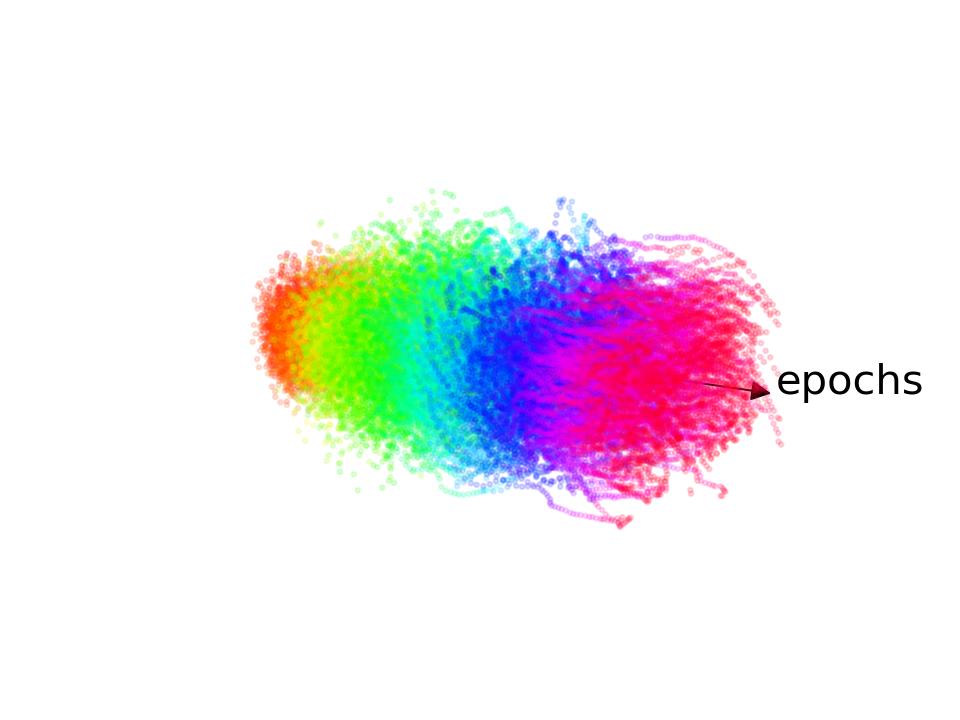}}
  \vspace*{-0.05cm}
  \caption{\textbf{Learning to classify sequences.} Bifurcation diagram for the sequence classification task for sequences of length 100. It shows the steady-state of the output $y_t$ and its first difference $y_t - y_{t-1}$.  The arrows point towards the evolution of the number of epochs, that varies from 0 to 400.
  For each epoch, the plot shows values visited after a burn-in period of 400 samples, which is used to remove the transient response and to produce a visualization of the system attractors.}
  \vspace*{-0.4cm}
  \label{fig:symbol-classification-orbit-diagram}
\end{figure*}

\begin{table}[]
  \centering
  \caption{\textbf{Sequence classification accuracy.} Comparison between the LSTM; stable LSTM (sLSTM); and orthogonal RNN (oRNN); for sequence classification with different sequence lengths ($l$). As a baseline, always predicting $\{p, p\}$ would yield an accuracy of $0.25$.}
  \vspace*{-0.2cm}
  \label{tab:comparisson}
  {\renewcommand{\arraystretch}{0.5}
    \begin{tabular}{@{}rccc@{}}
      \toprule
      $l$ &LSTM & sLSTM & oRNN \\
      \midrule
      50 & 1.00 & 1.00 & 1.000 \\
      100 & 1.00 & 1.00 & 1.000 \\
      200 & 1.00 & 0.27 & 0.999 \\
      300 & 0.25 & 0.26 & 0.995 \\
      500 & 0.27 & 0.26 & 0.970 \\
      \bottomrule
    \end{tabular}
  }
  \vspace*{-0.45cm}
\end{table}

We study the classification of sequences according to a few relevant, widely separated, symbols. This artificial task was originally described by~\citet{hochreiter_long_1997}, and it requires the neural network to retain information for long periods. For this problem, the sequence contains categorical values $\{p, q, a, b, c, d\}$. The symbols $\{a, b, c, d\}$ act as distractors and are not relevant to the task. Instead, the relevant symbols are picked from $\{p, q\}$ appearing only twice in the sequence, at positions $t_1\in [10, 18]$ and $t_2 \in [40, 49]$. The task of the neural network is to classify the sequence according to the order in which the relevant symbols occur. The four possible classes are $\{(p, p), (p, q), (q, p), (q, q)\}$. Training and validation datasets each contain  1000  sequences, and all sequences have the same length.

For all models, we use the same configurations: Adam optimization algorithm with an initial learning rate of $10^{-2}$ and default parameters. We decrease the learning rate by 10 at epochs $\{500, 1000, 2000\}$, and run the optimization for a total of 15000 epochs or until accuracy one is attained. We use a hidden size of 200 and a single layer for the RNN, which is followed by a linear layer with a single output. The gradient is clipped when its norm exceeds 0.25. The batch size is 100.

Figure~\ref{fig:history-symbol-classification} shows the training history for the different models in the sequence classification task. In Figure~\ref{fig:history-symbol-classification}(a), the sLSTM eventually manage to solve the task, the accuracy, however, increase at a very slow linear rate between epochs $[1000, 10000]$. The linear rate is quite characteristic of first-order optimization methods in a basin of attraction~\citep{nocedal_numerical_2006}, and is very slow due to the vanishing gradient problem. On the other hand, the standard LSTM converges quickly. The convergence, however, is quite dependent on the initial condition. See Supplementary Figure~\ref{fig:history-temporal-order} to see how, for a different random seed and identical setup, the model completely fails to converge.

For length 200, the LSTM converges, but very unsteadily and with many bumps. The strong dependency with the initial conditions and hard to navigate landscape is in agreement with the considerations about smoothness and multiple close local minima that might affect this model (cf.~Section~\ref{sec:smoothness}).  The orthogonal RNN avoids this hard to navigate landscape by fixing the eigenvalues equal to one, and it does have a much smoother convergence than the LSTM. It is also the model that manages to solve the problem for the longest sequences (see Table~\ref{tab:comparisson}). The stable LSTM model, on the other hand, fails to solve the task for the length of 200 or longer.

To analyze the attractors during training, we apply a constant unitary input to one of the six input channels (keeping the others at zero) and measure one of the four possible outputs. Bifurcation diagrams for the LSTM and the oRNN for all $6\times 4$ possible combinations of input/output pairs are displayed in Supplementary Figures~\ref{fig:lstm-bifurcation-all} and~\ref{fig:ornn-bifurcation-all}. Figure~\ref{fig:symbol-classification-orbit-diagram} shows a subset of the combinations. More specifically, it shows the bifurcation diagram between the input $p$ and the output $\{p, p\}$ and between the input $b$ and the same output, which we believe to be representative of all possible combinations.

\begin{figure*}[t]
  \vspace*{-0.65cm}
  \centering
  \makebox[\textwidth][c]{
  \subfloat[LSTM]{\includegraphics[trim={2cm 3cm 2.5cm 3cm},clip,width=0.17\textwidth]{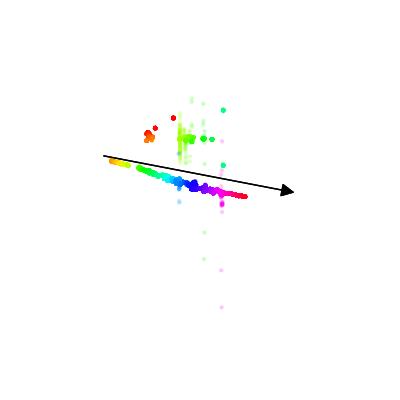}}
  \subfloat[LSTM, feedback]{\includegraphics[trim={2cm 3cm 2.5cm 3cm},clip,width=0.17\textwidth]{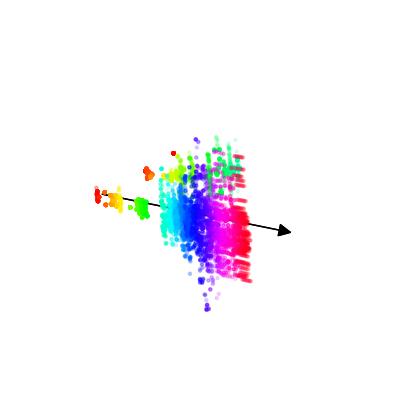}}
  \subfloat[oRNN]{\includegraphics[width=0.17\textwidth]{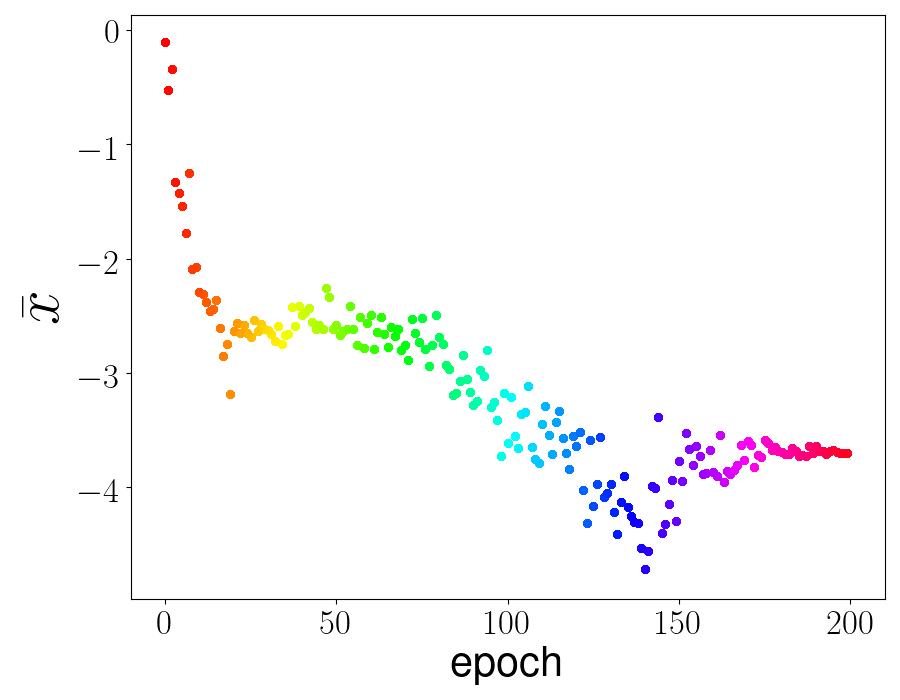}}
  \subfloat[oRNN, feedback]{\includegraphics[width=0.17\textwidth]{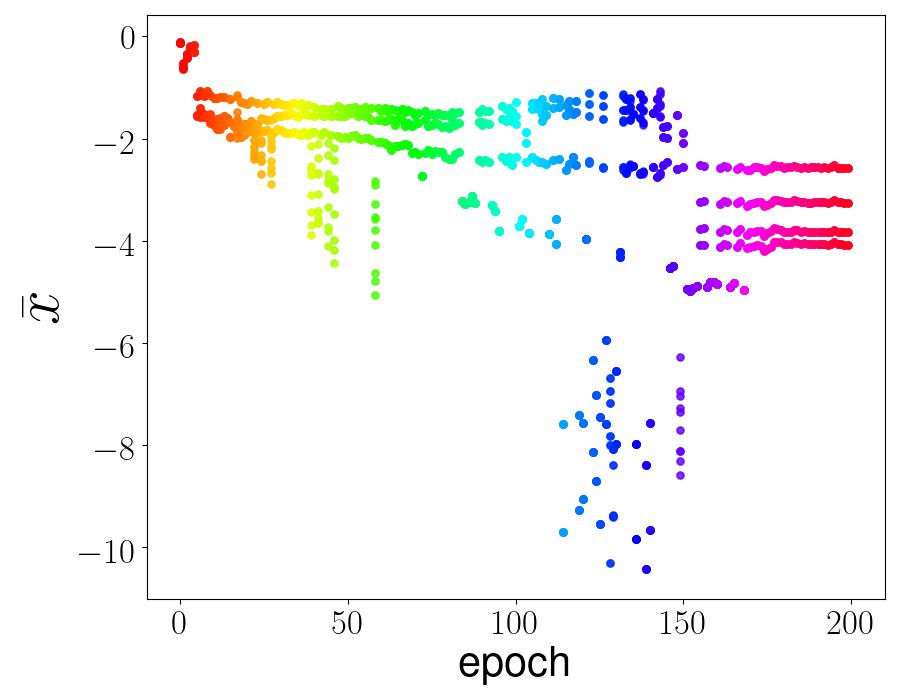}}
  \subfloat[sLSTM]{\includegraphics[width=0.17\textwidth]{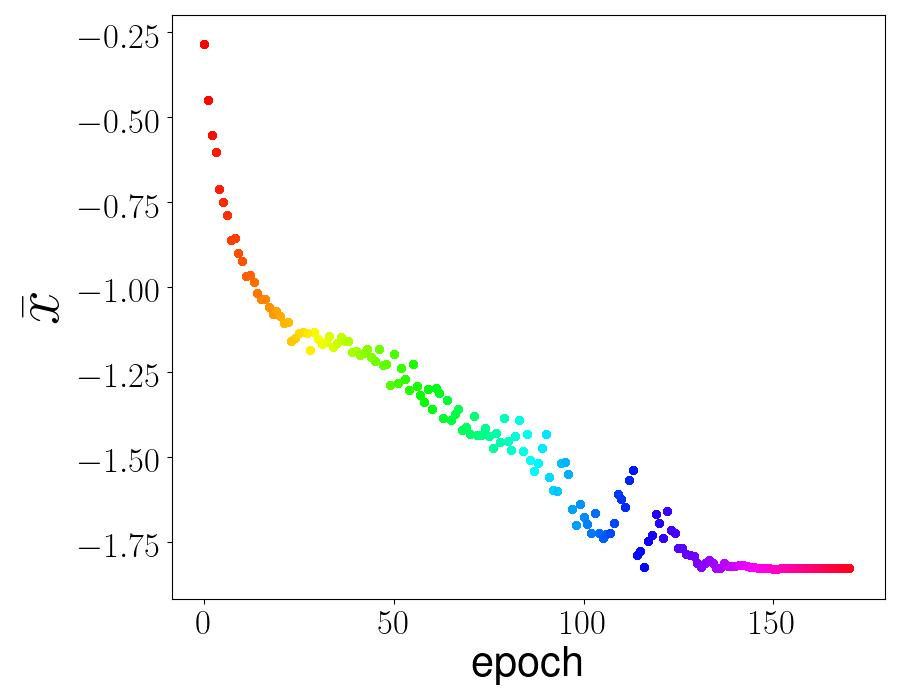}}
  \subfloat[sLSTM, feedback]{\includegraphics[width=0.17\textwidth]{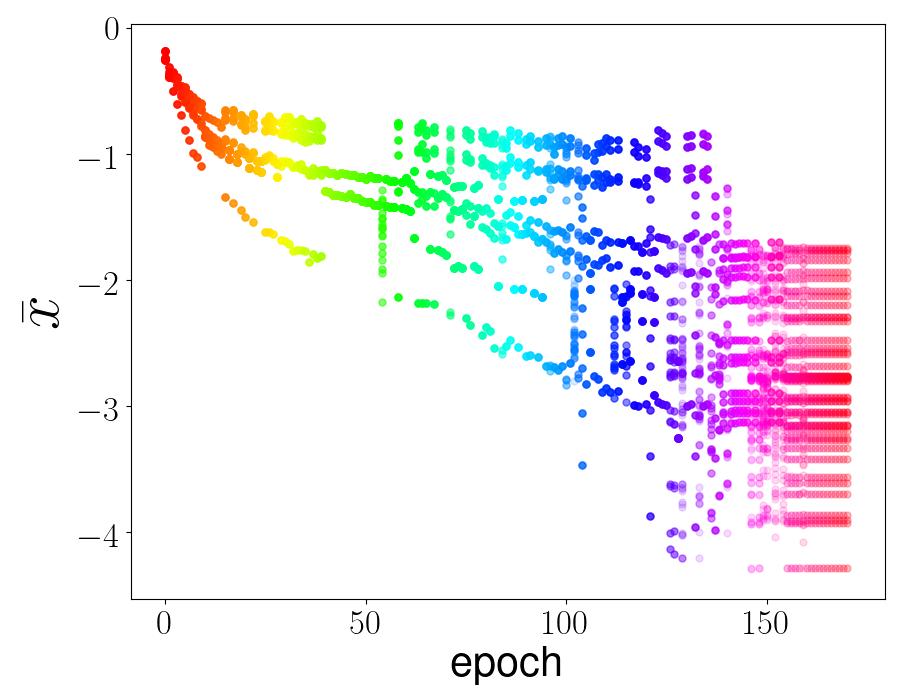}}
}
\vspace*{-0.2cm}
  \caption{\textbf{Learning to model language.} Bifurcation diagram for the world-level language model for: the LSTM, in (a)-(b); the oRNN, in (c)-(d); and, the sLSTM, in (e)-(f). In (a) and (b), we plot a 2-D bifurcation diagram showing the steady-state of the average of internal states $\bar{x}_t$ and its first difference $\bar{x}_t - \bar{x}_{t-1}$. In (c)-(f), a 1-D bifurcation of the average $\bar{x}_t$ is used, which facilitates the visualization. In (a), (c) and (e), the bifurcation diagram is obtained from constant inputs.  In (b), (d) and (f), the diagram is generated using as input the word predicted with the highest probability at the previous time instant, and using as first input to the sequence the same input as in the constant input case.}
  \label{fig:wikitext2-orbit-diagram}
  \vspace*{-0.3cm}
\end{figure*}

The LSTM goes through bifurcations, for instance: in $p\rightarrow \{p, p\}$, the number of fixed points increases, and, in $b\rightarrow \{p, p\}$, they transition to periodic behavior. We understand the ability to undergo bifurcations is useful, and having multiple fixed points helps the LSTM to solve the task for longer sequences than it would be able to only within the contractive region. The comparison between LSTM and stable LSTM in Table~\ref{tab:comparisson} seems to corroborate such a hypothesis. The periodic behavior in $b\rightarrow \{p, p\}$, on the other hand, is spurious, since $b$ and the other distractor symbols do not influence the outcome. The mechanism the orthogonal RNN uses to solve the task is quite distinct: they create a  cloud of fixed points and, without suffering any bifurcation, during training, these fixed points change position in the state space. These fixed points do not disappear during training, and we believe the last layer learns to weigh them differently and use the information to solve the task.

\subsection{Word-level language model}
\label{sec:word-level-language}

\begin{figure}[t]
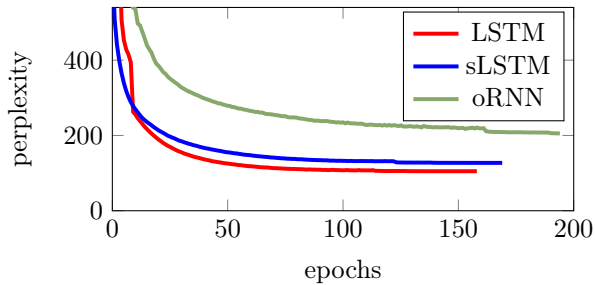

  \centering
  \vspace*{-0.2cm}
  \includepgfplots{wikitext2-history}
  \vspace*{-0.3cm}
  \caption{\textbf{Word-level language model training history.} Perplexity on validation data per epoch.}
  \vspace*{-0.5cm}
  \label{fig:wikitext2-history}
\end{figure}

We train a language model on the openly available dataset Wikitext-2. This dataset contains 600 Wikipedia articles for training (2,088,628 tokens), 60 articles for validation (217,646 tokens), and 60 articles for testing (245,569 tokens)~\citep{Merity2017}. The dataset has a vocabulary of 33,278 distinct tokens. The goal is to predict the next token in the article given the previous tokens. The state-of-the-art result for the Wikitext-2 data set when not extending the training set is a perplexity (lower is better) on the test set of 39.14~\citep{Gong2018}.

Since our implementation of the orthogonal RNN is restricted to a single layer, we restrict all the models to a single layer. Despite this our LSTM model achieves a perplexity of 99.2 on the test set, which is close to other results reported in the literature (i.e.~99.3 obtained for a standard LSTM model in~\citet{Grave2017}). The stable LSTM has a perplexity of 118.8 (which is close to~\citet{miller_stable_2019}, where they achieve a perplexity of 113.2) and the orthogonal LSTM has a perplexity of 185.3.

We trained a model that consists of an embedding layer outputting a size 800 vector to a recurrent unit (LSTM, stable LSTM, or orthogonal RNN) with hidden size 800, followed by a softmax decoder layer that outputs probabilities for each token in the vocabulary. The last layer and the embedding have tied layers. A dropout layer is included before and after the LSTM layer with a dropout rate of 0.5. The models are trained and evaluated using a cross-entropy loss. The model is trained using ADAM with betas (0.9, 0.999). The learning rate starts at $10^{-3}$ and is divided by 10 when the cost function plateaus for more than 7 epochs. The training is stopped when the learning rate drops below $10^{-7}$. The gradient norm is clipped when it exceeds 0.25. The batch size is 100, and the gradient is not backpropagated for a sequence length larger than 80.

Figure~\ref{fig:wikitext2-orbit-diagram} shows bifurcation diagrams for this task. The problem has a high-dimensional output, and the average of all outputs, i.e., $\bar x_t$, is used to condense the information. The projection is extended into two dimensions by including the first difference for the case of the LSTM bifurcation diagram visualization. Other projections, other then the average of internal states, yield similar qualitative results and are displayed in Supplementary Figures~\ref{fig:wikitext2-lstm-orbit-diagram},~\ref{fig:wikitext2-slstm-orbit-diagram} and~\ref{fig:wikitext2-ornn-orbit-diagram}. The projection in the direction of the tokens ``is'',  ``<unk>'', and ``Valkyria'', which are representative of both frequent and infrequent tokens,  are used in this case. The bifurcation diagram for different random initial states and different constant inputs is also displayed in the supplementary material.

When fed with constant input, LSTM presents a wide range of different qualitative behavior: from converging to a single fixed point, as in Figure~\ref{fig:wikitext2-orbit-diagram}(a), to going through bifurcations that result in chaotic or periodic behavior, as in Supplementary Figure~\ref{fig:wikitext2-lstm-orbit-diagram}(b). When fed with constant input, orthogonal RNNs, on the other hand, converge to a single fixed point. This was verified for several random seeds, and we show a representative example in Figure~\ref{fig:wikitext2-orbit-diagram}(c). Qualitatively, this behavior is quite similar to the stable LSTM, which also has single fixed-point, by construction.

We also show the attractors for when the input, at instant $t$, is the word predicted with the highest probability at the previous time instant, $t-1$. This scenario corresponds to using the model for sentence generation. LSTM presents, again, chaotic attractors, which are quite desirable here, since periodic behavior or a single attractor correspond to repeating a few words or only a single word after sufficient interactions, which would not generate interesting sentences. Both orthogonal RNNs and stable LSTMs present this type of undesirable behavior. Not to say they are the same, in our experiments, sLSTM usually takes more than 600 tokens to reach this steady-state periodic behavior (after <eos> most of the time) while for the oRNN this sometimes happens before 100 tokens. Also, sLSTM converges to a richer periodic behavior, which oscillates between more points (compare Figure~\ref{fig:wikitext2-orbit-diagram} (d) and (f)). This example shows how the stable LSTM, during inference, might present richer steady-state behavior due to the feedback connections.

The training history is displayed in Figure~\ref{fig:wikitext2-history}. In the previous examples, the LSTM training was unsteady and full of jumps (and also highly sensitive to initial conditions, cf. Supplementary Figure~\ref{fig:history-temporal-order}); here, the curve is very smooth.
Chaotic and periodic attractors in the constant input scenario suggest that training the LSTM goes into the region of the parameter space that yields non-contractive models. An experiment presented by~\citet[sec. 2.1]{laurent_recurrent_2017} shows that this chaotic behavior, however, has limited influence in the output. This might help explain why the training of the LSTM model progresses so smoothly. It might also help in explaining the good performance of the stable LSTM model.

\section{Discussion}

We believe the refinements presented above might be relevant in developing new and more capable optimization and regularization methods for RNNs and to understand the limitations and strengths of the many techniques that are already in use. 

Constraining the RNN eigenvalues to be smaller than one yields exponentially decreasing  transient behavior, which in turn gives rise to vanishing gradients. When the training and inference mode are the same, this constraint also has the effect of not allowing attractors to appear. Much of the current RNN literature, however, does not distinguish between the two effects, which precludes richer analysis and an improved understanding of the training mechanism.

It is also important to draw a clear distinction between the RNN behavior during training and inference. As discussed in Section~\ref{sec:rnn-information}, attractors are needed to store information in a non-volatile way. When the training and inference mode are the same, learning attractors (other than a single fixed point) require the optimization procedure to explore regions of the parameter space where the model is non-contractive. As discussed in Section~\ref{sec:smoothness}, the smoothness properties of the objective function might be very poor in these regions, which is typically very problematic for traditional optimization procedures.

The symbol classification task studied in this paper is very similar to other artificial benchmarks used to assess the ability of recurrent models to \textit{remember} information seen many time steps before. Other examples include the adding and multiplication problems~\citep{hochreiter_long_1997} and the copy memory task~\citep{arjovsky_unitary_2016}. All these problems are useful benchmarks in assessing the model's ability to learn attractors and explore regions of the parameter space where the model is non-contractive. The results in this paper seem to reinforce the effectiveness of oRNN models in solving this class of problems.

Artificial benchmarks are usually formulated in a way that makes it hard or even impossible to use teacher forcing. State-of-the-art models for many relevant tasks, such as word language modeling, however, employ such a mechanism. Teacher forcing introduces the appealing possibility of using different modes for training and inference.
In this case, the system output is fed back as an input, allowing new dynamical behavior during inference.  Hence, even if the training is limited to regions of the parameter space where the dynamic system is contractive, it might still be possible to have attractors (and hence long-term memory) during inference. 

Different training and inference behavior might help explain why the stable model by~\citet{miller_stable_2019} is successful in many tasks. On the one hand, training the model in the unstable region is challenging, so ruling out this region of the parameter space from the optimization procedure might not be so useful. On the other hand, the model is not necessarily stable during inference since the output is fed back as an input. A similar argument might also justify the good performance of the RNN model without chaotic behavior proposed by~\citet{laurent_recurrent_2017}.

\section{Related work}
\citet{sussillo_opening_2013} and~\citet{maheswaranathan_reverse_2019} give examples analyzing attractors in an RNN. However, they focus on the situation after training, while we use the attractors for the analysis of the training procedure. 

The understanding of exploding and vanishing gradients attributed  to~\citep{hochreiter_long_1997, pascanu_difficulty_2013, bengio_learning_1994}. \citet{pascanu_difficulty_2013} formulates exploding and vanishing gradients in terms of the eigenvalues of the hidden-to-hidden weight matrix but do not consider the importance of attractors in retaining the information. \citet{bengio_learning_1994} define long-term memory in terms of single fixed points and present a trade-off between vanishing gradients and retaining information robustly in the presence of noise. We extend the analysis by contemplating chaotic and periodic attractors and, also, the possibility of different training and inference modes.

\citet{pascanu_difficulty_2013} conjectures that the walls in the cost function are due to bifurcations. In our examples, we do not observe this connection between bifurcations and jumps in performance.

The notion of using entropy for a dynamical system is not new. One classical definition is that of the Kolmogorov-Sinai entropy~\citep{sinai_notion_1959}, which is related to how uncertainty (and information) increases with time in a chaotic attractor. However, this definition cannot be applied for regions of the parameter space that do not preserve volume, and hence the definition in Section~\ref{sec:rnn-information} serves our purpose better.

\textbf{Code availability}: The code for reproducing the numerical examples is available at: \href{https://github.com/antonior92/attractors-and-smoothness-RNN}{github.com/antonior92/attractors-and-smoothness-RNN}. 

\subsubsection*{Acknowledgments}
We thank Carl Jidling and Manoel Horta Ribeiro for insightful comments and suggestions. This work has been supported by the Brazilian agencies \textit{CAPES - Coordenacão de Aperfeiçoamento de Pessoal de N\'ivel Superior} (Finance Code: 001), \textit{CNPq - Conselho Nacional de Desenvolvimento Cient\'ifico e Tecnológico} (contract number: 302079/2011-4, 200931/2018-0 and 142211/2018-4) and \textit{FAPEMIG - Fundação de Amparo \`a Pesquisa do Estado de Minas Gerais} (contract number: TEC 1217/98), by the Swedish Research Council (VR) via the projects \emph{NewLEADS -- New Directions in Learning Dynamical Systems} (contract number: 621-2016-06079) and \emph{Learning flexible models for nonlinear dynamics} (contract number: 2017-03807), and by the Swedish Foundation for Strategic Research (SSF) via the project \emph{ASSEMBLE} (contract number: RIT15-0012)


\subsubsection*{References}
\small

\appendix

\twocolumn[
\vspace*{-0.5cm}
\aistatstitle{Supplementary material - \mytitle}
\vspace*{-0.5cm}
\aistatsauthor{\myauthors}
\aistatsaddress{\myaddress}
]
\thispagestyle{plain}


\renewcommand{\figurename}{Supplementary Figure}
\setcounter{figure}{0}
\setcounter{equation}{0}
\renewcommand\theequation{S\arabic{equation}}

\section{Computing the derivatives}
\label{sec:comp-deriv}

Let the Jacobian matrices of $\mathbf{f}(\mathbf{x}, \mathbf{u}; \boldsymbol{\theta})$  with respect to $\mathbf{x}$ and to $\boldsymbol{\theta}$ evaluated at the point $(\mathbf{x}_t, \mathbf{u}_t; \boldsymbol{\theta})$ be denoted, respectively, as $A_t$ and $B_t$. Similarly, the Jacobian matrices of $\mathbf{g}(\mathbf{x}, \mathbf{u}_t; \boldsymbol{\theta})$ are denoted as $C_t$ and $F_t$. A direct application of the chain rule to~(\ref{eq:nlss}) gives a recursive formula for computing the derivatives of the predicted output in relation to the parameters in the interval ${1 \le t  \le N}$:
\begin{eqnarray}
    \nonumber
  D_{t+1}
  &=& A_t D_t +B_t\text{ for }D_0 = \mathbf{0}\\
  \label{eq:sensitivity_eq}
  J_t &=& C_t D_t + F_t,
\end{eqnarray}
where we denote the Jacobian matrices of $\hat{\mathbf{y}}_t$ and $\mathbf{x}_t$ with respect to $\boldsymbol{\theta}$ respectively as $J_t$ and $D_t$.

For the cost function $V$ defined as in Eq.~(\ref{eq:cost_function}), its gradient $\nabla V$ is given by:
\begin{equation}
  \label{eq:grad_objective_fun}
  \nabla V = \tfrac{1}{N}\sum_{t=1}^{N}  J_t~ l'(\hat{\mathbf{y}}_t,\mathbf{y}_t),
\end{equation}
where $l'(\hat{\mathbf{y}}_t,\mathbf{y}_t)$ denotes the gradient of the loss function with respect to its first argument, evaluated at $(\hat{\mathbf{y}}_t,\mathbf{y}_t)$.

\section{Proofs}
\label{sec:proofs}

\subsection{Entropy lower bound}

\begin{theorem}
\label{thm:entropy}
  Let $\mathbf{f}(\cdot, \mathbf{u}_t; \boldsymbol{\theta})$ in Eq.~(\ref{eq:nlss}) be a one-to-one continuous differentiable map, and let $\mathbf{f}(\mathbf{x}, \mathbf{u}; \boldsymbol{\theta})$  be  Lipschitz in ${(\mathbf{x}, \boldsymbol{\theta})}$ with constant $L_f$ on a compact and convex set $\Omega = (\Omega_{\mathbf{x}},\Omega_{\mathbf{u}}, \Omega_{\boldsymbol{\theta}})$. Then the entropy $H_t$ in Eq.~\eqref{ref:entropy_state} with $x_t \in \mathbb{R}^{N_x}$ satisfies:
\begin{equation}
\label{eq:entropy_increment_upper_bound_2}
   H_{t} + N_x \log L_f  \leq H_{t+1}.
\end{equation}
\end{theorem}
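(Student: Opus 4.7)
The plan is to derive $H_{t+1}$ as a closed-form modification of $H_t$ via the change of variables formula for probability densities, and then to control the modifying term with $L_f$. Since $\mathbf{f}(\cdot, \mathbf{u}_t; \boldsymbol{\theta})$ is a one-to-one continuously differentiable map, the pushforward density satisfies $p_{t+1}(\mathbf{f}(\mathbf{x}_t)) = p_t(\mathbf{x}_t)/|\det J_f(\mathbf{x}_t)|$, where $J_f$ denotes the Jacobian with respect to $\mathbf{x}$. Substituting this into the definition of $H_{t+1}$ and performing the change of variables $\mathbf{y}=\mathbf{f}(\mathbf{x}_t)$ (legitimated by injectivity and the inverse function theorem on the image), the $-\int p_t \log p_t$ piece reproduces $H_t$ exactly, yielding the identity
\[ H_{t+1} = H_t + \mathbb{E}_{\mathbf{x}_t \sim p_t}\bigl[\log |\det J_f(\mathbf{x}_t)|\bigr]. \]

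The remaining task is therefore to bound $\mathbb{E}[\log |\det J_f|]$ in terms of $N_x \log L_f$. The Lipschitz hypothesis in $(\mathbf{x}, \boldsymbol{\theta})$, restricted to variations in $\mathbf{x}$ alone, controls $\|J_f\|_{\mathrm{op}}$ from above by $L_f$, which in turn bounds every singular value of $J_f$ from above by $L_f$. Multiplying these singular values gives $|\det J_f|=\prod_i \sigma_i(J_f) \le L_f^{N_x}$ and hence $\mathbb{E}[\log |\det J_f|] \le N_x \log L_f$. Following this natural route, one concludes $H_{t+1} \le H_t + N_x \log L_f$, which is in the opposite direction to the inequality stated in the theorem.

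The main obstacle is thus the direction of the bound. To deliver the claimed lower bound via the identity above one would need $|\det J_f| \ge L_f^{N_x}$, i.e.\ $\sigma_{\min}(J_f) \ge L_f$, which is a co-Lipschitz condition on $\mathbf{f}$ not implied by the standard Lipschitz assumption; a simple counter-example is $\mathbf{f}(x_1,x_2)=(L_f x_1, \varepsilon x_2)$ with $\varepsilon \ll L_f$, which satisfies all the theorem's hypotheses yet makes $H_{t+1}-H_t = \log L_f + \log \varepsilon$ arbitrarily more negative than $N_x \log L_f$. The only route I see to the theorem exactly as stated is to reinterpret $L_f$ as a lower bound on $\sigma_{\min}(J_f)$ (equivalently, a Lipschitz constant for $\mathbf{f}^{-1}$); under that reading the change-of-variables identity immediately yields $|\det J_f| \ge L_f^{N_x}$ and the inequality follows. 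Absent such a reinterpretation, the proof plan above establishes only the upper-bound counterpart $H_{t+1}\le H_t+N_x\log L_f$.
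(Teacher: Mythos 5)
Your derivation follows essentially the same route as the paper's own proof: the change-of-variables identity $H_{t+1} = H_t + \mathbb{E}_{p_{t+1}}\left[\log\left|\det J_f\right|\right]$, followed by a bound on $\log\left|\det J_f\right|$ in terms of $L_f$. The paper bounds the determinant via Hadamard's inequality applied to the columns of the Jacobian, each of which has norm at most $\|J_f\|_2 \le L_f$; you bound it via the singular values. These are the same estimate, $\log\left|\det J_f\right| \le N_x \log L_f$, and you are right that it points the wrong way: substituted into the identity it yields $H_{t+1} \le H_t + N_x \log L_f$, not the claimed lower bound. The paper's proof derives exactly this intermediate upper bound and then, in its final line, asserts the reversed inequality $H_t + N_x \log L_f \le H_{t+1}$; that last step does not follow from what precedes it. So the gap you identified is in the paper's proof, not in your argument, and you have not missed an idea that would rescue the stated direction.

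Your counterexample $\mathbf{f}(x_1,x_2)=(L_f x_1,\varepsilon x_2)$ is valid: it satisfies every hypothesis (injective, continuously differentiable, Lipschitz with constant $L_f$) yet makes $H_{t+1}-H_t=\log L_f+\log\varepsilon$ arbitrarily smaller than $N_x\log L_f$, so the theorem as literally stated is false. Your proposed repair is the correct one: the lower bound $H_{t+1}\ge H_t+N_x\log \ell$ holds when $\ell$ is a lower bound on $\sigma_{\min}(J_f)$, equivalently when $1/\ell$ is a Lipschitz constant for $\mathbf{f}^{-1}$; for orthogonal or unitary hidden-to-hidden maps the two constants coincide, which is presumably the regime the authors had in mind. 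Note also that the direction matters for the surrounding discussion: the claim that $L_f>1$ forces the entropy to increase (item 3 after Eq.~\eqref{eq:entropy_increment_upper_bound}) requires the co-Lipschitz reading, since a map can have Lipschitz constant larger than one while contracting volume, exactly as your example shows.
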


\begin{proof}
Under the assumption that $ \mathbf{f}(\cdot, \mathbf{u}_t; \boldsymbol{\theta})$ is a 1-1 continuous differentiable map (cf.~Theorems 3-13 and 3-14 by~\citet{spivak_calculus_1998}), applying the change of variable $\mathbf{x}_{t+1} = \mathbf{f}(\mathbf{x}_t, \mathbf{u}_t; \boldsymbol{\theta})$ we get:
\begin{scriptsize}
\begin{align*}
    H_t &=- \int_{\mathbf{f}(\Omega_x, \mathbf{u}_t; \boldsymbol{\theta})} p_{t+1}(\mathbf{x}_{t+1}) \log \left(p_{t+1}(\mathbf{x}_{t+1}) \left|\det \frac{\partial \mathbf{x}_{t+1}}{\partial \mathbf{x}_{t}}\right|\right) {\rm d} \mathbf{x}_{t+1}\\
    &= H_{t+1} - \int_{\mathbf{f}(\Omega_x, \mathbf{u}_t; \boldsymbol{\theta})} p_{t+1}(\mathbf{x}_{t+1}) \log \left(\left|\det \frac{\partial \mathbf{x}_{t+1}}{\partial \mathbf{x}_{t}}\right|\right) d \mathbf{x}_{t+1},
\end{align*}
\end{scriptsize}

\noindent
where $\frac{\partial \mathbf{x}_{t+1}}{\partial \mathbf{x}_{t}}$ is the Jacobian matrix of $\mathbf{f}(\cdot, \mathbf{u}_t; \boldsymbol{\theta})$. Using Hadamard's inequality:
\begin{equation}
   \log \left|\det \frac{\partial \mathbf{x}_{t+1}}{\partial \mathbf{x}_{t}}\right| \leq \sum_{i=1}^{N_x} \log \|\mathbf{v}_i\|_2,
\end{equation}

\noindent
where $\mathbf{v}_i$ is the $i$-th column of the Jacobian matrix and  $\log\|\mathbf{v}_i\|_2 \le \log\left\|\frac{\partial \mathbf{x}_{t+1}}{\partial \mathbf{x}_{t}}\right\|_2 \le \log L_f$. Hence, we have that
\begin{equation}
   H_{t} + N_x \log L_f  \leq H_{t+1}.
\end{equation}

\end{proof}

\subsection{Preliminary results}

\begin{lem}
\label{thm:lipschtz_sum_prod}
For $i = 1,\cdots, n$, let $\mathbf{f}_i$ be a Lipschitz function on~$\Omega$ with constants $L_i$. Then,
\begin{enumerate}
    \item $\sum_{i=1}^n\mathbf{f}_i$ is also a Lipschitz function on~$\Omega$ with Lipschitz constant upper bounded by $\sum_{i=1}^n L_i$;
    \item if, additionally, $\mathbf{f}_i$ are bounded by $M_i$ on~$\Omega$, then  $\prod_{i=1}^n\mathbf{f}_i$ is also a Lipschitz function on~$\Omega$ with Lipschitz constant upper bounded by $\left( \sum_{i=1}^n M_1\cdots M_{i-1}L_i M_{i+1}\cdots M_{n} \right)$.
\end{enumerate}
\end{lem}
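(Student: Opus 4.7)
\textbf{Proof plan for Lemma~\ref{thm:lipschtz_sum_prod}.}
The plan is to handle both parts by a direct triangle-inequality argument; part 2 will require a telescoping step, for which I would use induction on $n$.

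For part 1, I would simply unfold the sum and apply the triangle inequality: for any $\mathbf{x}, \mathbf{w} \in \Omega$,
\begin{equation*}
\Bigl\| \sum_{i=1}^n \mathbf{f}_i(\mathbf{x}) - \sum_{i=1}^n \mathbf{f}_i(\mathbf{w}) \Bigr\| \;\le\; \sum_{i=1}^n \|\mathbf{f}_i(\mathbf{x}) - \mathbf{f}_i(\mathbf{w})\| \;\le\; \Bigl(\sum_{i=1}^n L_i\Bigr) \|\mathbf{x}-\mathbf{w}\|,
\end{equation*}
which gives the claimed constant. No subtlety here beyond linearity of the sum and the definition of Lipschitz continuity.

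For part 2, I would proceed by induction on $n$. The base case $n=1$ is immediate. For the inductive step, the key identity is the telescoping decomposition
\begin{equation*}
\prod_{i=1}^n \mathbf{f}_i(\mathbf{x}) - \prod_{i=1}^n \mathbf{f}_i(\mathbf{w}) \;=\; \sum_{k=1}^n \Bigl(\prod_{i<k} \mathbf{f}_i(\mathbf{x})\Bigr)\bigl(\mathbf{f}_k(\mathbf{x}) - \mathbf{f}_k(\mathbf{w})\bigr)\Bigl(\prod_{i>k} \mathbf{f}_i(\mathbf{w})\Bigr),
\end{equation*}
which, after taking norms, using submultiplicativity, the boundedness $\|\mathbf{f}_i\|\le M_i$, and the Lipschitz bound on the $k$-th factor, yields exactly the stated bound $\sum_{k=1}^n M_1 \cdots M_{k-1} L_k M_{k+1} \cdots M_n$. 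Alternatively, I could do only the $n=2$ case explicitly and then induct, treating $\mathbf{f}_1\cdots \mathbf{f}_{n-1}$ as a single function bounded by $M_1\cdots M_{n-1}$ and Lipschitz with constant $\sum_{k<n} M_1\cdots M_{k-1}L_k M_{k+1}\cdots M_{n-1}$ by the inductive hypothesis; applying the $n=2$ bound and simplifying recovers the full sum.

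The only mild subtlety is that the ``product'' here must make sense when the $\mathbf{f}_i$ are vector- or matrix-valued, as will be needed for the subsequent application to Jacobian matrices. In that case one reads $\prod_i \mathbf{f}_i$ as matrix multiplication and $M_i$ and $L_i$ as bounds in the induced operator norm; both the telescoping identity and submultiplicativity still apply verbatim, so the argument goes through unchanged. I do not expect a real obstacle here — the proof is essentially bookkeeping once the telescoping identity is written down.
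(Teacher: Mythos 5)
Your proof is correct, and in fact the paper states this lemma without any proof at all, so your triangle-inequality argument for the sum and the telescoping decomposition (with submultiplicativity and the bounds $M_i$) for the product supply exactly the standard argument the authors implicitly rely on. Your closing remark is also on point: the lemma is later applied to products of Jacobian matrices in the proof of Theorem~\ref{thm:lipschitz}(b), so reading the product as matrix multiplication with $L_i$, $M_i$ taken in the induced operator norm is precisely the form in which the result is needed, and the telescoping identity goes through verbatim in that setting.
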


\begin{lem}
\label{thm:condition_on_the_loss}
Let us define the properties:
\begin{enumerate}
    \item $|l(\hat{\mathbf{y}}, \mathbf{y})-l(\hat{\mathbf{z}}, \mathbf{y})| < \left(K_1 \|\mathbf{y}\| + K_2\max(\|\hat{\mathbf{y}}\|, \|\hat{\mathbf{z}}\|)\right)\|\hat{\mathbf{y}} - \hat{\mathbf{z}}\|$,
    \item $l'(\hat{\mathbf{y}}, \mathbf{y}) = \Psi(\hat{\mathbf{y}}) - K_3\mathbf{y} $,
\end{enumerate}
where $l'(\hat{\mathbf{y}},\mathbf{y})$ denotes the first derivative of the loss function with respect to its first argument, evaluated at $(\hat{\mathbf{y}},\mathbf{y})$.
There exist constants $K_1$, $K_2$, and $K_3$ and a function $\Psi$ that is Lipschitz continuous with constant $K_4$ and for which $\Psi$  such that these properties hold for both: a) $l(\hat{\mathbf{y}}, \mathbf{y}) = \|\hat{\mathbf{y}} - \mathbf{y}\|^2$; b) $l(\hat{\mathbf{y}}, \mathbf{y}) = -\mathbf{y}^T \log (\sigma(\hat{\mathbf{y}})) - (1-\mathbf{y})^T \log (1 - \sigma(\hat{\mathbf{y}}))$. In (b), the sigmoid function, $\sigma(x) = \frac{1}{1+\exp(-x)}$, and the logarithm are evaluated element-wise. We assume in (b) that the elements in $\mathbf{y}$ are either 0 or 1.

\end{lem}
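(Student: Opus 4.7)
The plan is to treat the two listed loss functions separately and, within each, verify Property~2 first (which pins down $\Psi$ and $K_{3}$ by direct differentiation) and then Property~1 (which follows from an algebraic identity for the squared loss and from the fundamental theorem of calculus for the cross-entropy). Both properties are preserved under inflation of the constants, so it suffices to exhibit explicit finite $K_{1},K_{2},K_{3},K_{4}$ and a valid $\Psi$ in each case.

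For the squared loss $l(\hat{\mathbf{y}},\mathbf{y})=\|\hat{\mathbf{y}}-\mathbf{y}\|^{2}$, direct differentiation yields $l'(\hat{\mathbf{y}},\mathbf{y})=2\hat{\mathbf{y}}-2\mathbf{y}$, so Property~2 holds with $\Psi(\hat{\mathbf{y}})=2\hat{\mathbf{y}}$, $K_{3}=2$, and $K_{4}=2$. For Property~1 I would exploit the identity $\|a\|^{2}-\|b\|^{2}=(a-b)^{\top}(a+b)$ with $a=\hat{\mathbf{y}}-\mathbf{y}$ and $b=\hat{\mathbf{z}}-\mathbf{y}$, giving $l(\hat{\mathbf{y}},\mathbf{y})-l(\hat{\mathbf{z}},\mathbf{y})=(\hat{\mathbf{y}}-\hat{\mathbf{z}})^{\top}(\hat{\mathbf{y}}+\hat{\mathbf{z}}-2\mathbf{y})$; Cauchy--Schwarz together with $\|\hat{\mathbf{y}}\|+\|\hat{\mathbf{z}}\|\le 2\max(\|\hat{\mathbf{y}}\|,\|\hat{\mathbf{z}}\|)$ then delivers the claim with $K_{1}=K_{2}=2$.

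For the sigmoid cross-entropy, standard differentiation (using $(\log\sigma(x))'=1-\sigma(x)$ and $(\log(1-\sigma(x)))'=-\sigma(x)$ componentwise) collapses the gradient to $l'(\hat{\mathbf{y}},\mathbf{y})=\sigma(\hat{\mathbf{y}})-\mathbf{y}$, so Property~2 holds with $\Psi(\hat{\mathbf{y}})=\sigma(\hat{\mathbf{y}})$ and $K_{3}=1$; since the diagonal Jacobian of $\sigma$ has entries $\sigma(\hat y_{i})(1-\sigma(\hat y_{i}))\le 1/4$, the map $\Psi$ is Lipschitz with $K_{4}=1/4$. For Property~1 I would parameterize the segment joining $\hat{\mathbf{z}}$ to $\hat{\mathbf{y}}$ and apply the fundamental theorem of calculus to write
\[
l(\hat{\mathbf{y}},\mathbf{y})-l(\hat{\mathbf{z}},\mathbf{y})=\int_{0}^{1}\bigl(\sigma(\hat{\mathbf{z}}+s(\hat{\mathbf{y}}-\hat{\mathbf{z}}))-\mathbf{y}\bigr)^{\top}(\hat{\mathbf{y}}-\hat{\mathbf{z}})\,\mathrm{d}s,
\]
then bound the integrand in 2-norm using $\|\sigma(\cdot)\|\le\sqrt{N_{y}}$ (each coordinate lies in $(0,1)$).

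The main obstacle is the last step for the sigmoid case: the natural bound one obtains is $(\|\mathbf{y}\|+\sqrt{N_{y}})\|\hat{\mathbf{y}}-\hat{\mathbf{z}}\|$, and the additive constant $\sqrt{N_{y}}$ does not by itself sit inside the template $K_{2}\max(\|\hat{\mathbf{y}}\|,\|\hat{\mathbf{z}}\|)\,\|\hat{\mathbf{y}}-\hat{\mathbf{z}}\|$. The clean way to absorb it is to invoke compactness of the working domain $\Omega$: on any compact subset where $\max(\|\hat{\mathbf{y}}\|,\|\hat{\mathbf{z}}\|)\ge\delta>0$ it suffices to take $K_{2}=\sqrt{N_{y}}/\delta$; otherwise one enlarges $K_{1}$ by a constant factor reflecting the bound on $\|\mathbf{y}\|$ over $\Omega_{\mathbf{u}}$. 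Once this bookkeeping is settled the four constants are exhibited explicitly in both cases and Lemma~\ref{thm:condition_on_the_loss} follows.
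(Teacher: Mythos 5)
Your treatment of Property~2 and of the squared loss matches the paper's: differentiation gives $\Psi(\hat{\mathbf{y}})=2\hat{\mathbf{y}}$, $K_3=2$ for (a) and $\Psi=\sigma$, $K_3=1$ for (b), and for (a) your identity $(\hat{\mathbf{y}}-\hat{\mathbf{z}})^{\top}(\hat{\mathbf{y}}+\hat{\mathbf{z}}-2\mathbf{y})$ plus Cauchy--Schwarz is just a cleaner packaging of the paper's expansion of $\|\hat{\mathbf{y}}\|^2-\|\hat{\mathbf{z}}\|^2-2\mathbf{y}^{\top}(\hat{\mathbf{y}}-\hat{\mathbf{z}})$, yielding the same $K_1=K_2=2$. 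Where you genuinely diverge is Property~1 for the cross-entropy: you integrate the gradient along the segment, whereas the paper works coordinate-wise, using monotonicity of $\log\sigma$ to show $0\le\log\sigma(\hat{y})-\log\sigma(\hat{z})\le\hat{y}-\hat{z}$ (and the analogous bound for $\log(1-\sigma)$), and then uses the fact that each entry of $\mathbf{y}$ is $0$ or $1$ so only one of the two terms is active per coordinate, giving directly $|l(\hat{\mathbf{y}},\mathbf{y})-l(\hat{\mathbf{z}},\mathbf{y})|\le\|\hat{\mathbf{y}}-\hat{\mathbf{z}}\|_1\le\sqrt{N_y}\,\|\hat{\mathbf{y}}-\hat{\mathbf{z}}\|_2$. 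Your FTC route gives essentially the same thing: since $\sigma(\cdot)\in(0,1)$ and $y_i\in\{0,1\}$, the integrand satisfies $\|\sigma(\cdot)-\mathbf{y}\|_\infty\le 1$, so you get $\sqrt{N_y}\,\|\hat{\mathbf{y}}-\hat{\mathbf{z}}\|$ directly, without the looser split into $\|\mathbf{y}\|+\sqrt{N_y}$.

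The one place your write-up is not sound is the final ``absorption'' step. The compactness device requiring $\max(\|\hat{\mathbf{y}}\|,\|\hat{\mathbf{z}}\|)\ge\delta>0$ has no justification (the predictions can be arbitrarily close to zero), and enlarging $K_1$ only helps if $\|\mathbf{y}\|$ is bounded away from zero, which fails for the all-zeros label. The honest resolution is either to note that whenever $\mathbf{y}\neq\mathbf{0}$ its binary entries force $\|\mathbf{y}\|\ge1$, so $K_1=\sqrt{N_y}$ absorbs the constant, or to observe that what the paper actually proves and uses is the uniform bound $\sqrt{N_y}\,\|\hat{\mathbf{y}}-\hat{\mathbf{z}}\|$: in the proof of Theorem~1(a) the lemma only enters through a factor of the form $K_1L_y+K_2M(t)$, and a constant Lipschitz bound is at least as strong for that purpose. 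In other words, your worry exposes a slight mismatch between the lemma's stated template and the cross-entropy case that the paper itself resolves by proving the stronger constant bound rather than the literal template; your proposed patch, as written, does not close it, but this simpler observation does.
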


\begin{proof}
For (a) and (b), property 2 follows from differentiating $l(\hat{\mathbf{y}}, \mathbf{y})$ with respect to its first argument. For (a), $\phi(\hat{\mathbf{y}}) = 2\hat{\mathbf{y}}$ and $K_3=2$; for (b),  $\phi(\hat{\mathbf{y}}) = \sigma(\hat{\mathbf{y}})$ and $K_3=1$.

For loss function (a), property 1 holds due to the following inequalities 
\begin{footnotesize}
 \begin{align*}
     \left|\|\hat{\mathbf{y}}- \mathbf{y}\|^2-\|\hat{\mathbf{z}} - \mathbf{y}\|^2\right| = \left|\|\hat{\mathbf{y}}\|^2-\|\hat{\mathbf{z}}\|^2 -2\mathbf{y}^T\left(\hat{\mathbf{y}} - \hat{\mathbf{z}}\right)\right| \leq &
     \\
     \leq 
     \left|\left(\|\hat{\mathbf{y}}\|-\|\hat{\mathbf{z}}\|\right)\left(\|\hat{\mathbf{y}}\|+\|\hat{\mathbf{z}}\|\right) -2\mathbf{y}^T\left(\hat{\mathbf{y}} - \hat{\mathbf{z}}\right)\right| \leq &
     \\
     \leq 
     \left(2\|\mathbf{y}\| + 2\max\left(\|\hat{\mathbf{y}}\|, \|\hat{\mathbf{z}}\|\right)\right)  \|\hat{\mathbf{y}} - \hat{\mathbf{z}}\|. &
\end{align*}
\end{footnotesize}

\noindent
For loss function (b), let $\hat{y}$ and $\hat{z}$ be two scalar values. Furthermore, consider, without loss of generality, that $\hat{y}\geq \hat{z}$. Then:
\begin{footnotesize}
\begin{equation}
    \label{eq:ineq_cross_entropy_1}
    0 \leq \log(\sigma(\hat{y})) - \log(\sigma(\hat{z})) = (\hat{y} - \hat{z}) - \log\left(\frac{\exp(\hat{y}) + 1}{\exp(\hat{z}) + 1} \right) \leq (\hat{y} - \hat{z})
\end{equation}
\end{footnotesize}
The first inequality follows from the fact that $\log(\sigma(\cdot))$ is a monotonically increasing function. The last inequality holds because $\log\left(\frac{\exp(\hat{y}) + 1}{\exp(\hat{z}) + 1} \right)\geq 0$. Analogously, 
\begin{equation}
\begin{aligned}
    \label{eq:ineq_cross_entropy_2}
    0 & \leq \log(1-\sigma(\hat{z})) - \log(1-\sigma(\hat{y}))\\& = (\hat{y} - \hat{z}) - \log\left(\frac{\exp(-\hat{z}) + 1}{\exp(-\hat{y}) + 1} \right)\\& \leq (\hat{y} - \hat{z}).
\end{aligned}
\end{equation}

For  $l(\mathbf{y}, \hat{\mathbf{y}})$ defined as in (b),  it follows from Eq.~(\ref{eq:ineq_cross_entropy_1}), Eq.~(\ref{eq:ineq_cross_entropy_2}), and the fact that $\mathbf{y}$ contains only values in the set $\{0, 1\}$, that:
\begin{equation}
    |l(\hat{\mathbf{y}}, \mathbf{y})-l(\hat{\mathbf{z}}, \mathbf{y})| \leq \|\hat{\mathbf{y}} - \hat{\mathbf{z}} \|_1 \leq \sqrt{N_y}\|\hat{\mathbf{y}} - \hat{\mathbf{z}}\|_2
\end{equation}
where $\|\cdot\|_1$ and $\|\cdot\|_2$ denote $l_1$ and $l_2$ norm of a vector and  $N_y$ is the number of outputs.
\end{proof}

\subsection{Proof of Theorem \ref{thm:lipschitz} (a)}

Assume two different trajectories resulting from simulating the system in Eq~(\ref{eq:nlss}) with parameters and initial conditions $(\mathbf{x}_0, \boldsymbol{\theta})$ and $(\mathbf{w}_0, \boldsymbol{\phi})$, respectively. We denote the corresponding trajectories by $\mathbf{x}$ and $\mathbf{w}$. Let us call:
\begin{equation}
    \|\Delta \hat{\mathbf{y}}_t\| = \| \mathbf{g}(\mathbf{x}_t, \mathbf{u}_t; \boldsymbol{\theta}) - \mathbf{g}(\mathbf{w}_t, \mathbf{u}_t; \boldsymbol{\phi})\| .
\end{equation}

\noindent
Since $\mathbf{f}$ and $\mathbf{g}$ are Lipschitz in ${(\mathbf{x}, \boldsymbol{\theta})}$ we have:
\begin{eqnarray*}
\|\mathbf{f}(\mathbf{x}, \mathbf{u}_t, \boldsymbol{\theta}) - \mathbf{f}(\mathbf{w},  \mathbf{u}_t, \boldsymbol{\phi})\|^2 \le L_f^2 \left(\|\mathbf{x} - \mathbf{w}\|^2 + \|\boldsymbol{\theta} - \boldsymbol{\phi}\|^2 \right), \\
\|\mathbf{g}(\mathbf{x}, \mathbf{u}_t, \boldsymbol{\theta}) - \mathbf{g}(\mathbf{w}, \mathbf{u}_t, \boldsymbol{\phi})\|^2 \le L_g^2 \left(\|\mathbf{x} - \mathbf{w}\|^2 + \|\boldsymbol{\theta} - \boldsymbol{\phi}\|^2 \right),
\end{eqnarray*}
for all $(\mathbf{x}, \mathbf{u}_t, \boldsymbol{\theta})$ and $(\mathbf{w}, \mathbf{u}_t, \boldsymbol{\phi})$ in $(\Omega_{\mathbf{x}},\Omega_{\mathbf{u}}, \Omega_{\boldsymbol{\theta}})$. Applying these relations recursively we get that:
\begin{equation*}
    \|\Delta \hat{\mathbf{y}}_t\|^2 \le L_g^2 L_f^{2t} \|\boldsymbol{x}_0 - \boldsymbol{w}_0 \|^2 + L_g^2 \left(\sum_{\ell=0}^t L_f^{2\ell}\right)\|\boldsymbol{\theta} - \boldsymbol{\phi}\|^2.
\end{equation*}

\noindent
Since $L_f$ is positive, the constant multiplying the second term in the above equation is always larger than the constant multiplying the first term. Hence, taking the square root on both sides of the above inequality and after simple manipulations, we obtain:
\begin{equation}
    \label{eq:ineq_dk}
    \|\Delta \hat{\mathbf{y}}_t\| \le  L_gS(t)\|[\boldsymbol{\theta}, \boldsymbol{x}_0]^T - [\boldsymbol{\phi}, \boldsymbol{w}_0]^T \|,
\end{equation}
where
\begin{equation}
    \label{eq:Sk}
    S(t) = \sqrt{\sum_{\ell=0}^t L_f^{2\ell}} =     
    \begin{cases}
    \sqrt{t + 1} & \text{ if }L_f = 1, \\
    \sqrt{\frac{L_f^{2t+2} - 1}{L_f^2 - 1}} & \text{ if }L_f \not= 1.
    \end{cases}
\end{equation}

Since $\Omega$ is compact and $\hat{\mathbf{y}}_t$ is a (Lipschitz) continuous function of the parameters and initial conditions, then $\hat{\mathbf{y}}_t$ is bounded in $\Omega$, i.e.~$\|\hat{\mathbf{y}}_t\| \le M(t)$. Furthermore, it follows from Eq.~(\ref{eq:ineq_dk}) and from the existence of an invariant set\footnote{There are multiple ways to guarantee the invariant set premise will hold, but a very simple way is to just choose $\mathbf{f}$ such that $\mathbf{f}(\mathbf{0}, \mathbf{u}_t; \mathbf{0}) = \textbf{0}$. In this case, $\{\mathbf{0}\}$ is an invariant set and if $\Omega_{\boldsymbol{\theta}}$ contains this point the premise is satisfied. For this specific case, one can just choose $[\boldsymbol{\phi}, \boldsymbol{w}_0] = \mathbf{0}$ and it follows from Eq.~(\ref{eq:ineq_dk}) that $\|\hat{\mathbf{y}}_t\| \le  L_gS(t)\|[\boldsymbol{\theta}, \boldsymbol{x}_0]\| = \mathcal{O}(S(t))$. The more general case, for any invariant set, follows from a similar deduction.} in $\Omega$ that $M(t) = \mathcal{O}(S(t))$.

The following inequality follows from Eq.~(\ref{eq:cost_function}), and from property~1 in  Lemma~\ref{thm:condition_on_the_loss}:
\begin{equation}
    \label{eq:ineq_V1}
    |V(\boldsymbol{\theta}, \boldsymbol{x}_0) - V(\boldsymbol{\phi}, \boldsymbol{w}_0)| \le  \tfrac{1}{N}\sum_{t=1}^{N}  (K_1 L_y + K_2 M(t)) \|\Delta \hat{\mathbf{y}}_t\|,
\end{equation}

\noindent
where ${L_y = \max_{1 \leq t \leq N} \|\mathbf{y}_t\|}$. Now, assembling Eq.~(\ref{eq:ineq_V1}) and Eq.~(\ref{eq:ineq_dk}) we obtain
\begin{equation*}
|V(\boldsymbol{\theta}, \boldsymbol{x}_0) - V(\boldsymbol{\phi}, \boldsymbol{w}_0)| \le \\
L_{V_1} \left\|[\boldsymbol{x}_0, \boldsymbol{\theta}]^T - [\boldsymbol{w}_0, \boldsymbol{\phi}]^T \right\|,
\end{equation*}

\noindent
for ${L_{V} = \left(\tfrac{L_g}{N}\sum_{t=1}^{N} (K_1L_y + K_2M(t))S(t)\right)}$. The asymptotic analysis of this expression with respect to $N$ yields Eq.~(\ref{eq:asymptotic_L}).

\subsection{Proof of Theorem \ref{thm:lipschitz} (b)}

It follows from Eq.~(\ref{eq:grad_objective_fun}), and from property~2 in Lemma~\ref{thm:condition_on_the_loss}, that:
\begin{multline}
    \label{eq:ineq_grad_V}
    \|\nabla V(\boldsymbol{\theta}, \boldsymbol{x}_0) - \nabla V(\boldsymbol{\phi}, \boldsymbol{w}_0)\| \\  \le      \tfrac{1}{N}\sum_{t=1}^{N}  K_3L_y \|\Delta J_t\|
    + \|\Delta (J_t \Psi\left(\hat{\mathbf{y}}_t)\right)\| ,
\end{multline}

\noindent
where we have used the notation $\Delta J_t$  to denote the difference between $J_t$ evaluated at $(\boldsymbol{\theta}, \boldsymbol{x}_0)$ and $(\boldsymbol{\phi}, \boldsymbol{w}_0)$. Analogously, $\Delta (J_t \Psi(\hat{\mathbf{y}}_t))$ denotes the difference between $J_t\Psi(\hat{\mathbf{y}}_t)$ evaluated at the two distinct points, where $\Psi$ is the Lipschitz continuous function with constant $K_4$ defined in Lemma~\ref{thm:condition_on_the_loss}.

From Eq.~(\ref{eq:sensitivity_eq}) it follows that:
\begin{small}
\begin{equation}
    \label{eq:close_formula}
    J_t = C_t \sum_{\ell=1}^{t} \left(\prod_{j=1}^{t-\ell} A_{t-j+1}\right) B_\ell  + F_t;
\end{equation}
\end{small}

\noindent
Since the Jacobian of $\mathbf{f}$ is Lipschitz with Lipschitz constant $L_f'$, it follows that:
\begin{eqnarray}
\|\Delta A_j\|^2 \le (L_f')^2\left(\|\mathbf{x}_j - \mathbf{w}_j\|^2 + \|\boldsymbol{\theta} - \boldsymbol{\phi}\|^2\right).
\end{eqnarray}
Using a procedure analogous to the one used to obtain Eq.~(\ref{eq:ineq_dk}), it follows that:
\begin{eqnarray}
\label{eq:ineq_aj}
\|\Delta A_j\| \le L_f' S(j)~~\|[\boldsymbol{\theta}, \boldsymbol{x}_0]^T - [\boldsymbol{\phi}, \boldsymbol{w}_0]^T \|,
\end{eqnarray}
where $S(j)$ is defined as in Eq.~(\ref{eq:Sk}). An identical formula holds for $B_j$ and a similar formula, replacing $L_f'$ with $L_g'$, holds for $C_j$ and $F_j$. 

Since $\mathbf{f}$ and $\mathbf{g}$ are Lipschitz with Lipschitz constants $L_f$ and $L_g$ it follows that $\|A_j\| \le L_f$, $\|B_j\| \le L_f$, $\|C_j\| \le  L_g$ and $\|F_j\| \le L_g$. Hence, it follows from Eq.~(\ref{eq:ineq_dk}), Eq.~(\ref{eq:close_formula}), Eq.~(\ref{eq:ineq_aj}) and the repetitive application of  Lemma~\ref{thm:lipschtz_sum_prod} that $\|\Delta J_t\|$ and $\|\Delta (J_t \Psi (\hat{\mathbf{y}}_t))\|$ 
are upper bounded by $\|[\boldsymbol{\theta}, \boldsymbol{x}_0]^T - [\boldsymbol{\phi}, \boldsymbol{w}_0]^T \|$ multiplied by the following constants:
\begin{eqnarray*}
\nonumber
L_{J, t} &=&\sum_{\ell=1}^t P(t, \ell) + L_g' S(t),\\
L_{J\hat{y}, t} &=& \sum_{\ell=1}^t Q(t, \ell)  +  T(t)S(t),
\end{eqnarray*}

\noindent
where $T(t) = K_4(L_g' M(t) + L_g^2)$ and:
\begin{eqnarray*}
    P(t, \ell) &=& L_f^{t-\ell}\left(L_gL_f'\sum_{j=\ell}^t S(j) +  L_fL_g'S(t)\right), \\
     Q(t, \ell) &=&  L_f^{t-\ell}\left(K_4 M(t)L_gL_f'\sum_{j=\ell}^t S(j) +  L_fT(t)S(t)\right).
\end{eqnarray*}

\noindent
Hence,
\begin{equation*}
    \|\nabla V(\boldsymbol{\theta}, \boldsymbol{x}_0) - \nabla V(\boldsymbol{\phi}, \boldsymbol{w}_0)\|  \le      L_{V}' \|[\boldsymbol{\theta}, \boldsymbol{x}_0]^T - [\boldsymbol{\phi}, \boldsymbol{w}_0]^T \|,
\end{equation*}

\noindent
where:
\begin{eqnarray*}
    L_{V}' = \tfrac{1}{N}\sum_{t=1}^{N} \left(K_3L_yL_{J, t} + L_{J\hat{y}, t}\right).
\end{eqnarray*}
Combining the above, the asymptotic analysis of $L_{V}'$ results in Eq.~(\ref{eq:asymptotic_L'}).

\section{Chaotic LSTM example}
\label{sec:chaotic-lstm-details}

An LSTM with zero input and without bias terms is considered:
\begin{align}
    c_t & = \underbrace{\sigma \left( W_\mathrm{hf} h_{t-1} \right)}_{\text{forget gate}} * c_{t-1} + \underbrace{\sigma \left( W_\mathrm{hi} h_{t-1} \right)}_{\text{input gate}} * \underbrace{\tanh  \left( W_\mathrm{hg} h_{t-1} \right)}_{\text{cell gate}} \\
    h_t & = \underbrace{\sigma \left( W_\mathrm{ho} h_{t-1} \right)}_{\text{output gate}} * \tanh(c_t).
\end{align}
The sigmoids $\sigma(\cdot)$ and hyperbolic tangents  $\tanh(\cdot)$ operate element-wise. The symbol $*$ indicates an element-wise product. The hidden and cell state have initial conditions $h_0 = c_0 = \begin{bmatrix} 0.5 & 0.5 \end{bmatrix}^\mathsf{T}$. The hidden state $h_t$ is also the output of the model.
The weight matrices are put equal to $W_\mathrm{hi} = \begin{bmatrix} -1 &  4 \\  -3 & -2 \end{bmatrix}$, $W_\mathrm{hf} = \begin{bmatrix} -2 &  6 \\  0 & -6 \end{bmatrix}$, $W_\mathrm{hg} = \begin{bmatrix} -1 & -6 \\  6 & -9 \end{bmatrix}$, and $W_\mathrm{ho} = \begin{bmatrix}  4 &  1 \\ -9 &  7 \end{bmatrix}$ to generate the data. The values in the weight matrices are stacked on top of each other in a parameter vector $\theta_{\text{true}}$.

Figure~\ref{fig:chaotic_LSTM_2d} shows the mean-square loss evaluated on data generated by the same LSTM model with the same initial conditions, but with different parameter values. A two-dimensional grid of parameter values $\theta(s_1, s_2)$ is used with values interpolated (and extrapolated) between $\theta_\mathrm{true}$, zero parameter values, and a randomly chosen parameter vector $\theta_\mathrm{random}$, i.e.~in this case, $\theta(s_1, s_2) = s_1 \theta_\mathrm{true} + s_2 \theta_\mathrm{random}$. Again, the region in the parameter space around $\theta_\mathrm{true}$ is intricate. There is an entire region where the cost function is intricate and has many neighboring local minima.

\begin{figure*}[h]
\section{Additional Experiments}
\label{sec:additional-experiments}
\end{figure*}

\begin{figure*}[h]
    \centering
    \includegraphics[width=0.8\textwidth]{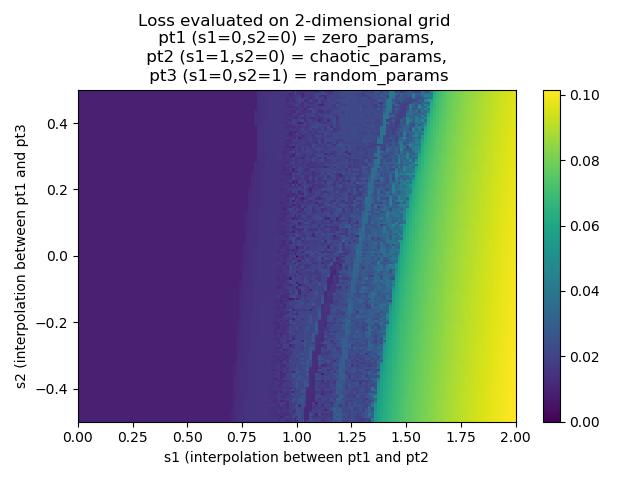}
    \caption{\textbf{Chaotic LSTM.} Mean-square cost function \textcolor{black}{(\ref{eq:cost_function})} for LSTM models with parameter vectors $\theta(s_1,s_2) = s_1 \theta_\mathrm{true} + s_2 \theta_\mathrm{random}$.}
    \label{fig:chaotic_LSTM_2d}
\end{figure*}

\begin{figure*}[htp]
  \centering
  \subfloat[good performance solution]{\includegraphics[height=0.21\textwidth]{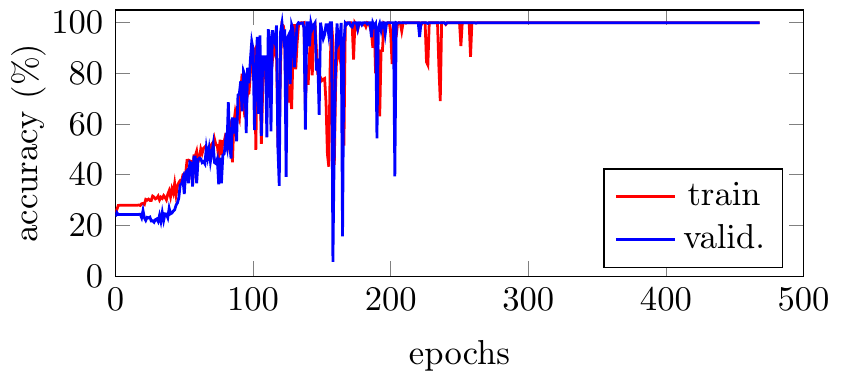}}
  \subfloat[bad performance solution]{\includegraphics[height=0.21\textwidth]{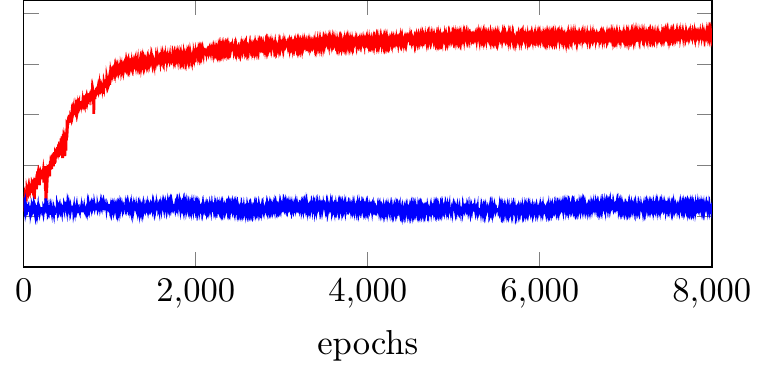}}
  \caption{\textbf{The effect of the initial conditions.} Accuracy on training and validation data on the \textbf{symbol classification task} for the \textbf{LSTM} model in identical scenarios but with different initial random parameter initialization (from different random seeds). In (a), the optimization procedure abruptly finds a solution that has good accuracy on both training and validation; while, in (b), the convergence is slow and steady towards a solution that has good accuracy on the training set (above $90\%$) but is no better than random guessing on the validation set.}
  \label{fig:history-temporal-order}
\end{figure*}

\begin{figure*}[htp]
  \centering
  \begin{tabular}{rcccc}
    & $\{p, p\}$ & $\{p, q\}$ & $\{p, p\}$  & $\{q, q\}$ \\
    $p$ & \includegraphics[width=0.2\textwidth]{./img/temporal-order-orbit-lstm/orbit-input0-output0-sequence0} & \includegraphics[width=0.2\textwidth]{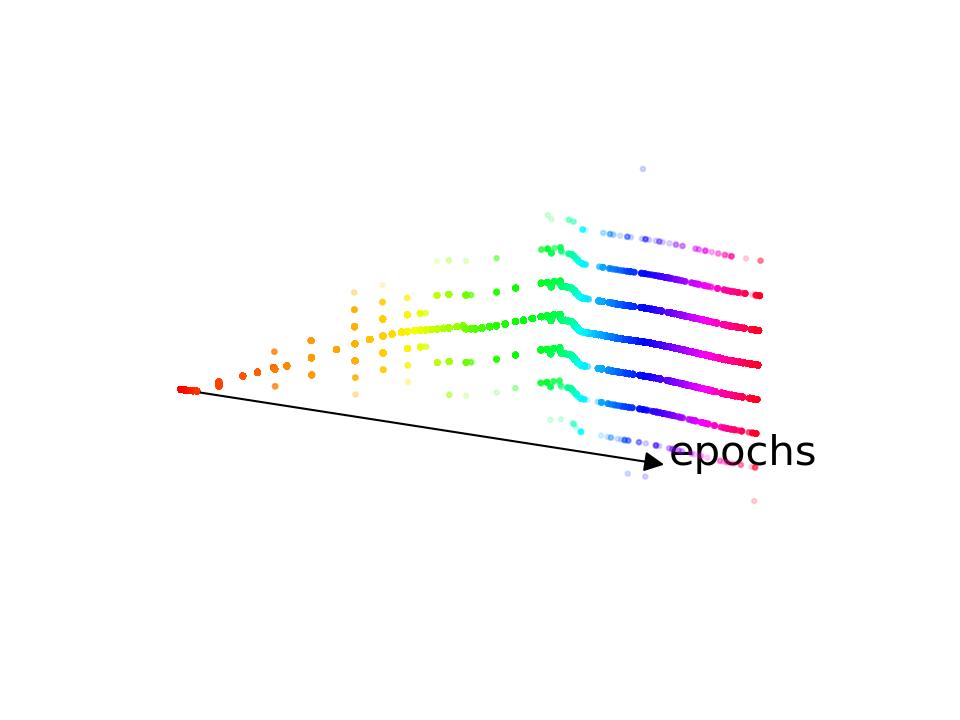} &\includegraphics[width=0.2\textwidth]{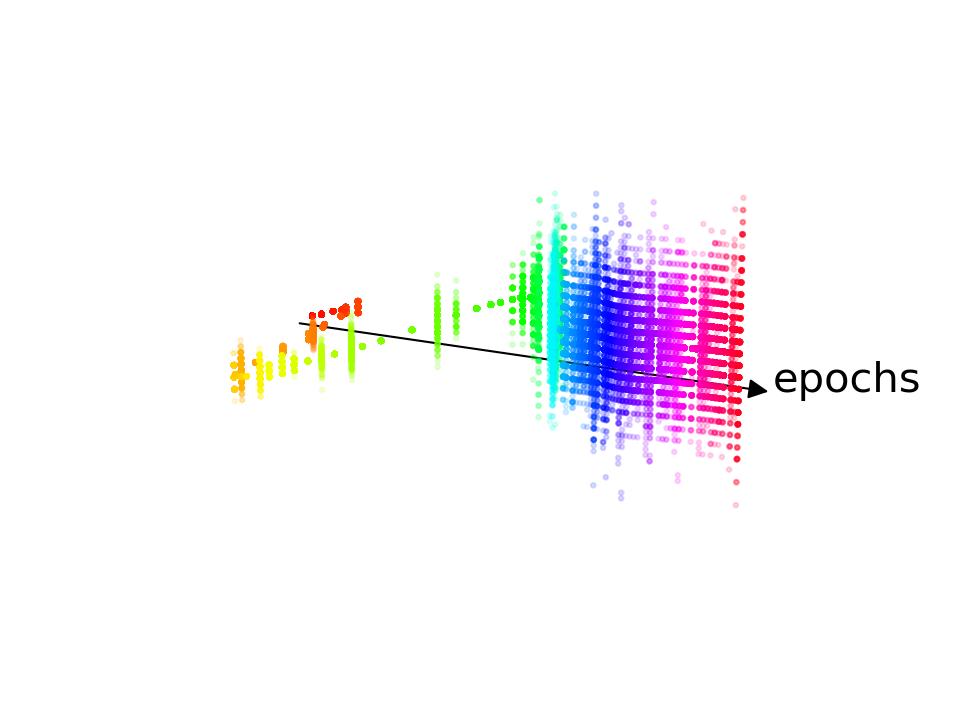} & \includegraphics[width=0.2\textwidth]{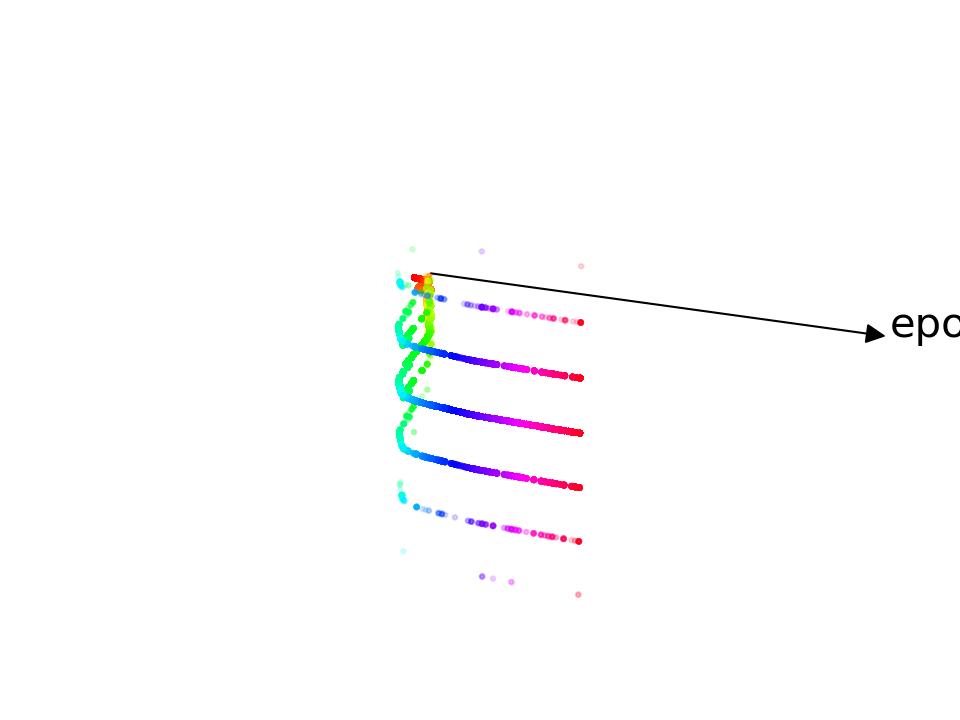}\\
    $q$ & \includegraphics[width=0.2\textwidth]{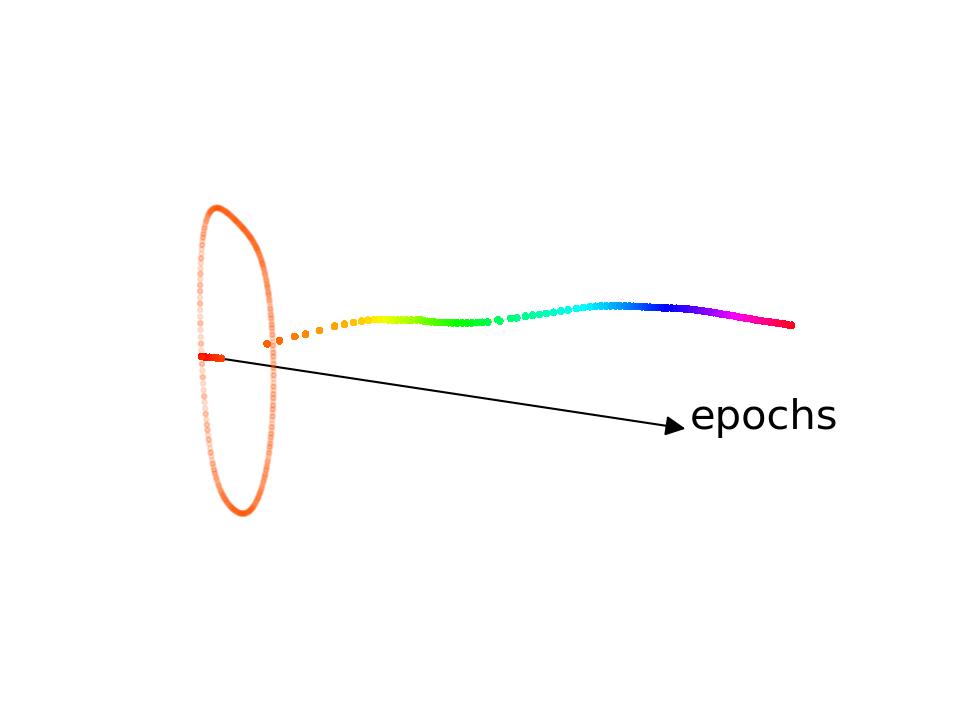} & \includegraphics[width=0.2\textwidth]{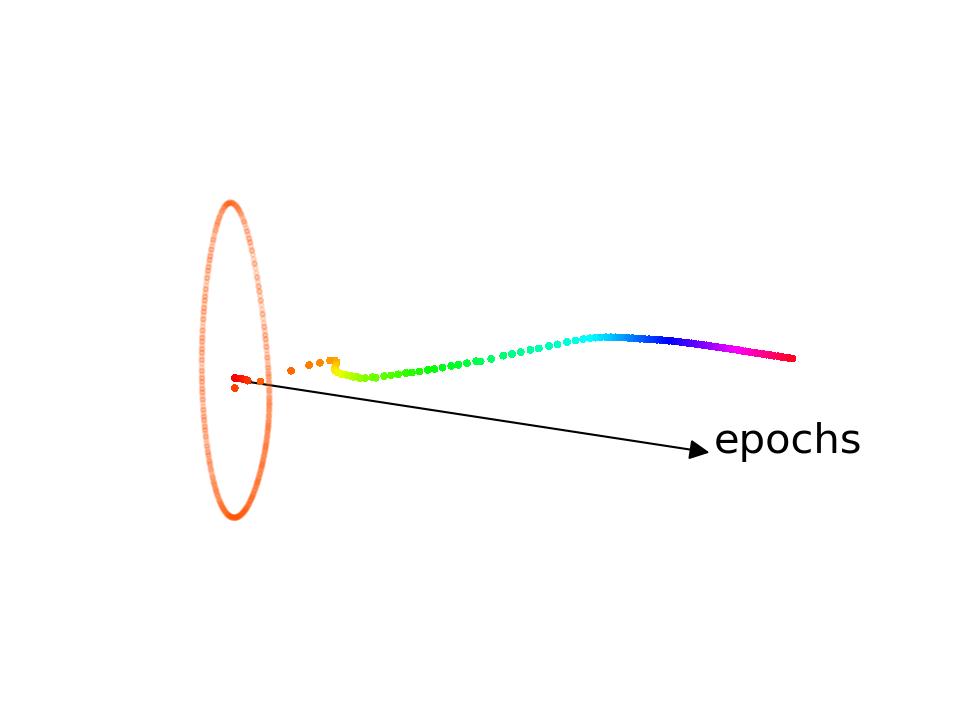} &\includegraphics[width=0.2\textwidth]{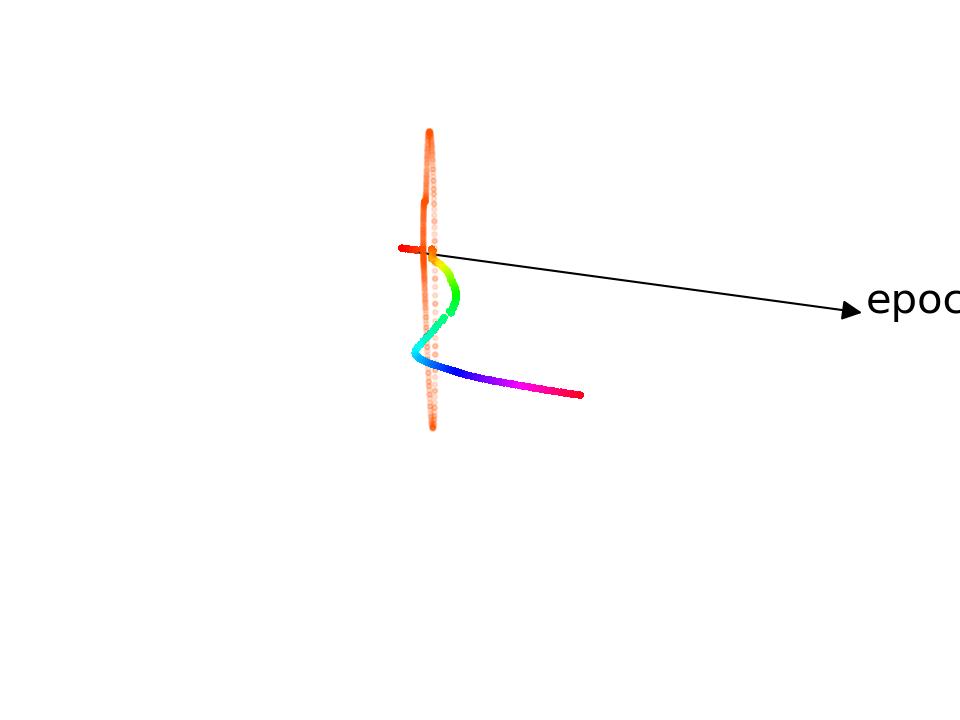} & \includegraphics[width=0.2\textwidth]{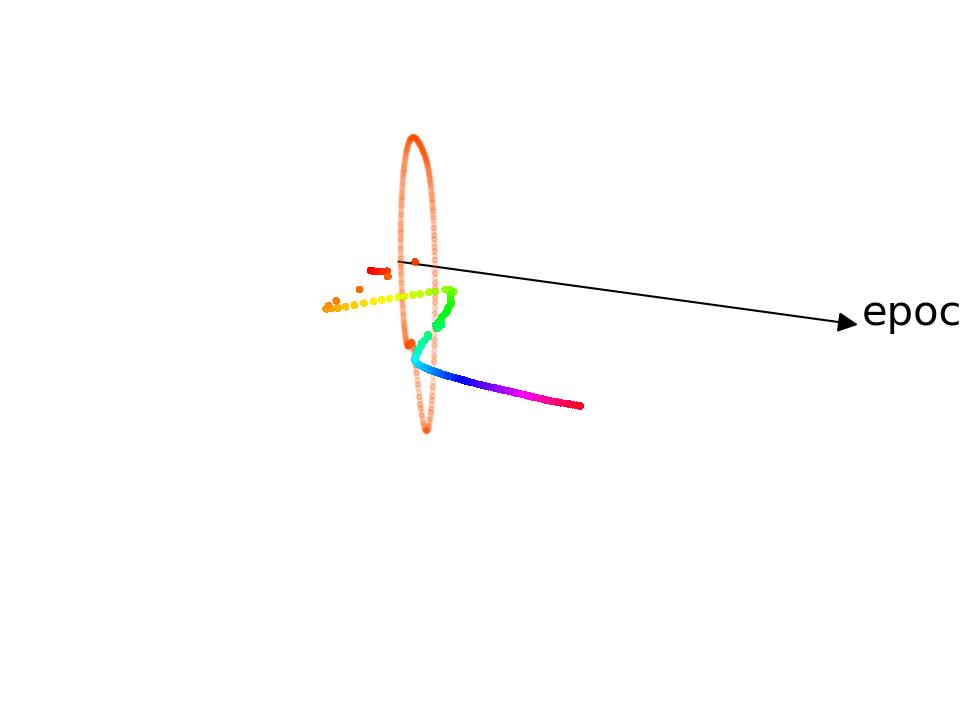}\\
    $a$ & \includegraphics[width=0.2\textwidth]{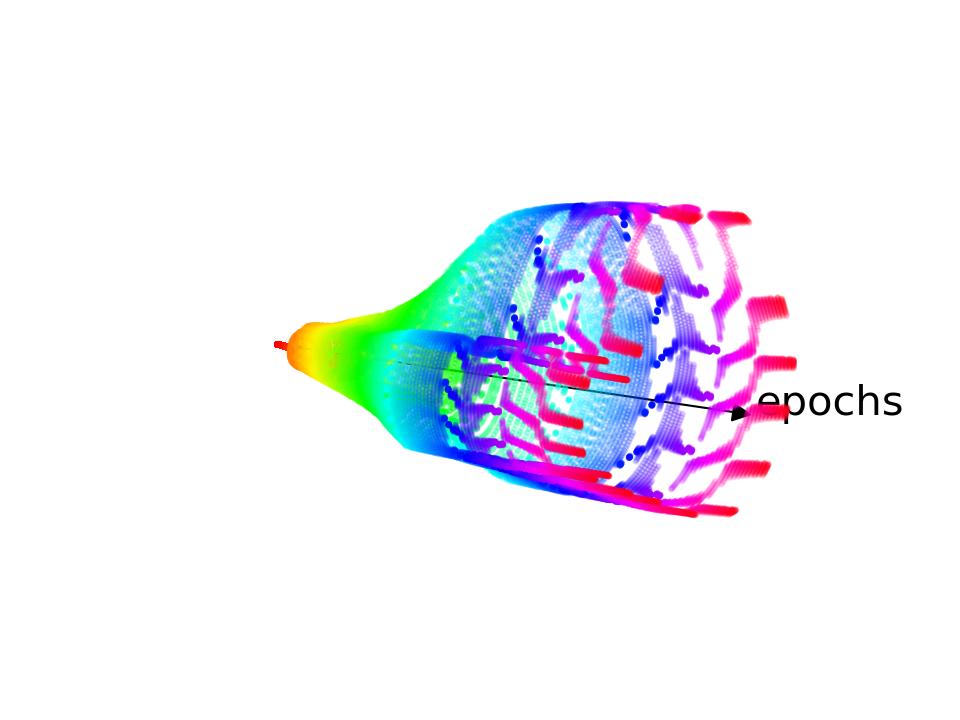} & \includegraphics[width=0.2\textwidth]{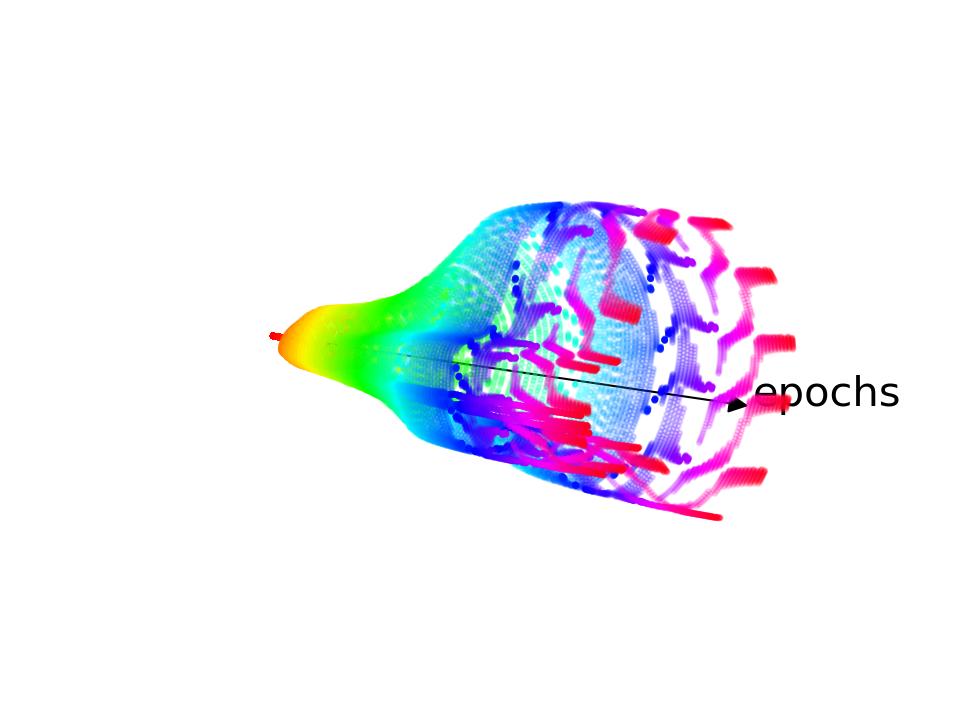} &\includegraphics[width=0.2\textwidth]{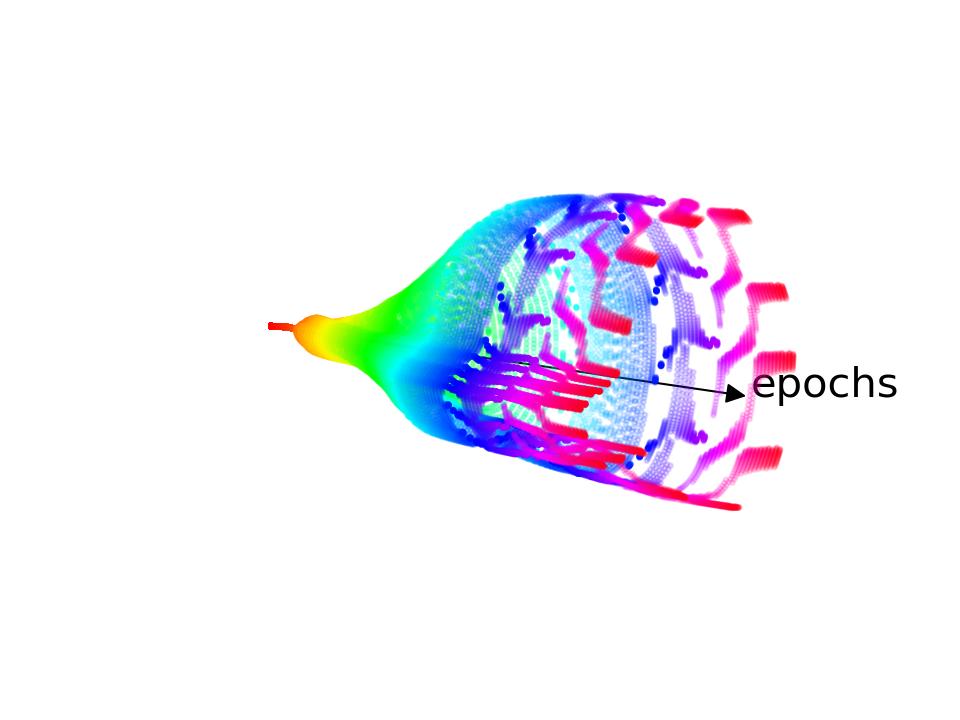} & \includegraphics[width=0.2\textwidth]{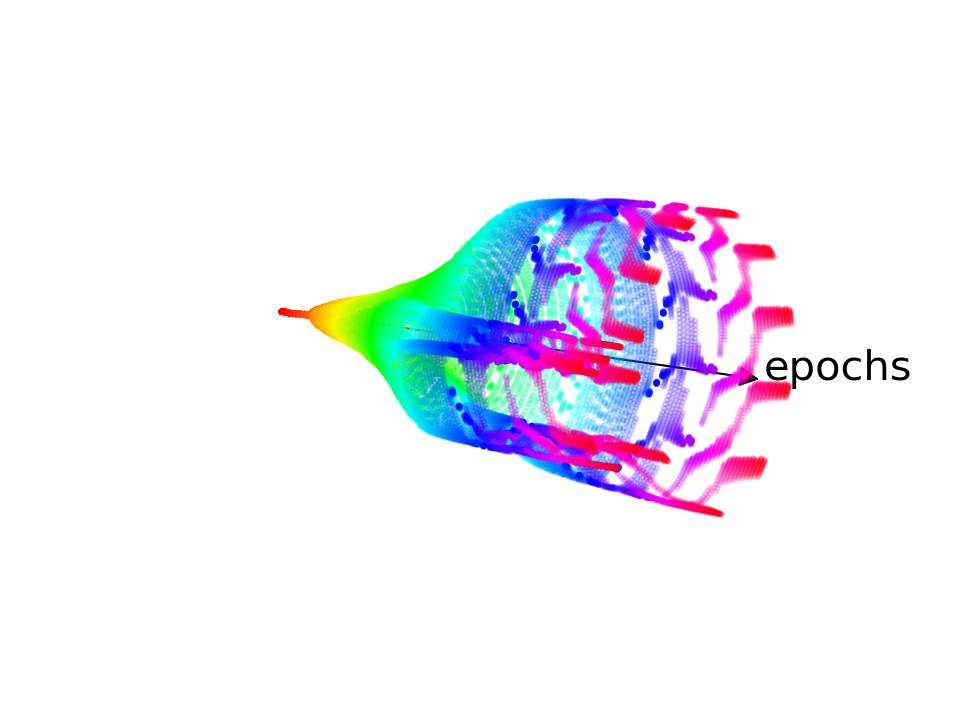}\\
    $b$ & \includegraphics[width=0.2\textwidth]{./img/temporal-order-orbit-lstm/orbit-input3-output0-sequence0} & \includegraphics[width=0.2\textwidth]{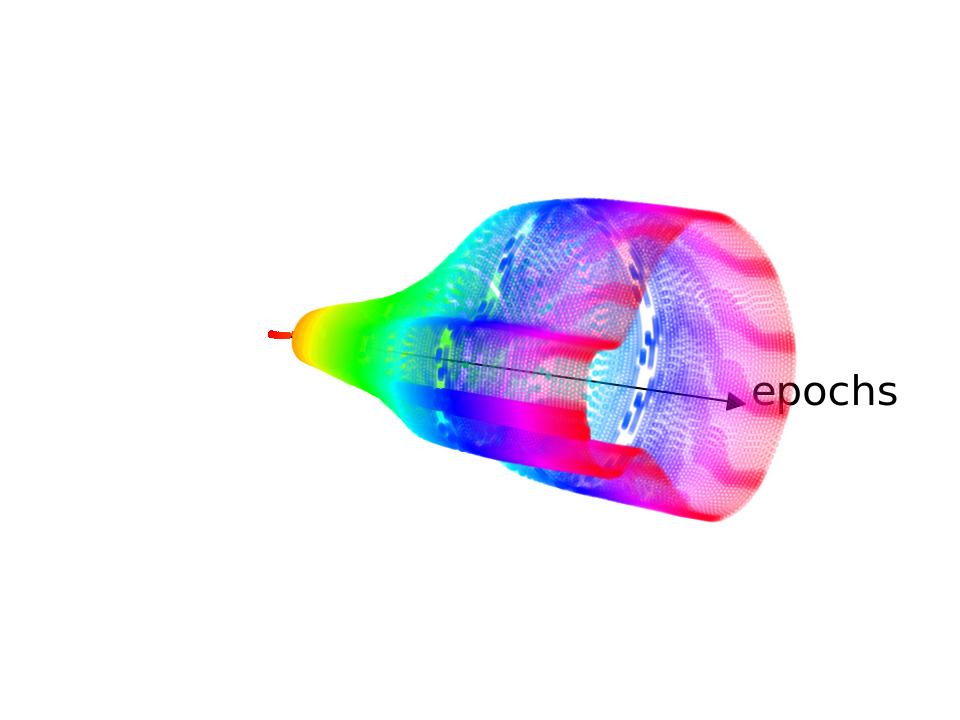} &\includegraphics[width=0.2\textwidth]{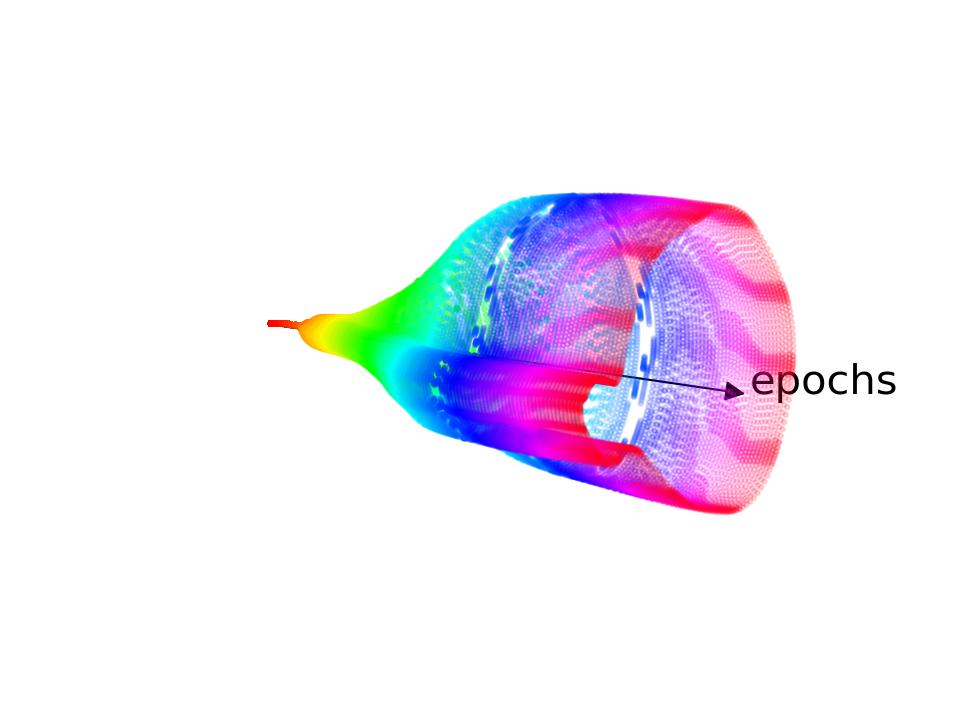} & \includegraphics[width=0.2\textwidth]{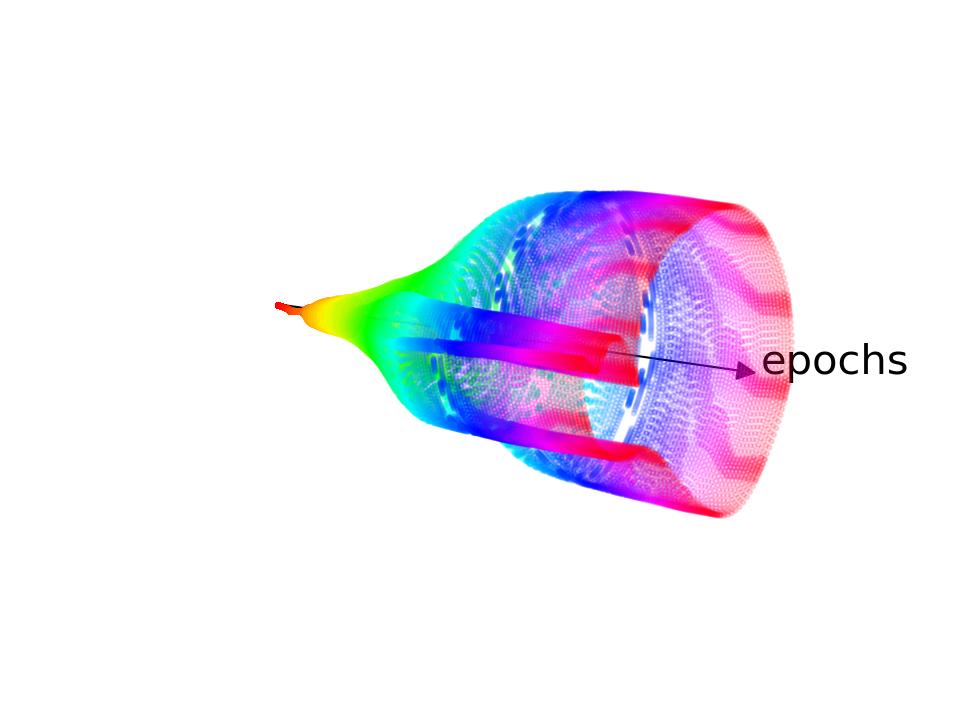}\\
    $c$ & \includegraphics[width=0.2\textwidth]{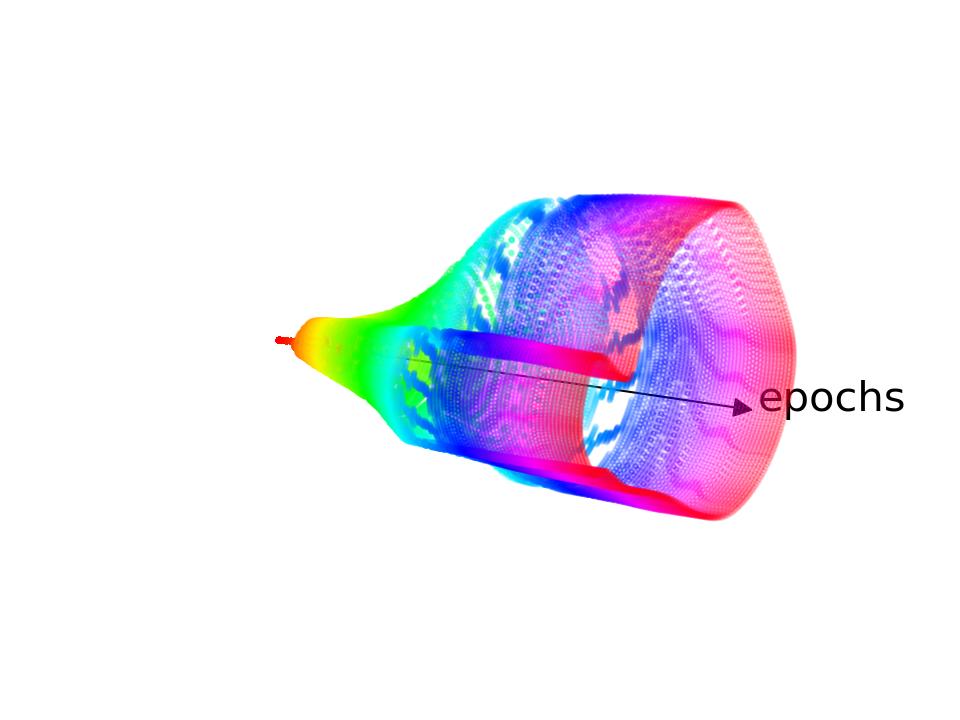} & \includegraphics[width=0.2\textwidth]{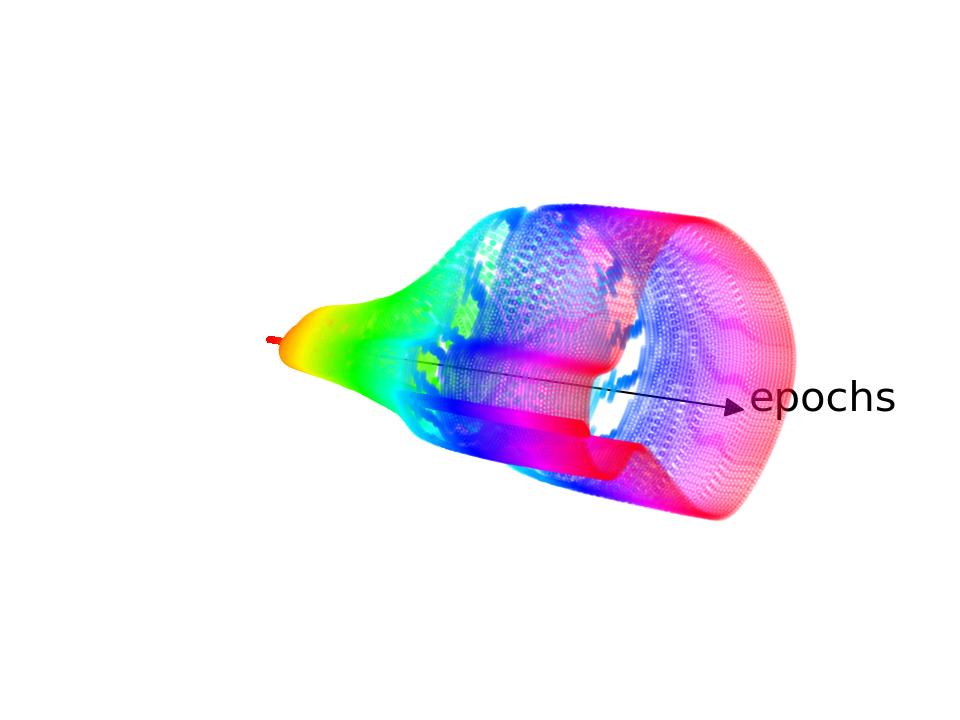} &\includegraphics[width=0.2\textwidth]{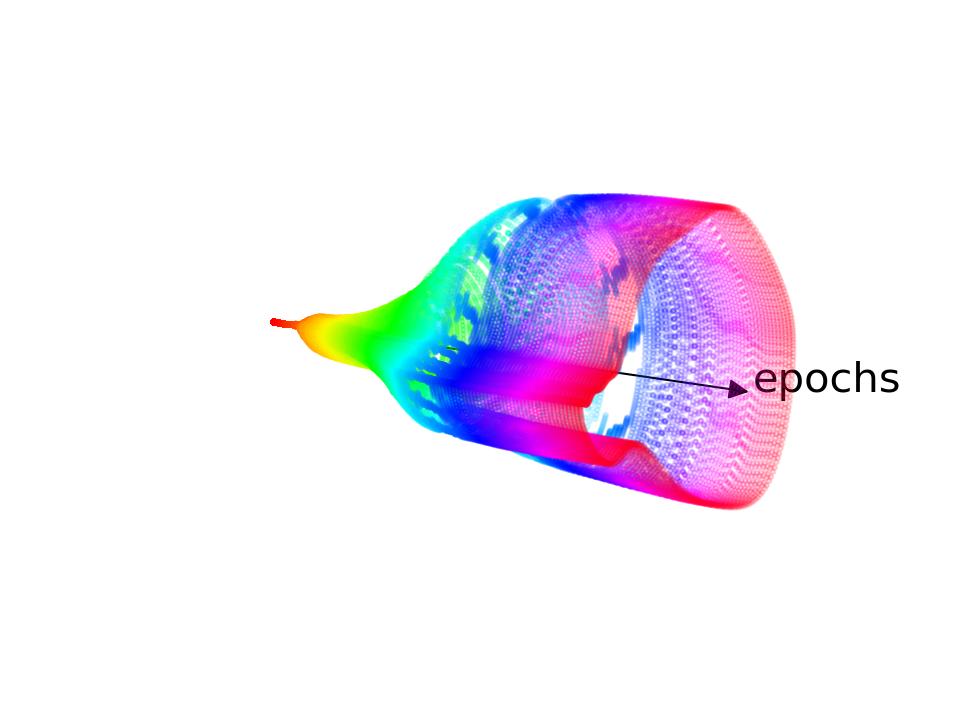} & \includegraphics[width=0.2\textwidth]{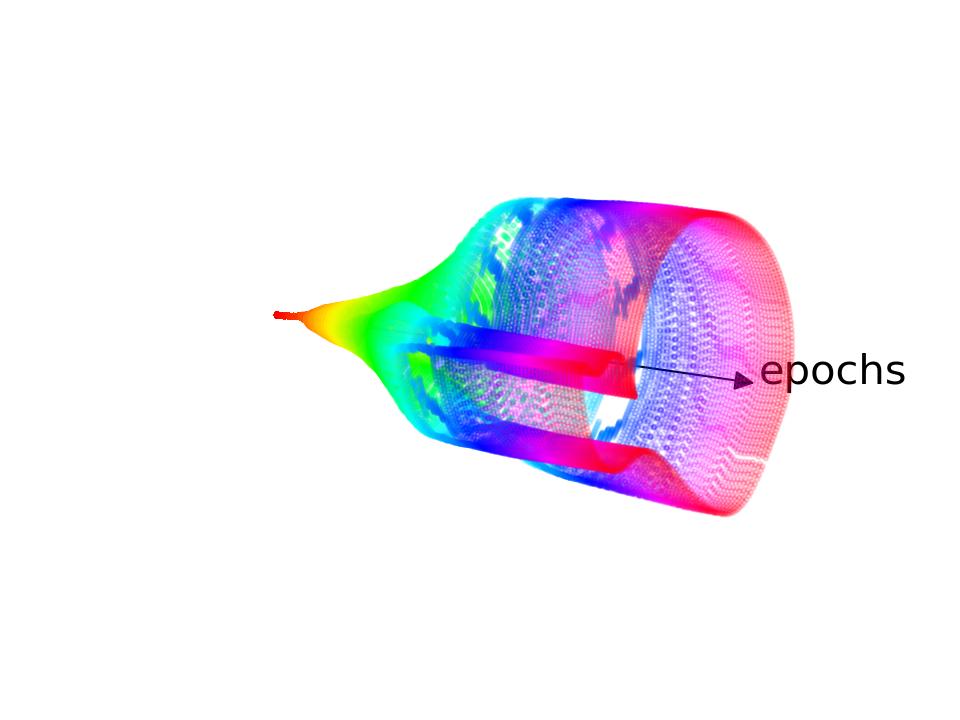}\\
    $d$ & \includegraphics[width=0.2\textwidth]{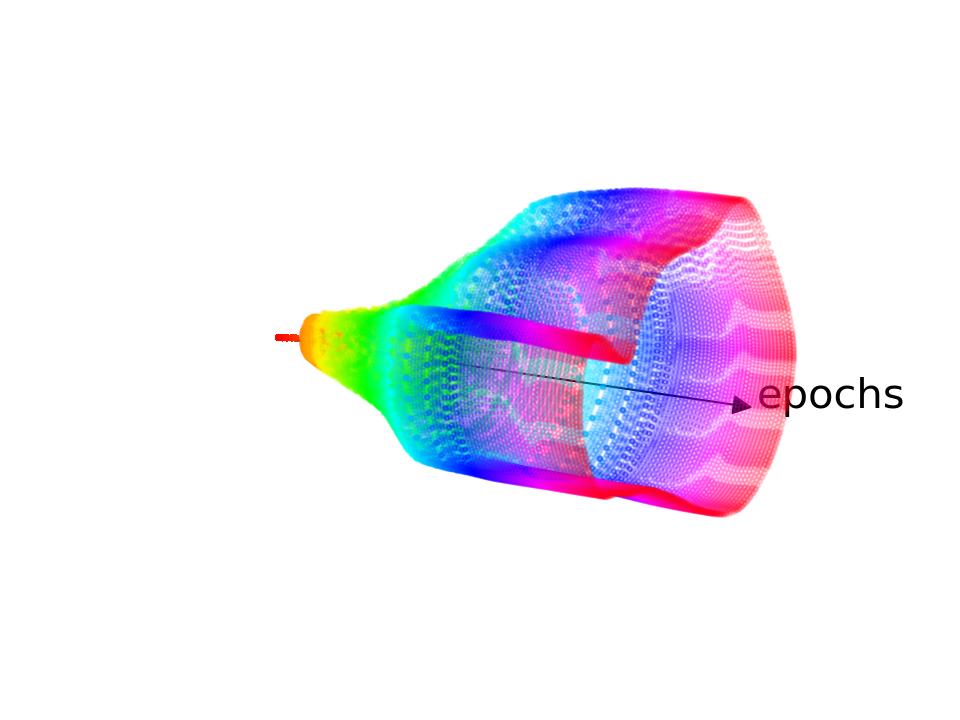} & \includegraphics[width=0.2\textwidth]{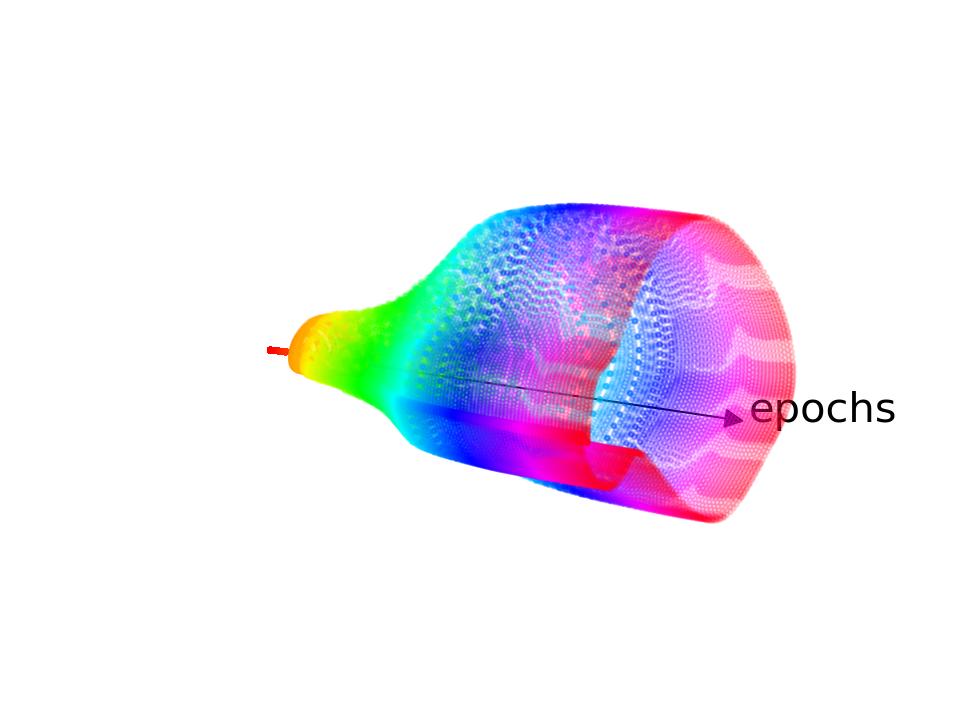} &\includegraphics[width=0.2\textwidth]{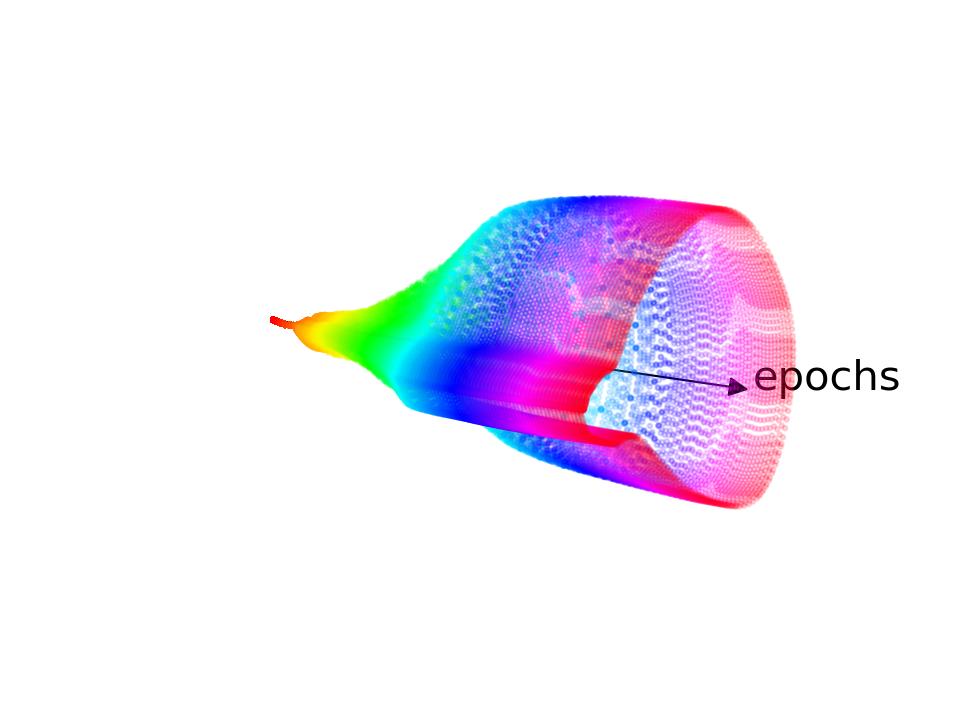} & \includegraphics[width=0.2\textwidth]{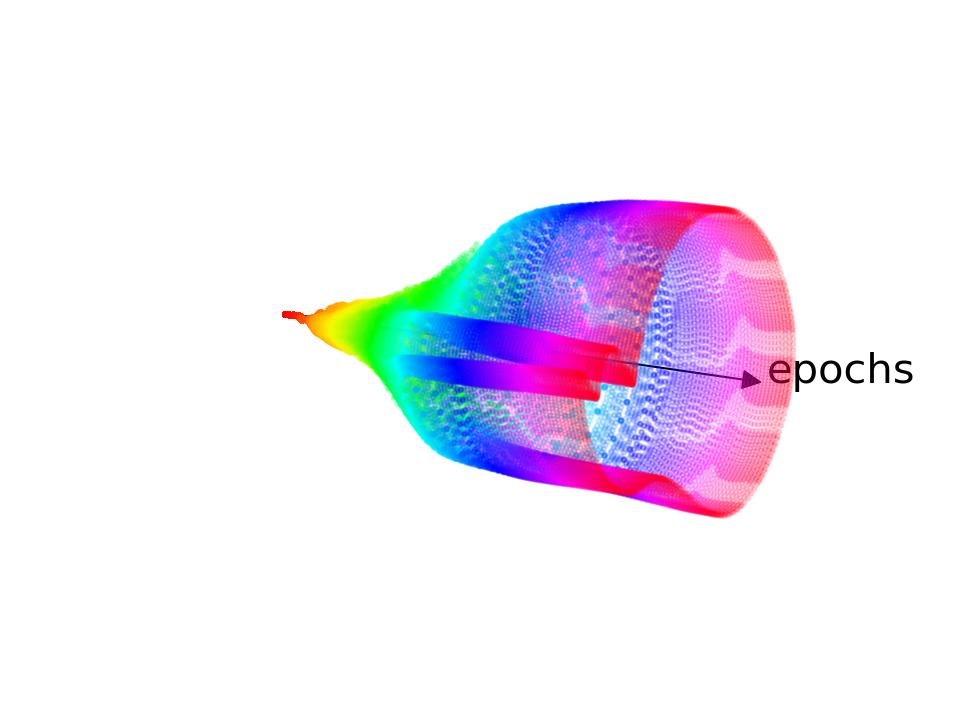}\\
  \end{tabular}
  \caption{\textbf{The LSTM learns to classify sequences.} Bifurcation diagram for the \textit{LSTM} model in \textit{symbol classification task} for sequences of length 100.  It shows the steady-state of the output $y_t$ and its first difference $y_t - y_{t-1}$.  The arrows point towards the evolution of the number of epochs, that vary from 0 to 400.}
  \label{fig:lstm-bifurcation-all}
\end{figure*}

\begin{figure*}[htp]
  \centering
   \begin{tabular}{lcccc}
    & $\{p, p\}$ & $\{p, q\}$ & $\{q, p\}$  & $\{q, q\}$ \\
    $p$ & \includegraphics[width=0.2\textwidth]{./img/temporal-order-orbit-ornn/orbit-input0-output0-sequence0} & \includegraphics[width=0.2\textwidth]{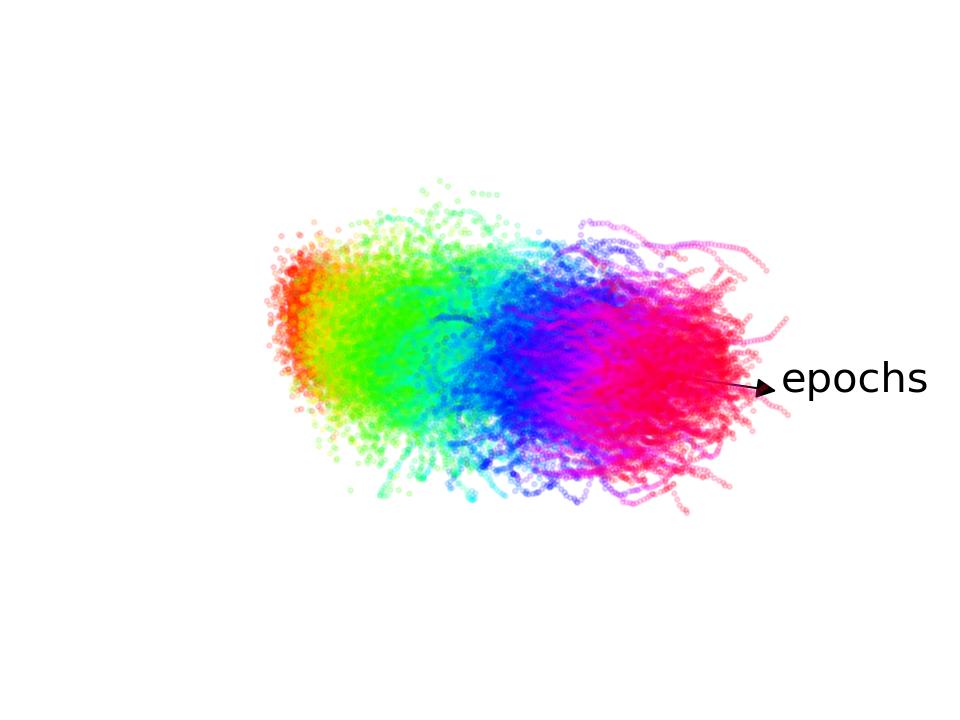} &\includegraphics[width=0.2\textwidth]{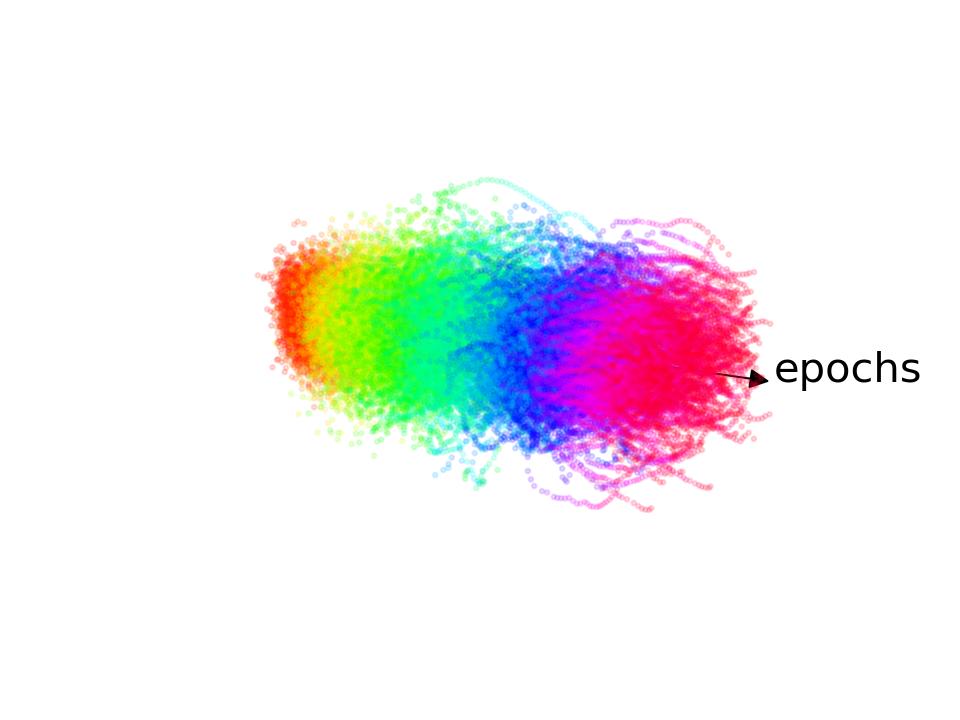} & \includegraphics[width=0.2\textwidth]{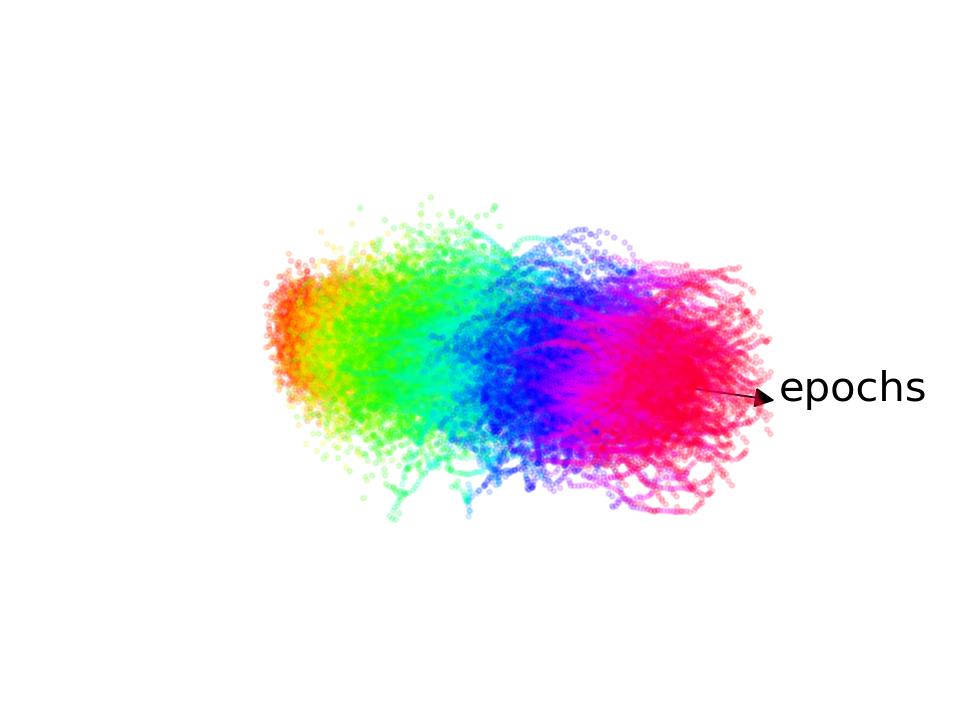}\\
    $q$ & \includegraphics[width=0.2\textwidth]{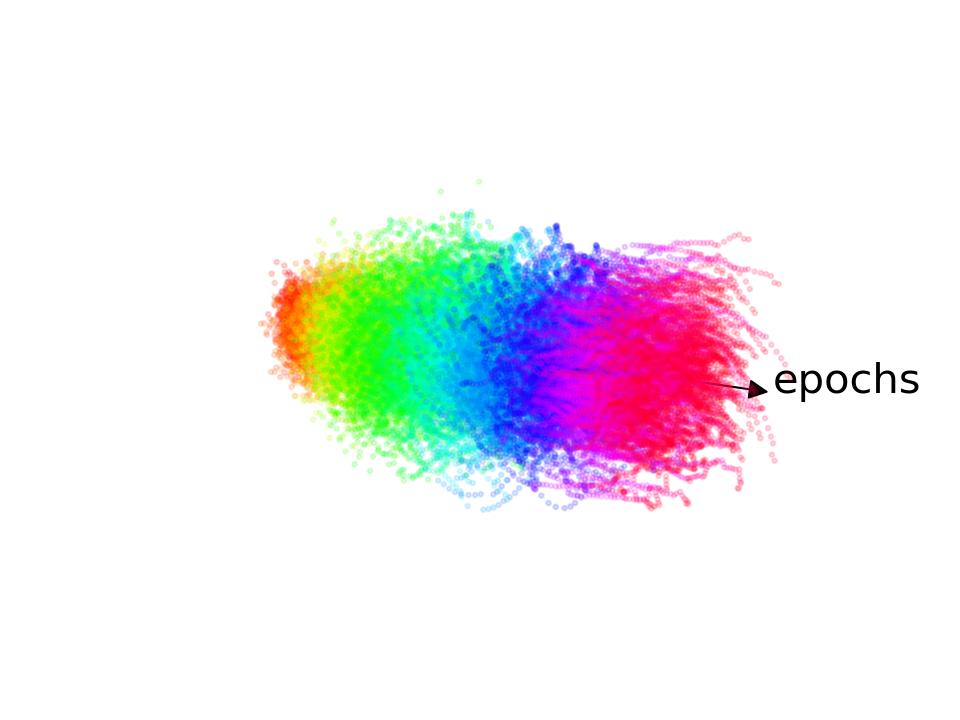} & \includegraphics[width=0.2\textwidth]{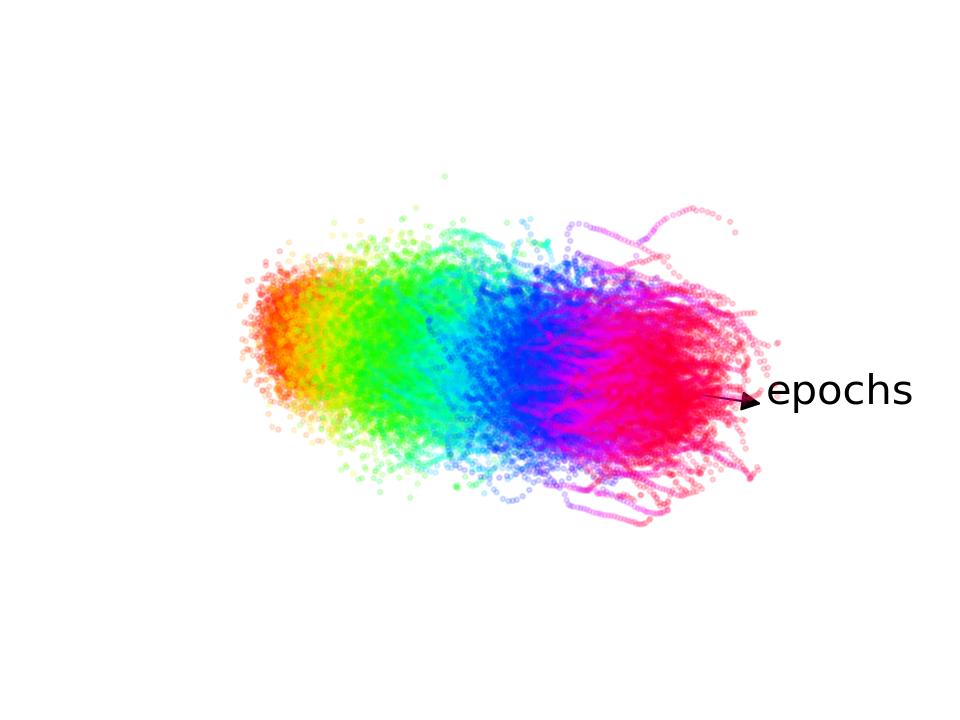} &\includegraphics[width=0.2\textwidth]{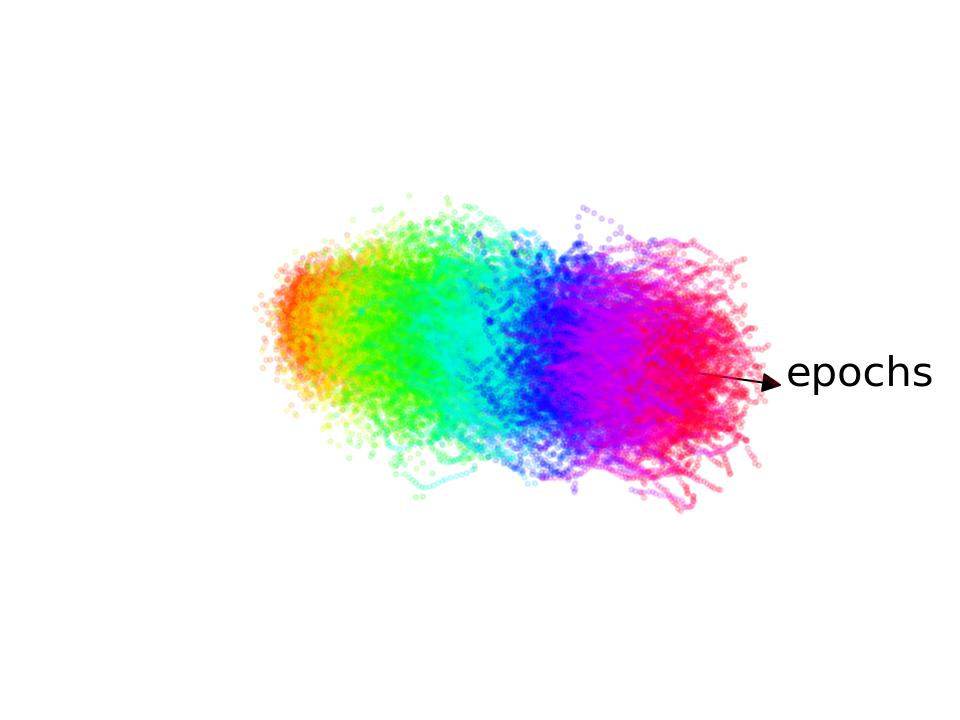} & \includegraphics[width=0.2\textwidth]{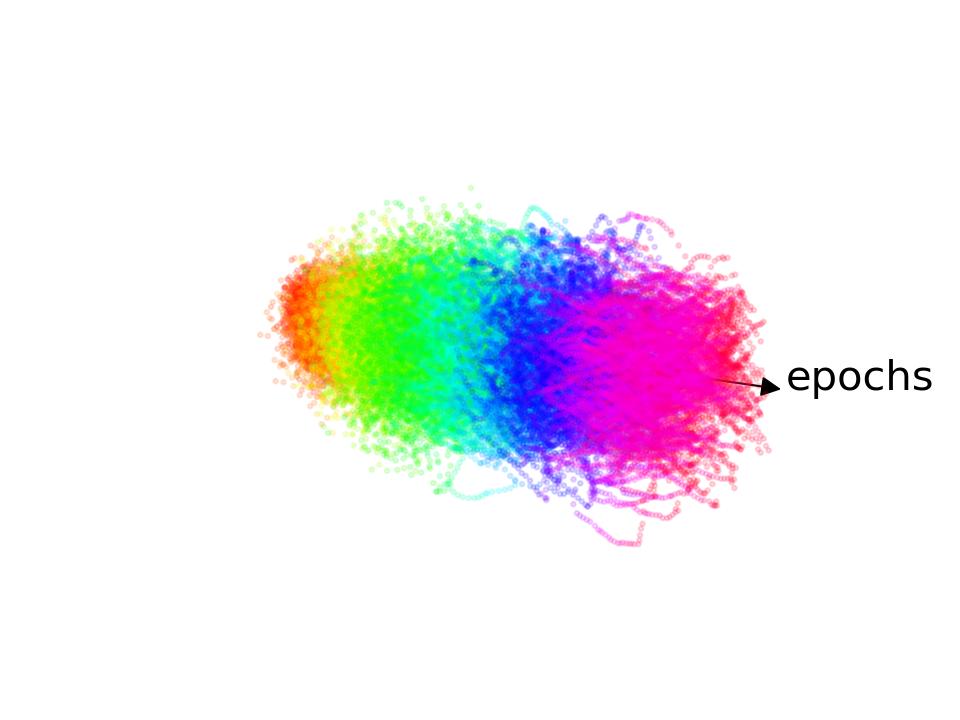}\\
    $a$ & \includegraphics[width=0.2\textwidth]{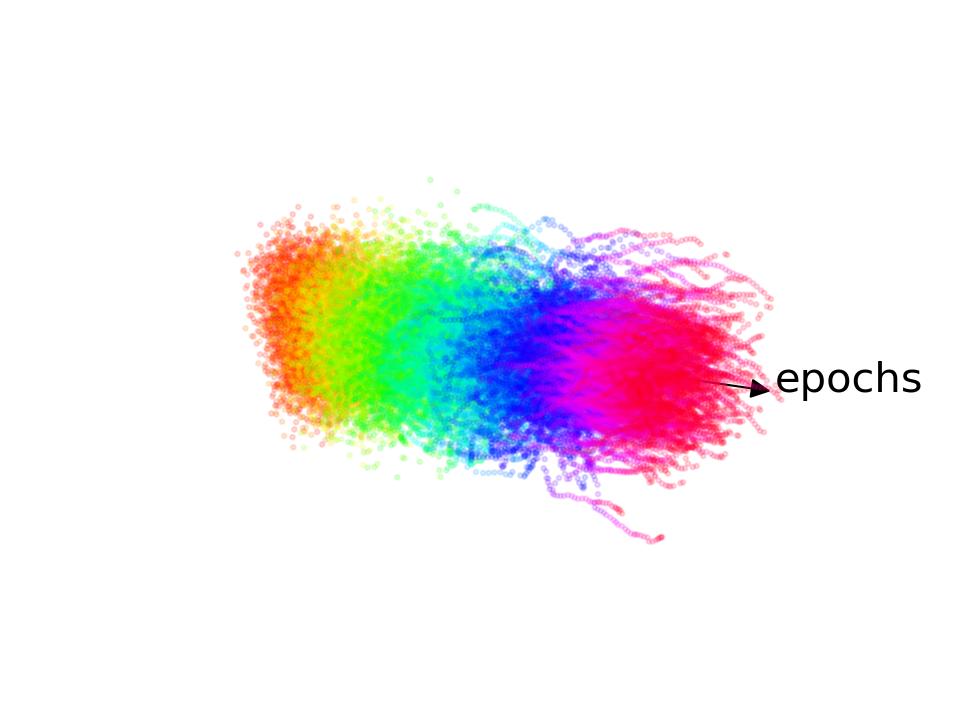} & \includegraphics[width=0.2\textwidth]{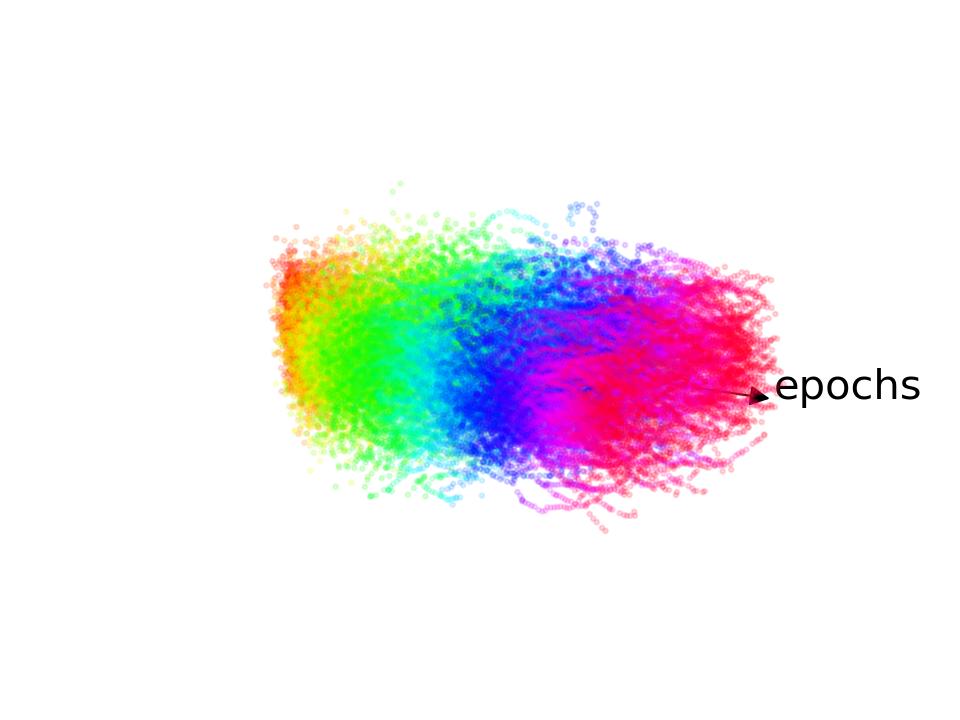} &\includegraphics[width=0.2\textwidth]{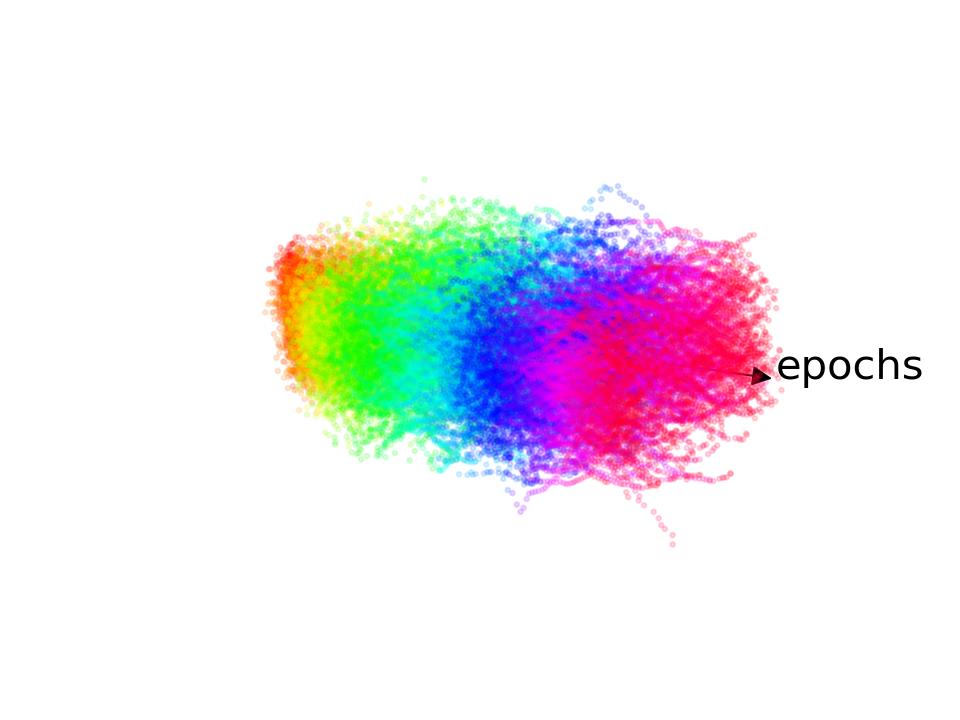} & \includegraphics[width=0.2\textwidth]{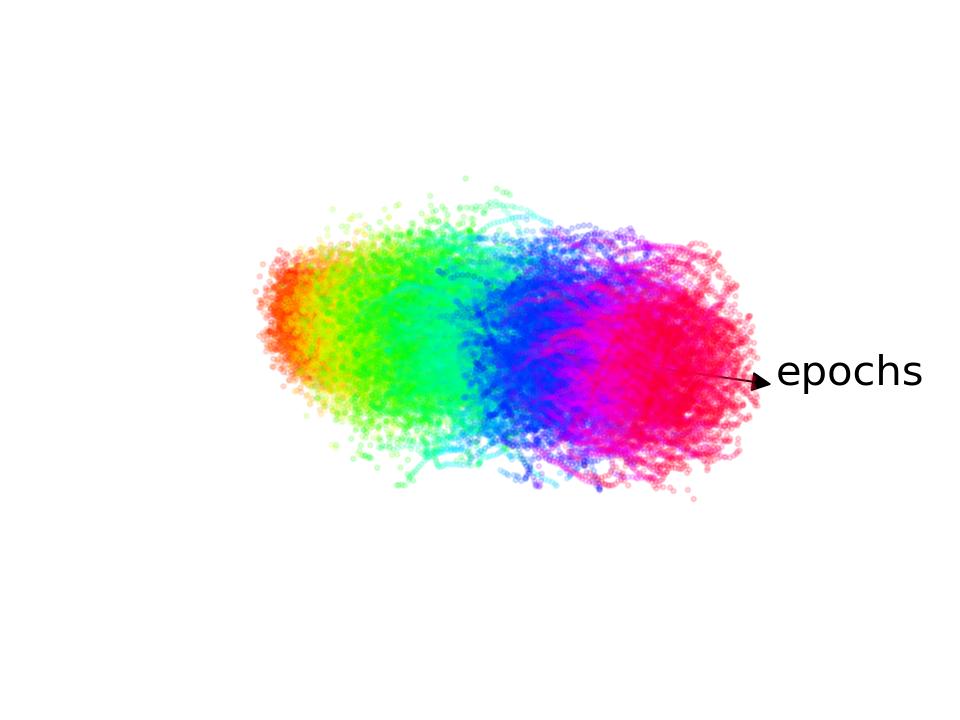}\\
    $b$ & \includegraphics[width=0.2\textwidth]{./img/temporal-order-orbit-ornn/orbit-input3-output0-sequence0} & \includegraphics[width=0.2\textwidth]{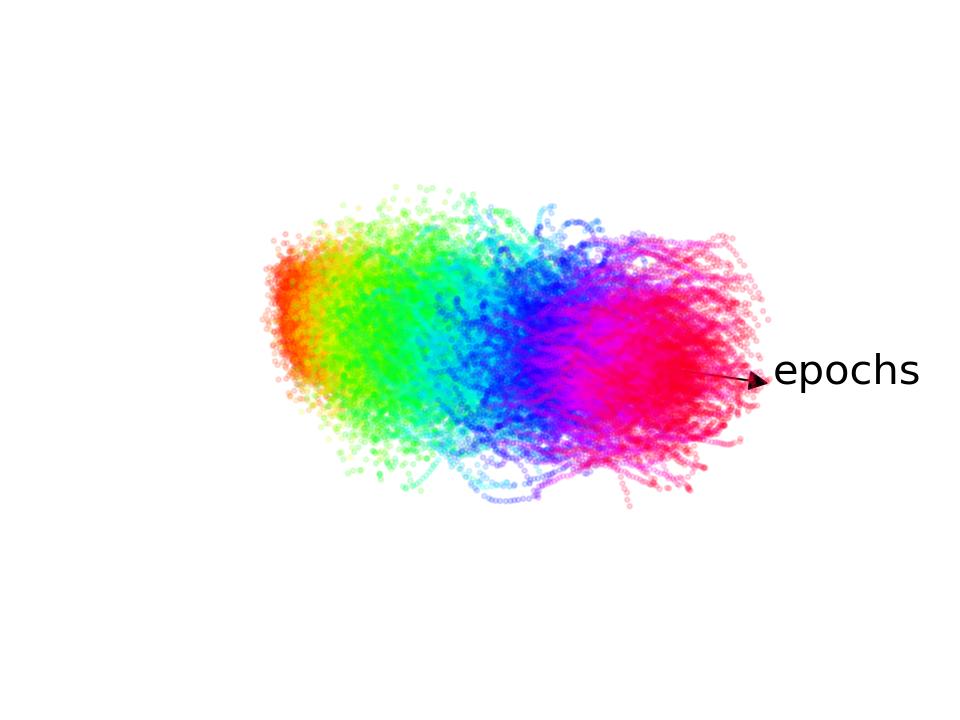} &\includegraphics[width=0.2\textwidth]{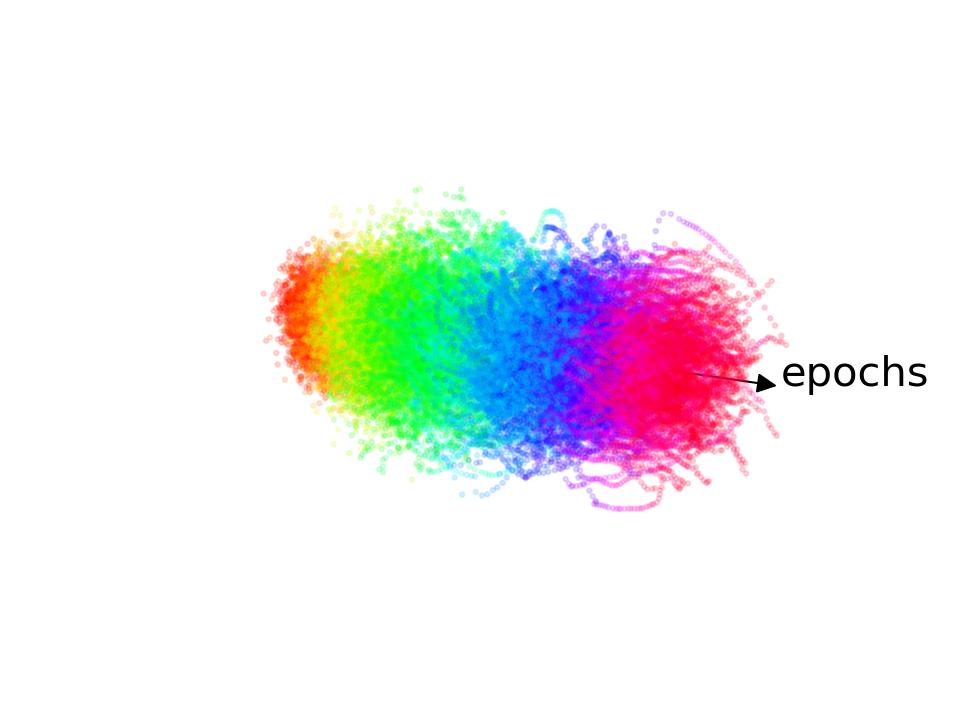} & \includegraphics[width=0.2\textwidth]{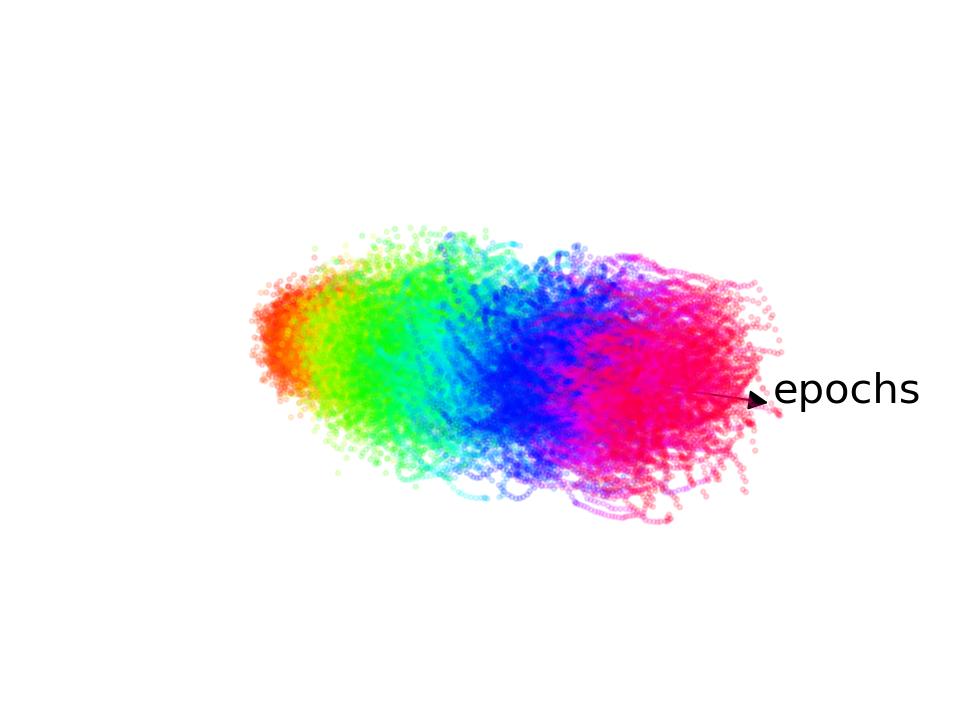}\\
    $c$ & \includegraphics[width=0.2\textwidth]{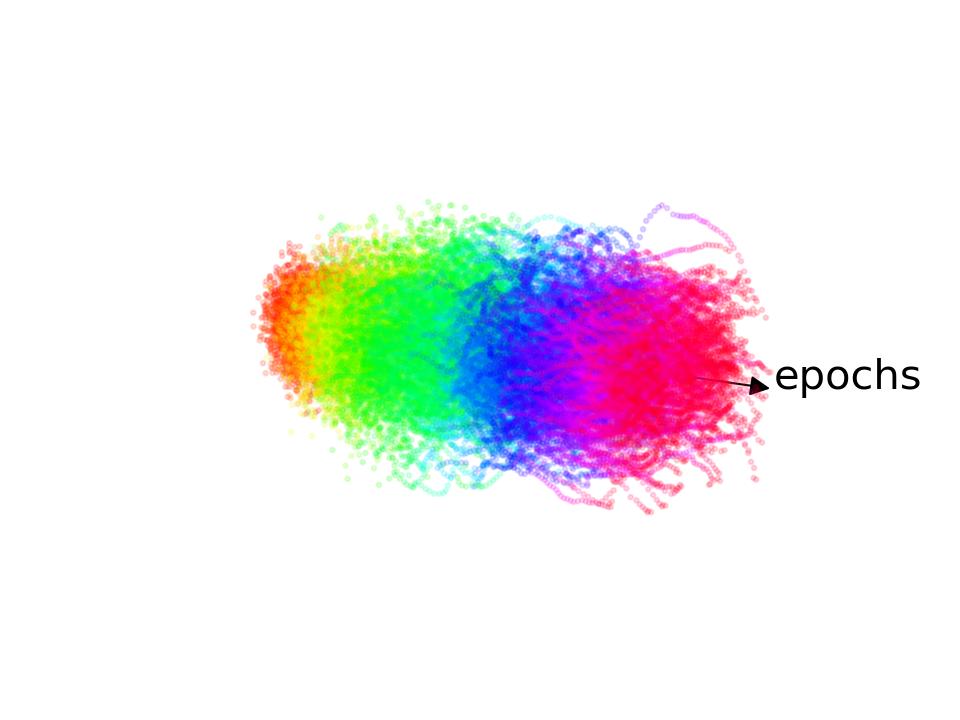} & \includegraphics[width=0.2\textwidth]{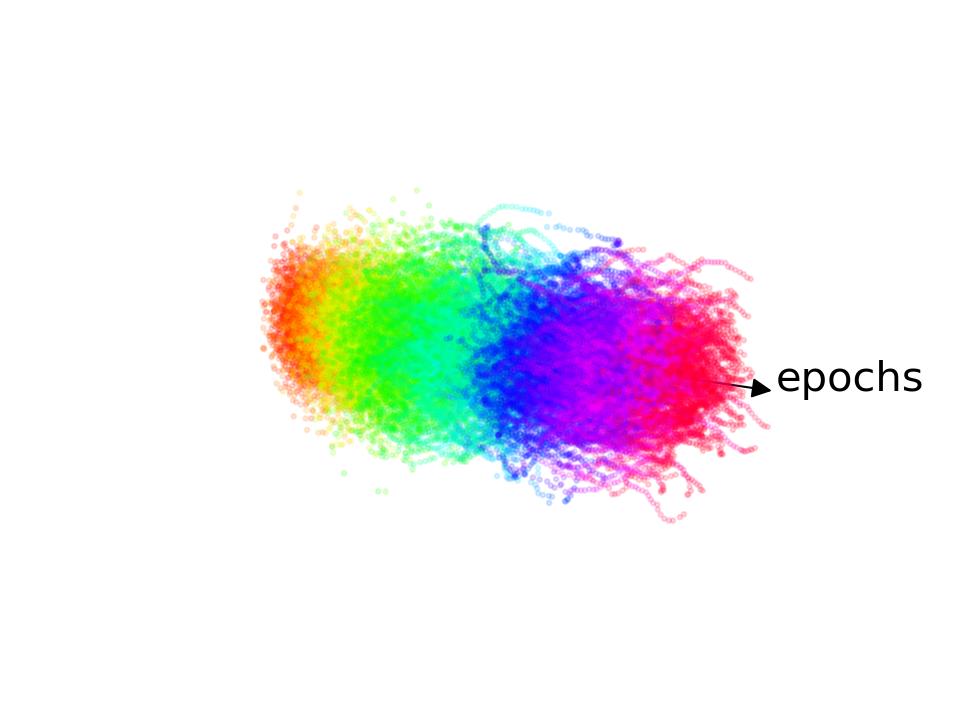} &\includegraphics[width=0.2\textwidth]{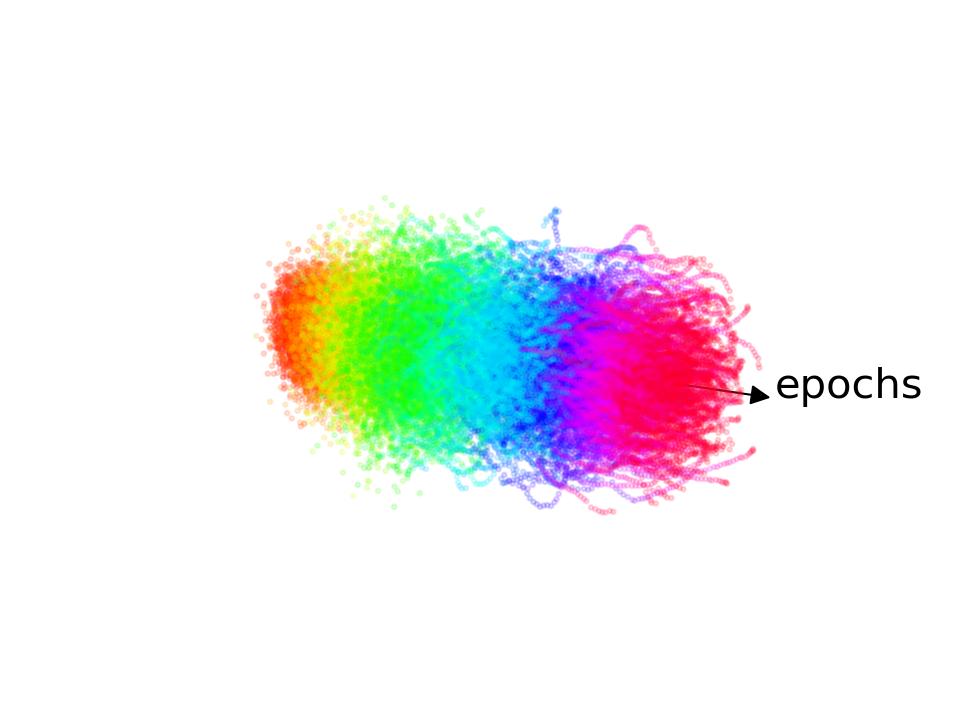} & \includegraphics[width=0.2\textwidth]{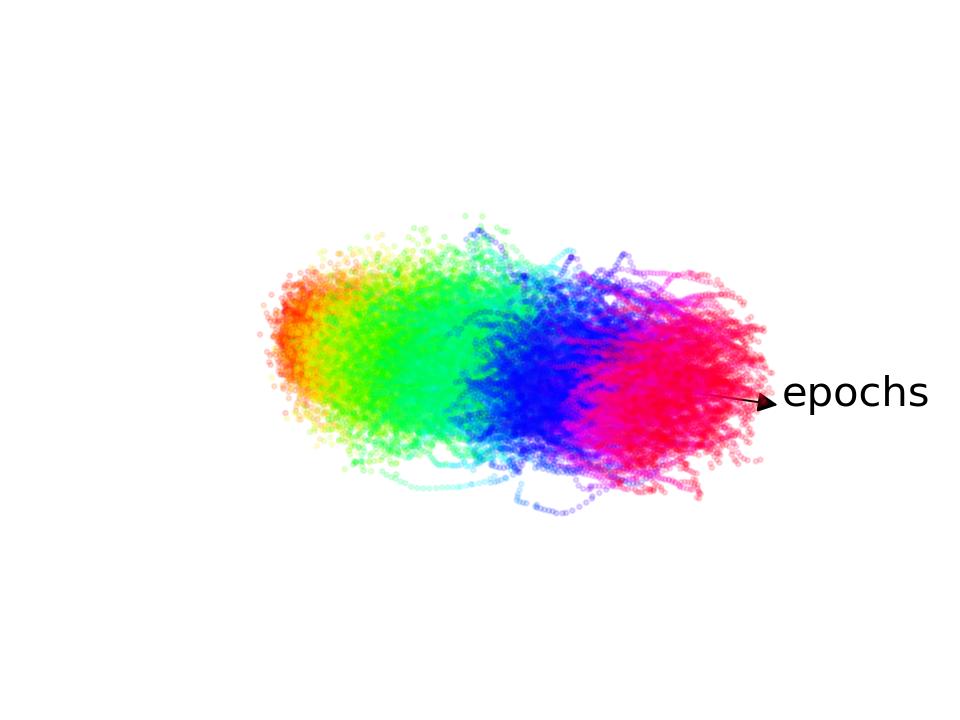}\\
    $d$ & \includegraphics[width=0.2\textwidth]{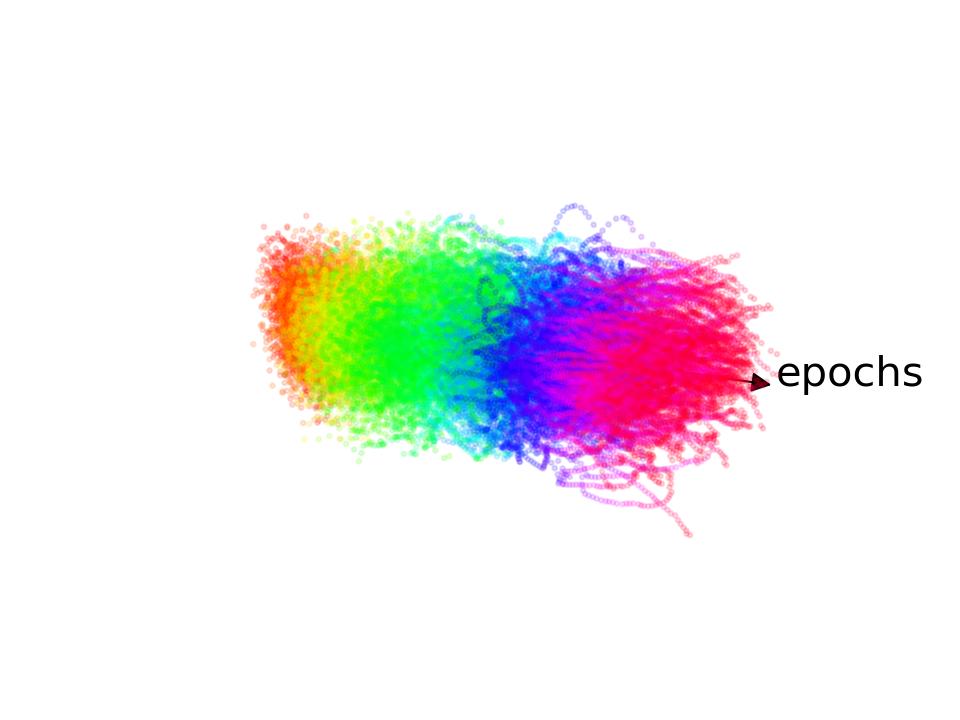} & \includegraphics[width=0.2\textwidth]{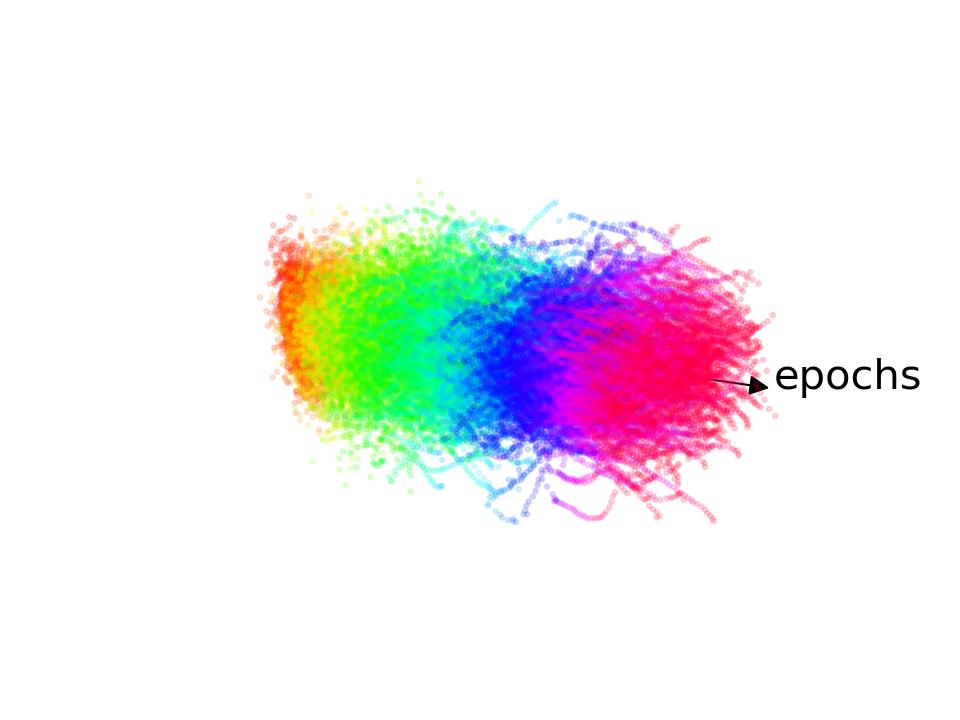} &\includegraphics[width=0.2\textwidth]{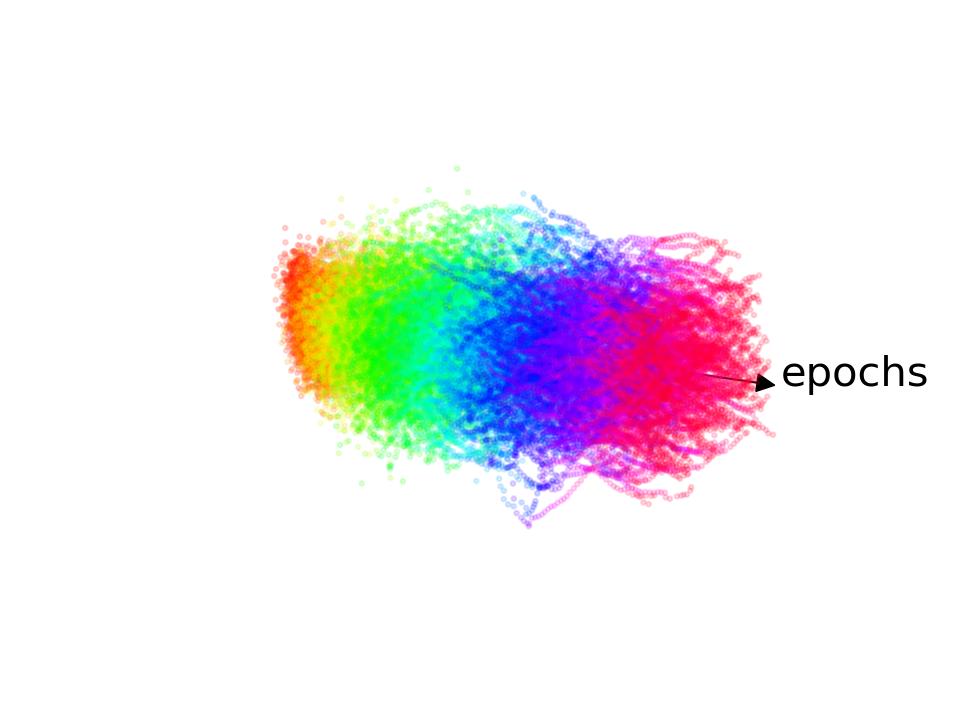} & \includegraphics[width=0.2\textwidth]{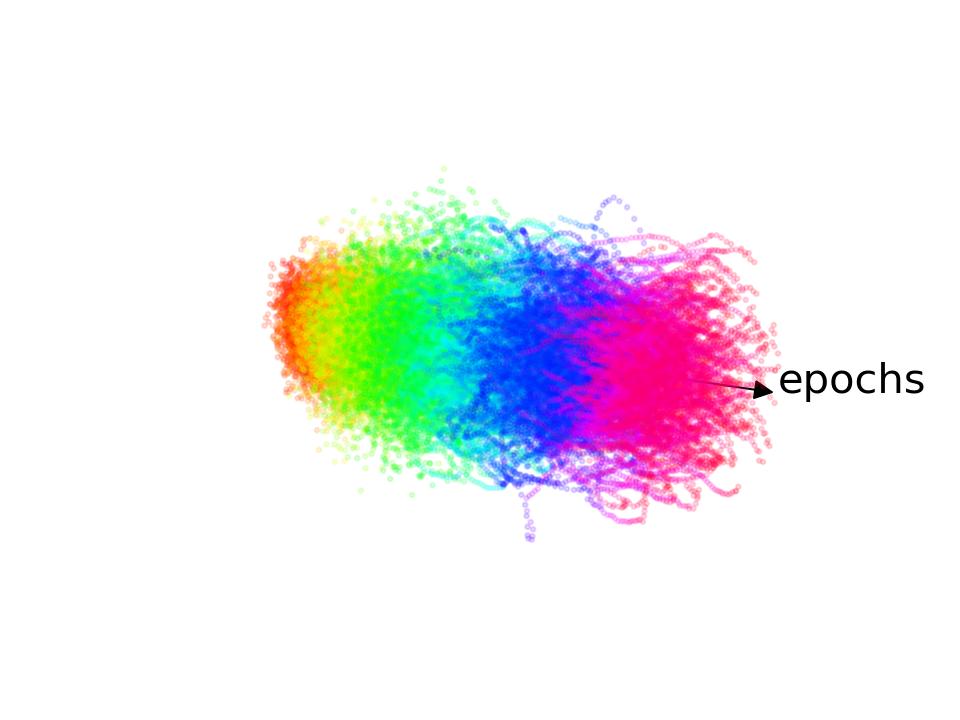}\\
  \end{tabular}
  \caption{\textbf{The oRNN learns to classify sequences.} Bifurcation diagram for the  \textit{oRNN}  model in \textit{symbol classification task} for sequences of length 100.  It shows the steady-state of the output $y_t$ and its first difference $y_t - y_{t-1}$.  The arrows point towards the evolution of the number of epochs, that vary from 0 to 400.}
  \label{fig:ornn-bifurcation-all}
\end{figure*}

\begin{figure*}[htp]
  \centering
  \subfloat[average]{\includegraphics[width=0.25\textwidth]{./img/word-level-lm/lstm-lm-1}}
  \subfloat[average]{\includegraphics[width=0.25\textwidth]{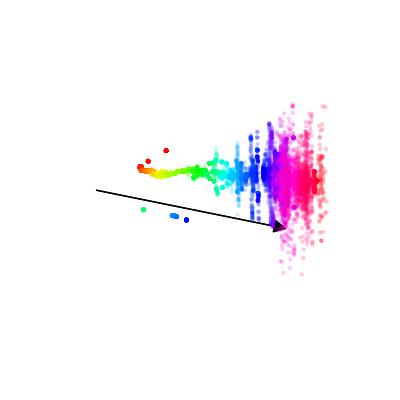}}
   \subfloat[average, with feedback]{\includegraphics[width=0.25\textwidth]{./img/word-level-lm/lstm-lm-2}}
   \subfloat[average, with feedback]{\includegraphics[width=0.25\textwidth]{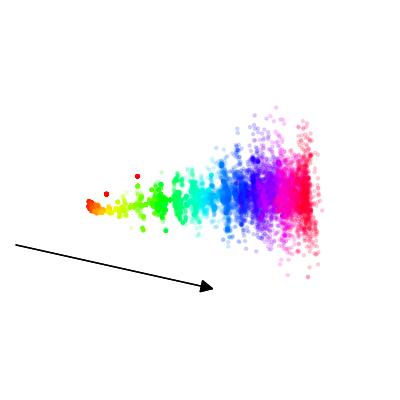}}\\
  \subfloat[``is'']{\includegraphics[width=0.25\textwidth]{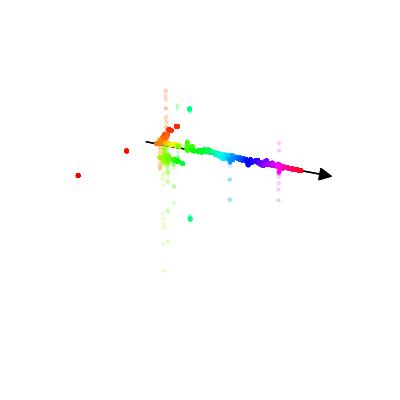}}
   \subfloat[``is'']{\includegraphics[width=0.25\textwidth]{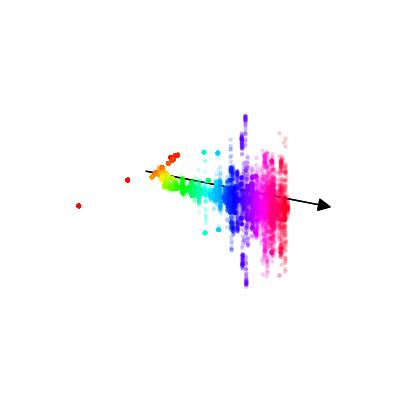}}
   \subfloat[``is'', with feedback]{\includegraphics[width=0.25\textwidth]{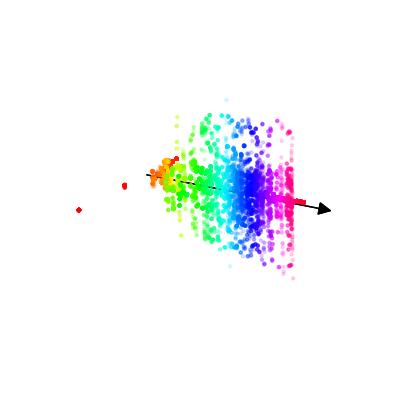}}
   \subfloat[``is'', with feedback]{\includegraphics[width=0.25\textwidth]{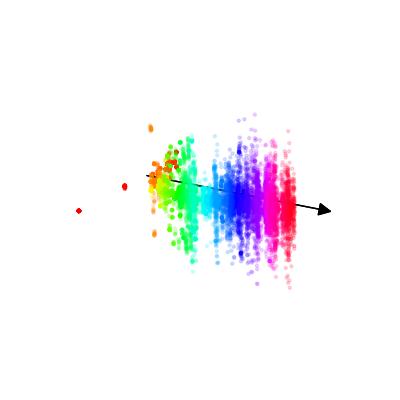}}\\
   \subfloat[``Valkyria'']{\includegraphics[width=0.25\textwidth]{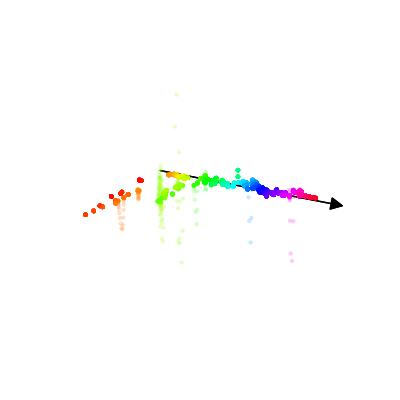}}
   \subfloat[``Valkyria'']{\includegraphics[width=0.25\textwidth]{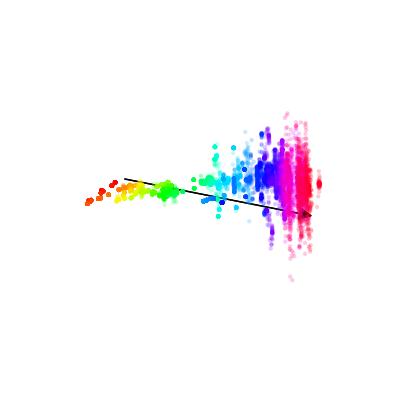}}
   \subfloat[``Valkyria'', with feedback]{\includegraphics[width=0.25\textwidth]{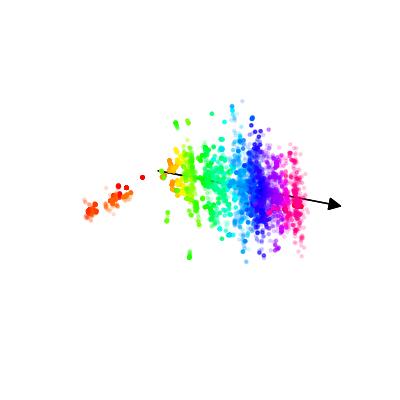}}
   \subfloat[``Valkyria'', with feedback]{\includegraphics[width=0.25\textwidth]{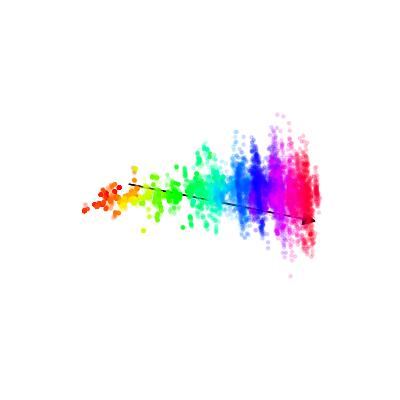}}
   \caption{\textbf{The LSTM mechanism for learning a language model.} Bifurcation diagram for the \textit{LSTM world-level language model}. For each epoch, the plot shows values visited by the projections of the internal state $p(\mathbf{x}_t)$ and its first difference $p(\mathbf{x}_t) - p(\mathbf{x}_{t-1})$  after a burnout period of 1500 samples. This burnout period  is used to remove the transient response and yields a visualization of the system attractors, per epoch. The arrow point towards the evolution of the number of epochs, that varies from 0 to 150. In (a) and (b), we have two different realizations of the bifurcation diagram obtained from constant inputs. In (c) and (d), the diagram is generated using as input the word predicted with the highest probability at the previous time instant, and using as first input to the sequence the same input as in (a) and (b), respectively. The subplots (a) to (d) use the average of internal states as projections, i.e. $p(\mathbf{x}_t) = \bar{x}_t$.    The second row, (e) to (h), and third row, (i) to (l),  show the exact same experiments but for the projections in the direction of the tokens ``is'' and ``Valkyria'', respectively.}
  \label{fig:wikitext2-lstm-orbit-diagram}
\end{figure*}

\begin{figure*}[htp]
  \centering
  \subfloat[average]{\includegraphics[width=0.25\textwidth]{./img/word-level-lm/slstm-lm-1}}
  \subfloat[``is'']{\includegraphics[width=0.25\textwidth]{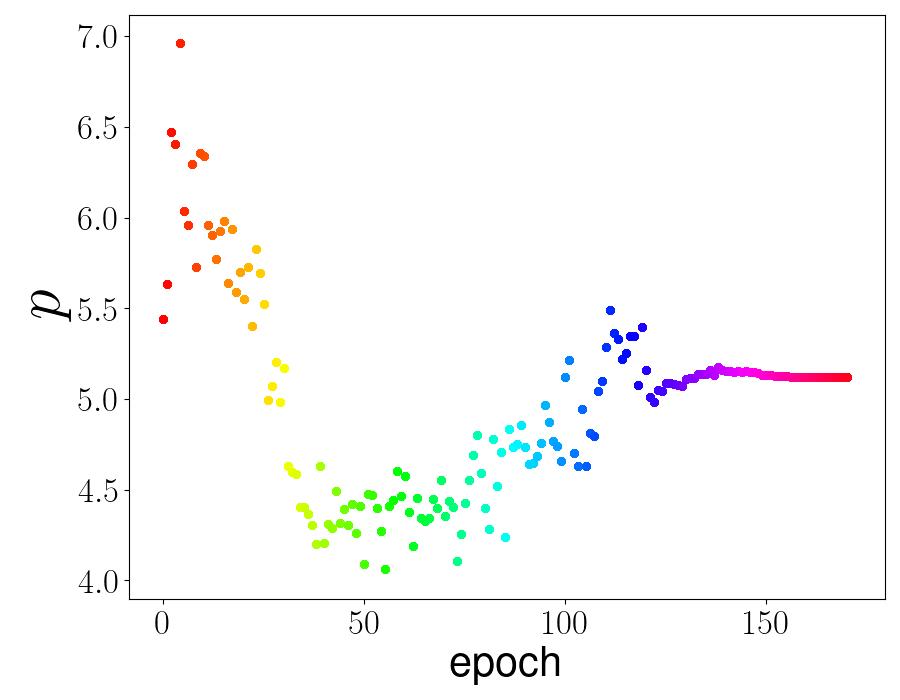}}
  \subfloat[``Valkyria'']{\includegraphics[width=0.25\textwidth]{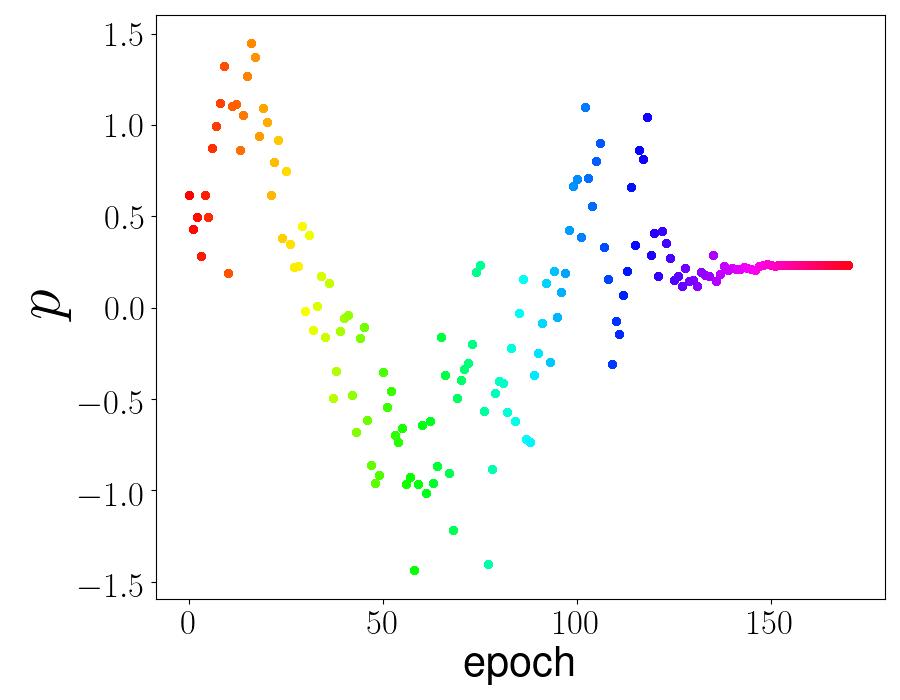}}
  \subfloat[``<unk>'']{\includegraphics[width=0.25\textwidth]{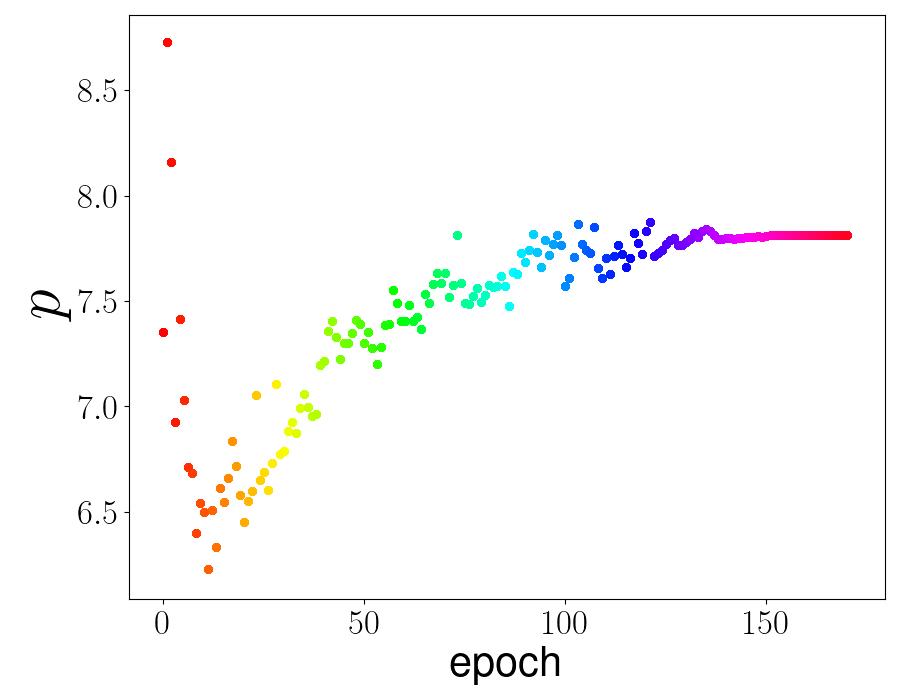}}\\
  \subfloat[average, with feedback]{\includegraphics[width=0.25\textwidth]{./img/word-level-lm/slstm-lm-5}}
  \subfloat[``is'', with feedback]{\includegraphics[width=0.25\textwidth]{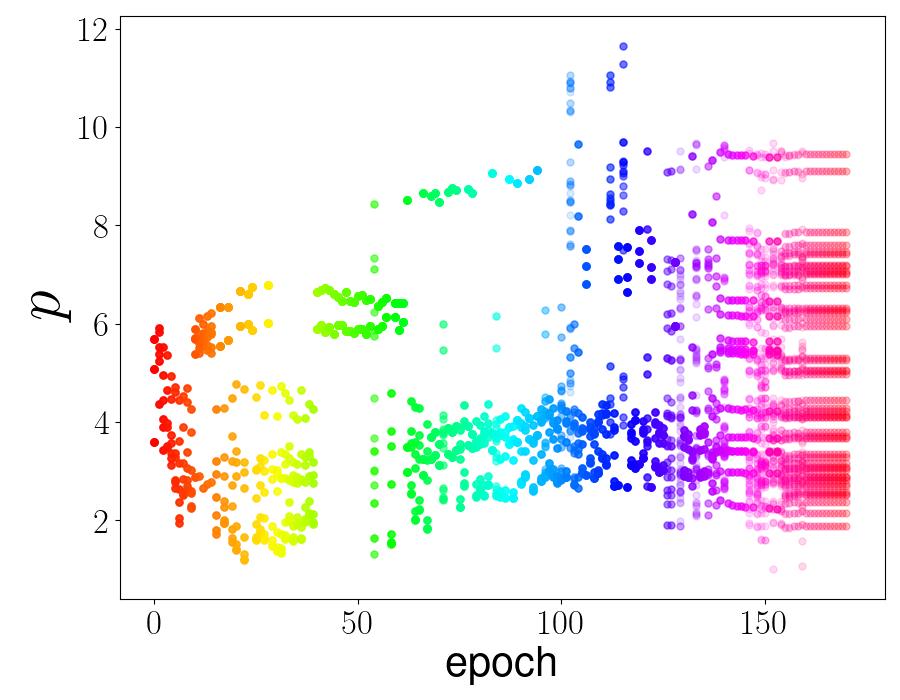}}
  \subfloat[``Valkyria'', with feedback]{\includegraphics[width=0.25\textwidth]{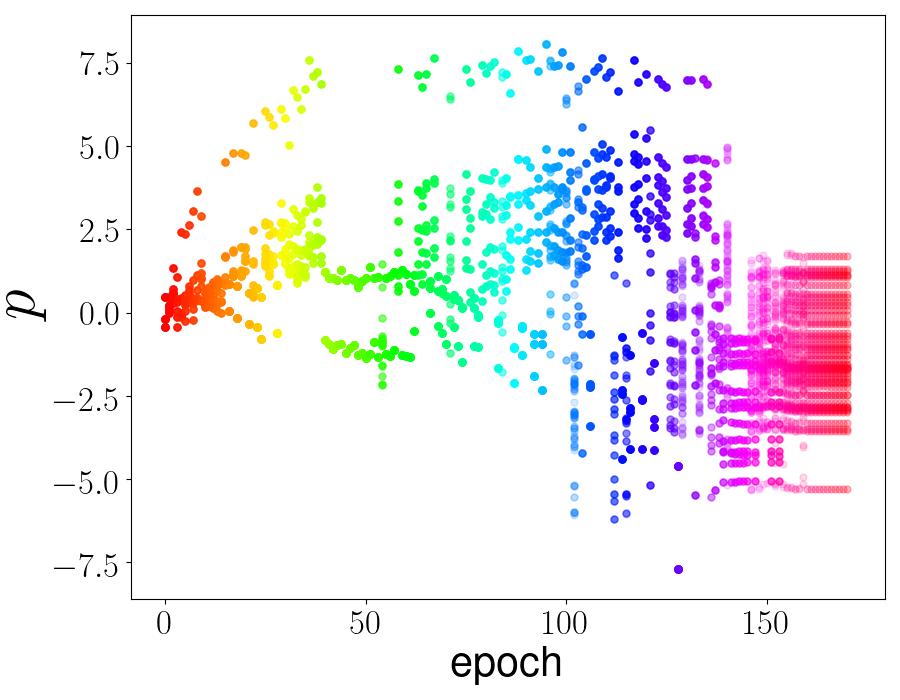}}
  \subfloat[``<unk>'', with feedback]{\includegraphics[width=0.25\textwidth]{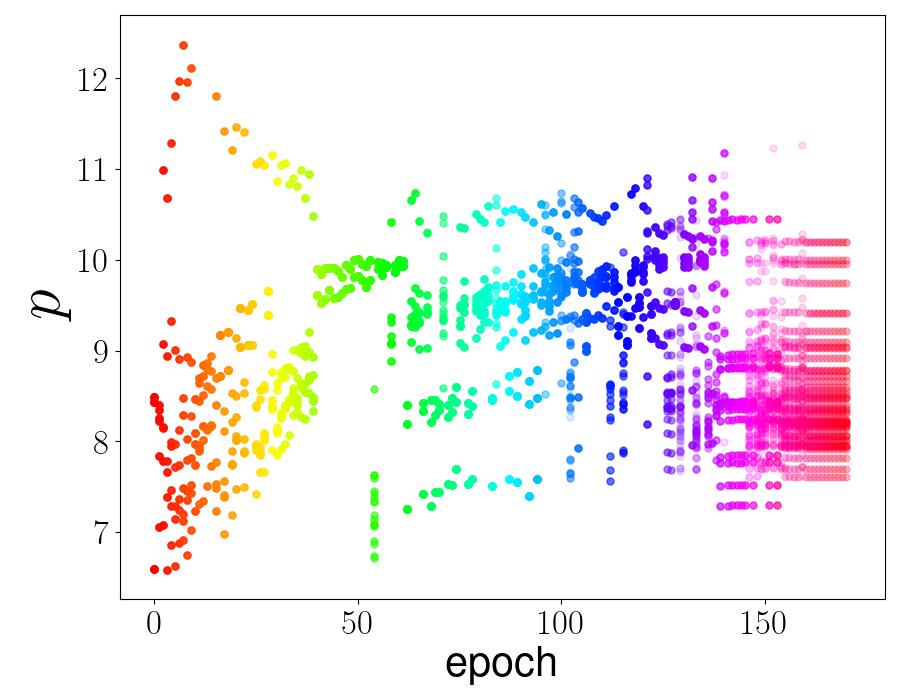}}
  \caption{\textbf{The sLSTM mechanism for learning a language model.} Bifurcation diagram for the \textit{stable LSTM world-level language model}. For each epoch, the plot shows values visited by the projections of the internal state $p(\mathbf{x}_t)$ after a burnout period of 1500 samples.  This burnout period  is used to remove the transient response and yields a visualization of the system attractors, per epoch. In the displays (a) to (d), the diagram is  generated for the same constant input. In (e) to (h), the diagram is generated using as input the word predicted with the highest probability at the previous time instant, and using as first input to the sequence the same input as in the first row. The projections are: the average of internal states as projections, i.e. $p(\mathbf{x}_t) = \bar{x}_t$; and,  projections into the direction of the tokens ``is'', ``Valkyria'' and ``<unk>''.}
  \label{fig:wikitext2-slstm-orbit-diagram}
\end{figure*}

\begin{figure*}[htp]
  \centering
  \subfloat[average]{\includegraphics[width=0.25\textwidth]{./img/word-level-lm/ornn-lm-1}}
  \subfloat[``is'']{\includegraphics[width=0.25\textwidth]{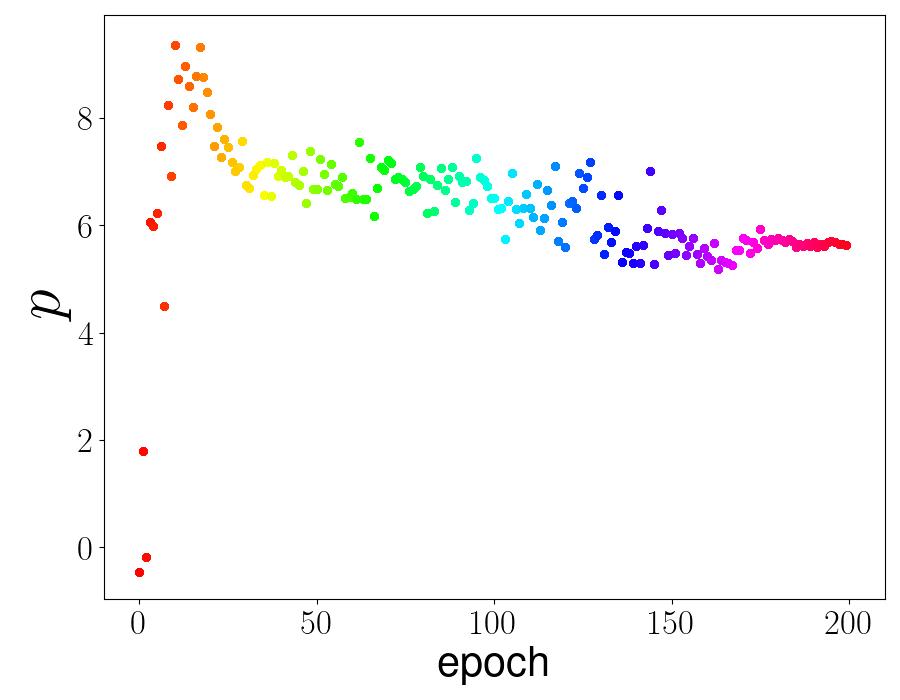}}
  \subfloat[``Valkyria'']{\includegraphics[width=0.25\textwidth]{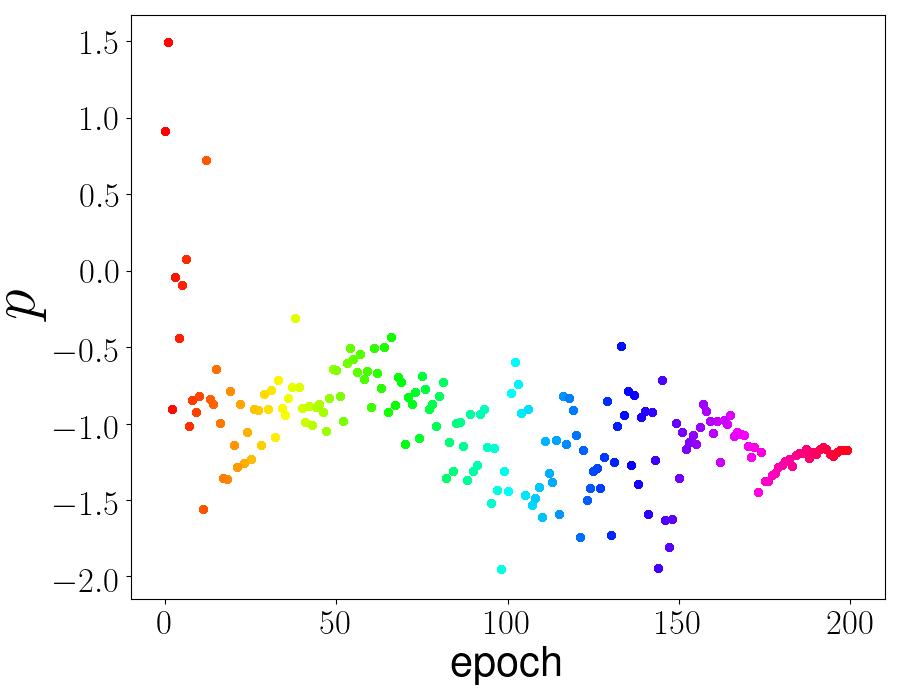}}
  \subfloat[``<unk>'']{\includegraphics[width=0.25\textwidth]{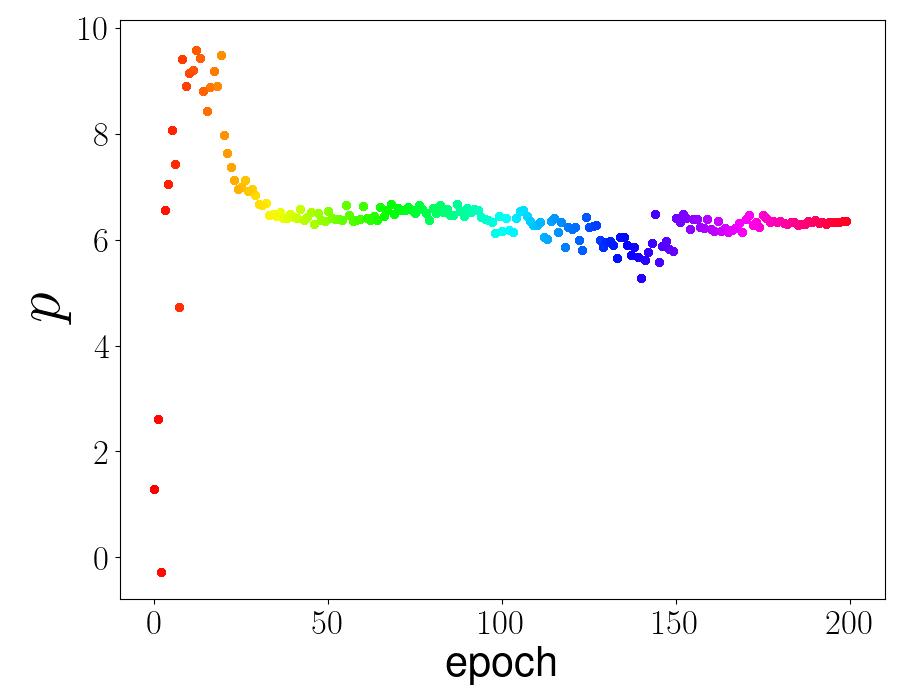}}\\
  \subfloat[average, with feedback]{\includegraphics[width=0.25\textwidth]{./img/word-level-lm/ornn-lm-5}}
  \subfloat[``is'', with feedback]{\includegraphics[width=0.25\textwidth]{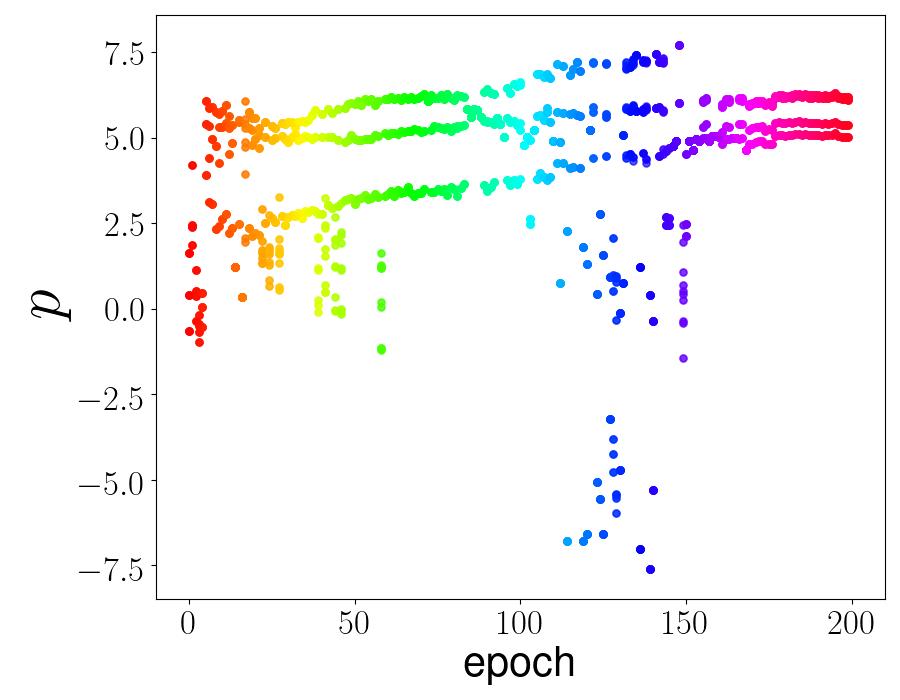}}
  \subfloat[``Valkyria'', with feedback]{\includegraphics[width=0.25\textwidth]{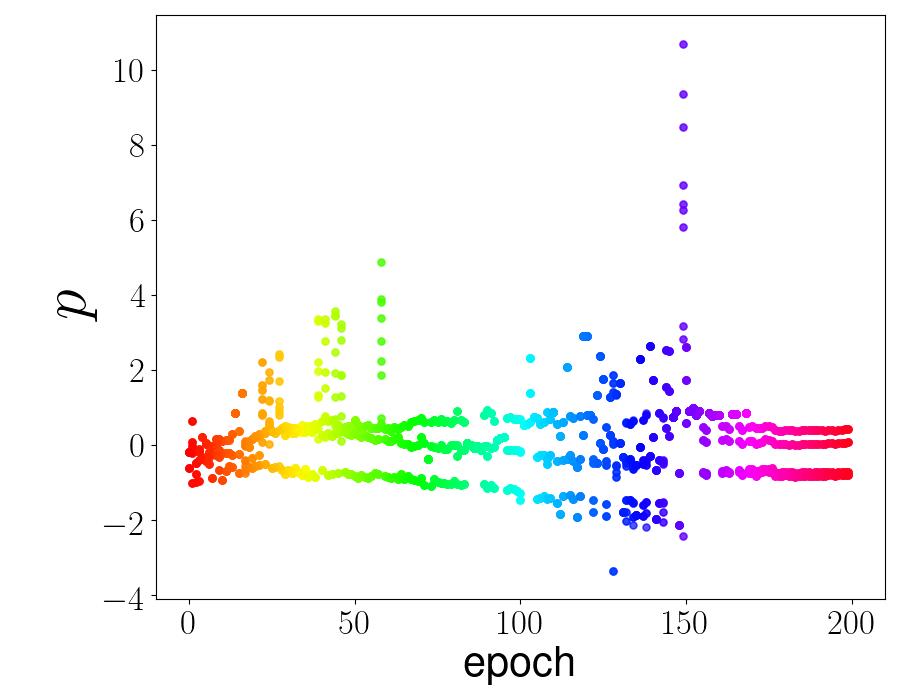}}
  \subfloat[``<unk>'', with feedback]{\includegraphics[width=0.25\textwidth]{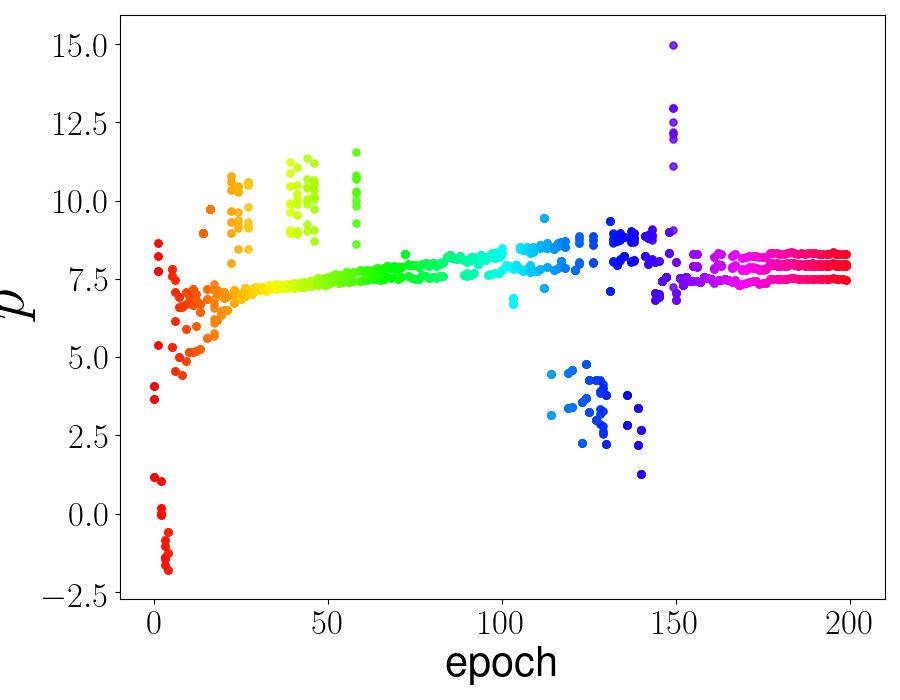}}
  \caption{\textbf{The oRNN mechanism for learning a language model.} Bifurcation diagram for the \textit{orthogonal RNN world-level language model}. For each epoch, the plot shows values visited by the projections of the internal state $p(\mathbf{x}_t)$ after a burnout period of 1500 samples.  This burnout period  is used to remove the transient response and yields a visualization of the system attractors, per epoch. In the displays (a) to (d), the diagram is  generated for the same constant input. In (e) to (h), the diagram is generated using as input the word predicted with the highest probability at the previous time instant, and using as first input to the sequence the same input as in the first row. The projections are: the average of internal states as projections, i.e. $p(\mathbf{x}_t) = \bar{x}_t$; and,  projections into the direction of the tokens ``is'', ``Valkyria'' and ``<unk>''.}
  \label{fig:wikitext2-ornn-orbit-diagram}
\end{figure*}

\end{document}